\documentclass[10pt]{article}
\usepackage[preprint]{rlj}
\usepackage{amssymb}
\usepackage{mathtools}
\usepackage{graphicx}
\usepackage{booktabs}
\usepackage{algorithm,algpseudocode}
\usepackage{amsthm}
\usepackage{amsmath}
\usepackage{enumitem}
\newtheorem{theorem}{Theorem}[section]
\newtheorem{lemma}[theorem]{Lemma}
\newtheorem{corollary}[theorem]{Corollary}
\newtheorem{proposition}[theorem]{Proposition}
\newtheorem{definition}[theorem]{Definition}
\newtheorem{assumption}[theorem]{Assumption}
\newtheorem{remark}[theorem]{Remark}
\usepackage{xcolor}

\newcommand{\AlgBorda}{\texttt{SA-MiDEX}}
\usepackage{dsfont}
\newcommand{\ind}{\mathds{1}}

\title{
Best-of-Both-Worlds Multi-Dueling Bandits: Unified Algorithms for Stochastic and Adversarial Preferences under Condorcet and Borda Objectives}
\setrunningtitle{Best-of-Both-Worlds Multi-Dueling Bandits}

\author{S Akash\textsuperscript{1}, Pratik Gajane\textsuperscript{2}, Jawar Singh\textsuperscript{1}}
\emails{}
\affiliations{
$^{1}$\textbf{Indian Institute of Technology Patna, India}\\
$^{2}$\textbf{University of Orléans, France}
}

\contribution{
    A black-box reduction (\texttt{MetaDueling}) from multi-dueling bandits to 
    dueling bandits that preserves expected pseudo-regret under the Condorcet 
    winner assumption.
}{
    The reduction builds on ideas from \citet{gajane2024adversarial} 
    and \citet{saha2018battle}.
}

\contribution{
    The first best-of-both-worlds algorithm for multi-dueling bandits in the 
    Condorcet setting, achieving $O(\sqrt{KT})$ adversarial pseudo-regret and 
    $O\!\left(\sum_{i \neq a^\star} \frac{\log T}{\Delta_i}\right)$ 
    instance-optimal pseudo-regret in stochastic environments, simultaneously 
    and without prior knowledge of the regime.
}{
    Obtained by combining our \texttt{MetaDueling} reduction with 
    \texttt{Versatile-DB} \citep{saha2022versatile}.
}

\contribution{
    A stochastic-and-adversarial algorithm (\AlgBorda) for 
    multi-dueling bandits with Borda objectives, achieving 
    $O\left(K^2 \log KT + K \log^2 T + \sum_{i: \Delta_i^{\mathrm{B}} > 0} \frac{K\log KT}{(\Delta_i^{\mathrm{B}})^2}\right)$ 
    regret in stochastic settings and $O\left(K \sqrt{T \log KT} + K^{1/3} T^{2/3} (\log K)^{1/3}\right)$ regret 
    against adversaries.
}{
    Adapts the stochastic-and-adversarial paradigm to multi-dueling bandits using 
    Successive Elimination for Borda estimation and an EXP3-style fallback \citep{gajane2024adversarial}.
}
\contribution{
Lower bounds establishing the optimality of our Condorcet guarantees in both regimes, and confirming that our Borda results are near-optimal (within a factor of $K$).
}{
 Derived via reduction from known dueling bandit lower bounds, and consistent with the established $\Omega(K^{1/3}T^{2/3})$ baseline for adv Borda objectives.
}

\keywords{Dueling bandits, best-of-both-worlds, Condorcet winner, Borda winner}

\summary{
\noindent
Multi-dueling bandits, where a learner selects $m \geq 2$ arms per round and observes only the winner, arise naturally in many applications including ranking and recommendation systems, yet a fundamental question has remained open: can a single algorithm perform optimally in both stochastic and adversarial environments, without knowing which regime it faces? We answer this affirmatively, providing the first best-of-both-worlds algorithms for multi-dueling bandits under both Condorcet and Borda objectives. For the Condorcet setting, we propose \texttt{MetaDueling}, a black-box 
reduction that converts any dueling bandit algorithm into a multi-dueling 
bandit algorithm by transforming multi-way winner feedback into an unbiased 
pairwise signal. Instantiating our reduction with \texttt{Versatile-DB} yields the first best-of-both-worlds algorithm for multi-dueling bandits: it achieves $O(\sqrt{KT})$ pseudo-regret against adversarial preferences and the 
instance-optimal $O\!\left(\sum_{i \neq a^\star} \frac{\log T}{\Delta_i}\right)$  pseudo-regret under stochastic preferences, both simultaneously and without prior knowledge of the regime. For the Borda setting, we propose \AlgBorda, a stochastic-and-adversarial algorithm that achieves $O\left(K^2 \log KT + K \log^2 T + \sum_{i: \Delta_i^{\mathrm{B}} > 0} \frac{K\log KT}{(\Delta_i^{\mathrm{B}})^2}\right)$ regret in stochastic environments and  $O\left(K \sqrt{T \log KT} + K^{1/3} T^{2/3} (\log K)^{1/3}\right)$ regret against  adversaries, without prior knowledge of the regime. We complement our upper bounds with matching lower bounds for the Condorcet setting. For the Borda setting, our upper bounds are near-optimal with respect to the lower bounds (within a factor of $K$) and match the best-known results in the literature.
}

\begin{document}

\makeCover

\maketitle
\begin{abstract}
Multi-dueling bandits, where a learner selects $m \geq 2$ arms per round and observes only the winner, arise naturally in many applications including ranking and recommendation systems, yet a fundamental question has remained open: can a single algorithm perform optimally in both stochastic and adversarial environments, without knowing which regime it faces? We answer this affirmatively, providing the first best-of-both-worlds algorithms for multi-dueling bandits under both Condorcet and Borda objectives. For the Condorcet setting, we propose \texttt{MetaDueling}, a black-box 
reduction that converts any dueling bandit algorithm into a multi-dueling 
bandit algorithm by transforming multi-way winner feedback into an unbiased 
pairwise signal. Instantiating our reduction with \texttt{Versatile-DB} yields 
the first best-of-both-worlds algorithm for multi-dueling bandits: it achieves 
$O(\sqrt{KT})$ pseudo-regret against adversarial preferences and the 
instance-optimal $O\!\left(\sum_{i \neq a^\star} \frac{\log T}{\Delta_i}\right)$ 
pseudo-regret under stochastic preferences, both simultaneously and without 
prior knowledge of the regime. For the Borda setting, we propose \AlgBorda, a stochastic-and-adversarial  algorithm that achieves $O\left(K^2 \log KT + K \log^2 T + \sum_{i: \Delta_i^{\mathrm{B}} > 0} \frac{K\log KT}{(\Delta_i^{\mathrm{B}})^2}\right)$ regret in stochastic environments and  $O\left(K \sqrt{T \log KT} + K^{1/3} T^{2/3} (\log K)^{1/3}\right)$ regret against  adversaries, again without prior knowledge of the regime.
We complement our upper bounds with matching lower bounds for the Condorcet setting. For the Borda setting, our upper bounds are near-optimal with respect to the lower bounds (within a factor of $K$) and match the best-known results in the literature.
\end{abstract}

\section{Introduction}
\label{sec:intro}

Multi-armed bandits with preference feedback have emerged as a powerful framework for applications where absolute numerical rewards are difficult to obtain, but relative comparisons are natural, such as information retrieval \citep{yue2009interactively} and recommendation systems. In the classical \emph{dueling bandits} setting, a learner selects two arms per round and observes which is preferred. Recently, \emph{multi-dueling bandits} \citep{saha2018battle,brost2016multi} have extended this framework to settings where $m \geq 2$ arms are compared simultaneously, with only the winner revealed (e.g., ranker evaluation based on user clicks).

A fundamental question in online learning is whether algorithms can achieve optimal performance in both stochastic and adversarial environments without prior knowledge of the regime. Such \emph{best-of-both-worlds} guarantees are established for multi-armed \citep{zimmert2022tsallis} and dueling bandits \citep{saha2022versatile}, but remain unexplored for multi-dueling bandits. Existing work addresses these regimes separately: \citet{saha2018battle} achieve $O(K\log T/\Delta)$ regret in stochastic multi-dueling bandits under the \emph{Condorcet} assumption, while \citet{gajane2024adversarial} achieve $O((K\log K)^{1/3}T^{2/3})$ regret in adversarial settings using \emph{Borda} scores.

\subsection{Contributions}
We provide the first best-of-both-worlds algorithms for multi-dueling bandits, addressing both Condorcet and Borda objectives:

\begin{enumerate}[leftmargin=*, itemsep=2pt]
    \item \textbf{Condorcet Setting: The \texttt{MetaDueling} Reduction.} We propose \texttt{MetaDueling} (Algorithm~\ref{alg:meta_dueling}), a black-box reduction from multi-dueling to dueling bandits. By constructing multisets containing only two distinct arms, we extract unbiased pairwise comparison information from multi-way winner feedback, as the bias from duplicate arms corresponds to an affine rescaling of the preference matrix that preserves the Condorcet winner and gap ordering (Lemma~\ref{lem:valid_rescaled}). Instantiating this with \texttt{Versatile-DB} \citep{saha2022versatile} yields the first best-of-both-worlds multi-dueling guarantee (Theorem~\ref{thm:main}): $O(\sqrt{KT})$ adversarial pseudo-regret and instance-optimal $O(\sum_{i \neq a^\star} \log T / \Delta_i)$ stochastic pseudo-regret, independently of the subset size $m$.
    
    \item \textbf{Borda Setting: The \AlgBorda \ Algorithm.} We propose \AlgBorda \ (Algorithm~\ref{alg:sao_midex}) for Borda winners. Since the Borda objective requires estimating performance against all opponents, \AlgBorda \ initially assumes stochastic preferences using successive elimination strategy to obtain unbiased estimates. It monitors for adversarial deviations via martingale concentration; upon detection, it irreversibly switches to adversarial mode. This yields $O\left(K^2 \log KT + K \log^2 T + \sum_{i: \Delta_i^{\mathrm{B}} > 0} \frac{K\log KT}{(\Delta_i^{\mathrm{B}})^2}\right)$ stochastic regret and $O\left(K \sqrt{T \log KT} + K^{1/3} T^{2/3} (\log K)^{1/3}\right)$ adversarial regret (Theorems~\ref{thm:stochastic_formal} and~\ref{thm:adversarial_formal}).
    
    \item \textbf{Lower Bounds.} We prove matching lower bounds of $\Omega(\sqrt{KT})$ for adversarial Condorcet regret and $\Omega(\sum_{i \neq a^\star} \log T / \Delta_i)$ for stochastic Condorcet regret (Theorems~\ref{thm:lower_adv} and~\ref{thm:lower_stoch}), establishing the optimality of our Condorcet guarantees. We also establish lower bounds of $\Omega(K^{1/3}T^{2/3})$ for adversarial regret and $\Omega\bigl(\sum_{i: \Delta_i^{\mathrm{B}} > 0} \frac{\log T}{(\Delta_i^{\mathrm{B}})^2}\bigr)$ for stochastic  regret under Borda objectives. 
\end{enumerate}

\subsection{Related work}
Multi-dueling bandits \citep{brost2016multi} generalize the standard dueling framework \citep{yue2009interactively} to simultaneous $m$-way comparisons. Under the challenging winner-only feedback model, \citet{saha2018battle} established gap-dependent $O(K \log T / \Delta)$ regret bounds for purely stochastic environments with Condorcet winners. More recently, \citet{gajane2024adversarial} studied the adversarial multi-dueling regime, achieving $O((K \log K)^{1/3} T^{2/3})$ regret under the Borda objective. However, neither of these approaches can automatically adapt when the nature of the environment (stochastic versus adversarial) is unknown.

The quest for algorithms that adapt seamlessly to either regime—termed \emph{best-of-both-worlds}—has seen significant breakthroughs via Tsallis entropy regularization \citep{zimmert2019optimal,zimmert2022tsallis}. \citet{saha2022versatile} successfully extended this machinery to standard dueling bandits ($m=2$), achieving optimal rates in both regimes simultaneously. Our \texttt{MetaDueling} reduction directly leverages their \texttt{Versatile-DB} algorithm to elevate these dual guarantees to the multi-dueling setting.

For the Borda setting, we instead draw upon the stochastic-and-adversarial paradigm introduced by \citet{pmlr-v49-auer16}. Rather than relying on regularization, this approach actively monitors empirical observations for adversarial deviations, permanently switching from a stochastic successive elimination strategy to a robust adversarial one upon detection. A comprehensive discussion of the broader literature, including alternative choice models and preference structures, is deferred to Appendix~\ref{app:related_work}.
\section{Preliminaries}
\label{sec:prelim}

We consider online learning over a finite set of $K$ arms $[K] := \{1,\ldots,K\}$ for $T$ rounds. At each round $t \in [T]$, the environment obliviously selects a preference matrix $P_t \in [0,1]^{K \times K}$ satisfying reciprocity ($P_t(i,j) + P_t(j,i) = 1$) and self-comparison ($P_t(i,i) = 1/2$) for all $i,j \in [K]$. The learner selects a multiset $\mathcal{A}_t \subseteq [K]$ of size $|\mathcal{A}_t| = m$ ($2 \leq m \leq K$) and observes a winning arm. In the \emph{stochastic setting}, preferences are stationary ($P_t = P$ for all $t$); in the \emph{adversarial setting}, the sequence $\{P_t\}_{t=1}^T$ is arbitrary.

\paragraph{Choice Model.} 
\label{def:pscm}
Following \citet{saha2018battle}, we assume a pairwise-subset choice model. Given a multiset $\mathcal{A}$ and preference matrix $P$, the probability that the arm at index $i \in [m]$ wins is determined by averaging pairwise preferences: $W(i \mid \mathcal{A}, P) := \sum_{j \neq i} \frac{2P(\mathcal{A}(i), \mathcal{A}(j))}{m(m-1)}$. 

\paragraph{Condorcet Setting.} 
The Condorcet framework assumes a globally dominant arm. 
\begin{assumption}[Condorcet Winner and Gap]\label{ass:condorcet}
We assume there exists a Condorcet winner $a^\star \in [K]$ such that $P_t(a^\star, i) \geq 1/2$ for all $i \in [K]$ and $t \in [T]$. We define the sub-optimality gap for arm $i$ at round $t$ as $\Delta_t(i) := P_t(a^\star, i) - 1/2$. In the stochastic setting, we write $\Delta(i) := \Delta_t(i)$ and let $\Delta_{\min} := \min_{i \neq a^\star} \Delta(i) > 0$.
\end{assumption}
The cumulative Condorcet pseudo-regret measures the average gap of selected arms: $R_T^{\mathrm{C}} := \sum_{t=1}^{T} \frac{1}{m} \sum_{j=1}^{m} \Delta_t(\mathcal{A}_t(j))$. Note that standard dueling bandit regret is simply the special case where $m=2$.

\paragraph{Borda Setting.} 
Alternatively, optimality can be measured by average pairwise performance. The Borda score of arm $i$ at round $t$ is the probability it beats an opponent selected uniformly at random: $b_t(i) := \frac{1}{K-1} \sum_{j \neq i} P_t(i,j)$.

Unlike a Condorcet winner, which is not guaranteed to exist (for instance, no Condorcet winner exists in the MSLR-WEB10k dataset \cite{Qin2010}), a Borda winner always exists. Moreover, when Condorcet and Borda winners differ, the Borda winner can better reflect the underlying preferences and is more robust to estimation errors \citep{pmlr-v38-jamieson15}. For the relationship between Condorcet setting and Borda setting, see Appendix \ref{app:prelim:relationship}.

\section{The MetaDueling Reduction}
\label{sec:algorithm}
The transition from pairwise to multi-way preferences introduces a significant structural challenge: most multi-dueling algorithms are designed for specific, narrow environments. This lack of modularity often prevents the direct application of well-optimized dueling bandit algorithms to multi-player settings, requiring new, complex strategies for every different scenarios.

We present \texttt{MetaDueling}, a black-box reduction that decouples the multi-way feedback mechanism from the core learning strategy. Our approach constructs the multiset $\mathcal{A}_t$ by populating its $m$ available slots with only two distinct arms, $i$ and $j$. By isolating these two arms, we collapse the multi-player interaction into a controlled pairwise signal, allowing multi-way winner probabilities to be mapped directly back to pairwise preferences without needing to model the internal dynamics of the selection process.

The primary advantage of \texttt{MetaDueling} is its inherent modularity. By treating the multi-dueling environment as a black-box interface, our framework can be seamlessly integrated with any standard dueling bandit learner—including stochastic, adversarial, or contextual variants. This plug-and-play architecture ensures that \texttt{MetaDueling} inherits the regret guarantees of the base learner while remaining extensible to any preference learning setting where multi-way feedback is available.

\subsection{Algorithm Description}
\label{sec:algorithm:desc}

The reduction operates as follows: at each round, the base dueling bandit 
algorithm proposes two arms $(x_t, y_t)$. We construct a multiset containing 
only these two arms, with counts $(n_x, n_y)$ summing to $m$. When $m$ is even, 
we set $n_x = n_y = m/2$. When $m$ is odd, we randomize: with probability $1/2$, 
we set $(n_x, n_y) = (\lceil m/2 \rceil, \lfloor m/2 \rfloor)$, and with 
probability $1/2$, we swap. This symmetrization ensures unbiased feedback 
in expectation.

\begin{algorithm}[t]
\caption{\texttt{MetaDueling}: Reduction from Multi-Dueling to Dueling Bandits}
\label{alg:meta_dueling}
\begin{algorithmic}[1]
\Require Base dueling bandit algorithm $\mathcal{D}$, horizon $T$, arms $K$, subset size $m$
\State Initialize base algorithm $\mathcal{D}$ (e.g., \texttt{Versatile-DB})
\For{$t = 1, 2, \ldots, T$}
    \State \textbf{// Query base algorithm for arm pair}
    \State Let $(q_t^{(1)}, q_t^{(2)})$ be the distributions maintained by $\mathcal{D}$
    \State Sample $x_t \sim q_t^{(1)}$ and $y_t \sim q_t^{(2)}$ independently
    \State \textbf{// Construct multiset with symmetrization}
    \State Sample $B_t \sim \mathrm{Bernoulli}(1/2)$
    \If{$m$ is even}
        \State $(n_x, n_y) \gets (m/2, m/2)$
    \ElsIf{$B_t = 1$}
        \State $(n_x, n_y) \gets (\lceil m/2 \rceil, \lfloor m/2 \rfloor)$
    \Else
        \State $(n_x, n_y) \gets (\lfloor m/2 \rfloor, \lceil m/2 \rceil)$
    \EndIf
    \State Construct $\mathcal{A}_t = \{\underbrace{x_t,\ldots,x_t}_{n_x},\, \underbrace{y_t,\ldots,y_t}_{n_y}\}$
    \State \textbf{// Play and observe feedback}
    \State Play $\mathcal{A}_t$ and observe winner index $I_t \in [m]$
    \State Set $o_t \gets \ind [\mathcal{A}_t(I_t) = x_t]$
    \State \textbf{// Feed outcome to base algorithm}
    \State Update $\mathcal{D}$ with outcome $o_t$ for pair $(x_t, y_t)$
\EndFor
\end{algorithmic}
\end{algorithm}

\subsection{Rescaled Preferences}
\label{sec:algorithm:rescaled}

The multiset construction introduces bias from ties between identical arms: 
when multiple copies of $x_t$ compete, some probability mass goes to 
$x_t$-vs-$x_t$ comparisons (which $x_t$ wins with probability $1/2$) rather 
than $x_t$-vs-$y_t$ comparisons. We show that this bias corresponds exactly 
to an affine rescaling of the preference matrix.

\begin{definition}[Rescaling Constants]
\label{def:rescaling}
Define the rescaling constants:
\begin{equation}
  \beta_m :=
  \begin{cases}
    \dfrac{m}{2(m-1)} & \text{if } m \text{ is even},\\[8pt]
    \dfrac{m+1}{2m} & \text{if } m \text{ is odd},
  \end{cases}
  \qquad
  \alpha_m := \frac{1 - \beta_m}{2}.
\end{equation}
\end{definition}

\begin{definition}[Rescaled Preference Matrix]
\label{def:rescaled_pref}
The rescaled preference matrix is:
\begin{equation}
\label{eq:rescaled_pref}
  \widehat{P}_t(i, j) := \alpha_m + \beta_m \, P_t(i, j).
\end{equation}
\end{definition}
The rescaled preference matrix $\widehat{P}_t$ is a valid preference matrix and preserves the Condorcet winner (see Appendix \ref{app:algorithm:validity} for the proof).

The following lemma shows that the observed outcome $o_t$ is an unbiased 
estimate of the rescaled preference, not the original preference.

\begin{lemma}[Unbiased Feedback]
\label{lem:feedback}
Under Algorithm~\ref{alg:meta_dueling}, the observed outcome satisfies:
\begin{equation*}
    \mathbb{E}[o_t \mid x_t, y_t] = \widehat{P}_t(x_t, y_t) = \alpha_m + \beta_m P_t(x_t, y_t).
\end{equation*}
\end{lemma}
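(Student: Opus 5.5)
The plan is a direct computation under the pairwise-subset choice model, conditioning first on $(x_t,y_t)$ and on the symmetrization coin $B_t$, and then averaging over $B_t$ (which the algorithm draws independently of the arms). Write $p := P_t(x_t,y_t)$ and assume first that $x_t \neq y_t$. Given counts $(n_x,n_y)$ with $n_x+n_y=m$, the event $\{o_t=1\}$ is the disjoint union, over the $n_x$ indices holding $x_t$, of the event that this index wins. By the choice model $W(i\mid\mathcal{A},P)$, a fixed $x_t$-copy faces $n_x-1$ other $x_t$-copies, each contributing $P_t(x_t,x_t)=1/2$, and $n_y$ copies of $y_t$, each contributing $p$. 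Summing over the $n_x$ copies gives
\begin{equation*}
\Pr(o_t=1\mid x_t,y_t,n_x,n_y)=\frac{n_x(n_x-1)}{m(m-1)}+\frac{2n_xn_y}{m(m-1)}\,p .
\end{equation*}
So the conditional expectation is already affine in $p$. It remains to identify the intercept with $\alpha_m$ and the slope with $\beta_m$ after averaging over $B_t$.

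\textbf{Even $m$.} Here $n_x=n_y=m/2$ deterministically. The slope is $2(m/2)^2/(m(m-1))=m/(2(m-1))=\beta_m$. The intercept is $(m/2)(m/2-1)/(m(m-1))=(m-2)/(4(m-1))$, and one checks directly that this equals $(1-\beta_m)/2=\alpha_m$.

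\textbf{Odd $m$.} The two equally likely configurations are $n_x=(m+1)/2$ and $n_x=(m-1)/2$. In both, $n_xn_y=(m^2-1)/4$, so the slope does not depend on $B_t$ and equals $(m^2-1)/(2m(m-1))=(m+1)/(2m)=\beta_m$. Averaging the intercepts, $\tfrac12\bigl[\tfrac{(m+1)(m-1)}{4}+\tfrac{(m-1)(m-3)}{4}\bigr]=(m-1)^2/4$. Dividing by $m(m-1)$ gives $(m-1)/(4m)$, which matches $(1-\beta_m)/2=\alpha_m$. This is exactly where the randomized swap is needed: without it the intercept would depend on which arm received the extra copy. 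The tower property over $B_t$ then yields $\mathbb{E}[o_t\mid x_t,y_t]=\alpha_m+\beta_m p=\widehat P_t(x_t,y_t)$.

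\textbf{Main obstacle: the degenerate case $x_t=y_t$.} Here every slot holds $x_t$, so as literally written $o_t=1$ deterministically. However, $\widehat P_t(x,x)=\alpha_m+\beta_m/2=1/2$ (this uses $2\alpha_m+\beta_m=1$). I would resolve this with the standard dueling-bandit convention: when $x_t=y_t$, the outcome fed to $\mathcal{D}$ is taken to be a fair coin, or equivalently $x_t$-copies and $y_t$-copies are treated as labelled slots with $I_t$ uniform by symmetry. Under either reading, the same formula with $p=1/2$ gives the value $1/2$, consistent with $\widehat P_t(x,x)$. Apart from stating this convention, everything else is routine algebra.
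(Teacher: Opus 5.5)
Your proof is correct and follows the paper's argument essentially line for line: you derive $\Pr[o_t=1\mid n_x,n_y]=\frac{n_x(n_x-1)}{m(m-1)}+\frac{2n_xn_y}{m(m-1)}P_t(x_t,y_t)$ from the choice model, then match intercept and slope to $\alpha_m,\beta_m$, directly for even $m$ and after averaging over $B_t$ for odd $m$. Your separate treatment of $x_t=y_t$ is more careful than the paper, whose computation silently relies on the labelled-slot reading; under the literal definition $o_t\equiv 1$ in that case, while $\widehat{P}_t(x,x)=1/2$.
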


\begin{proof}
See Appendix~\ref{app:feedback}.
\end{proof}

\begin{remark}[Implicit Rescaling]
\label{rem:rescaling}
The raw outcome $o_t \in \{0,1\}$ is \emph{not} an unbiased estimate of 
$P_t(x_t, y_t)$ due to the tie bias. However, $o_t$ \emph{is} an unbiased 
estimate of $\widehat{P}_t(x_t, y_t) \in [0,1]$. Since the rescaled preferences 
preserve all structural properties needed by the base algorithm 
(Lemma~\ref{lem:valid_rescaled}), we can feed $o_t$ directly to $\mathcal{D}$ 
without explicit transformation.
\end{remark}

\subsection{Gap Rescaling}
\label{sec:algorithm:gap}

The sub-optimality gaps are also rescaled by the factor $\beta_m$.

\begin{lemma}[Gap Rescaling]
\label{lem:gap_rescaling}
For any arm $i \in [K]$, the rescaled gap satisfies:
\begin{equation*}
  \widehat\Delta_t(i) := \widehat{P}_t(a^\star, i) - \tfrac{1}{2}
  = \beta_m \, \Delta_t(i).
\end{equation*}
\end{lemma}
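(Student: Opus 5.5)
The identity follows directly from Definition~\ref{def:rescaled_pref} and Definition~\ref{def:rescaling}. I will substitute $i$ and $a^\star$ into~\eqref{eq:rescaled_pref}, which gives $\widehat{P}_t(a^\star,i) = \alpha_m + \beta_m P_t(a^\star,i)$. Then I will write $P_t(a^\star,i) = \tfrac12 + \Delta_t(i)$, using the definition of the gap in Assumption~\ref{ass:condorcet}. This yields
\[
\widehat{P}_t(a^\star,i) - \tfrac12 = \alpha_m + \tfrac{\beta_m}{2} - \tfrac12 + \beta_m \Delta_t(i).
\]

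The key step is to show that the constant term vanishes. Since $\alpha_m = (1-\beta_m)/2$, we get
\[
\alpha_m + \tfrac{\beta_m}{2} = \tfrac{1-\beta_m}{2} + \tfrac{\beta_m}{2} = \tfrac12,
\]
so the constant term is exactly zero and $\widehat\Delta_t(i) = \beta_m \Delta_t(i)$. This holds for both the even and odd definitions of $\beta_m$, because the argument only uses the relation $\alpha_m = (1-\beta_m)/2$ and never the specific value of $\beta_m$. The same computation shows $\widehat P_t(i,i) = \tfrac12$, so the rescaled matrix is consistent with self-comparison.

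There is no real obstacle. The only point to check is that the gap $\widehat\Delta_t$ is defined relative to the same winner $a^\star$. That is legitimate because $\beta_m > 0$, so $\widehat{P}_t(a^\star,i) \ge \tfrac12$ exactly when $P_t(a^\star,i)\ge\tfrac12$, and $a^\star$ remains the Condorcet winner of $\widehat P_t$ (consistent with Lemma~\ref{lem:valid_rescaled}). In the stochastic case this gives $\widehat\Delta(i) = \beta_m\Delta(i)$ and $\widehat\Delta_{\min} = \beta_m \Delta_{\min} > 0$. Since $\beta_m \in (1/2, 1]$ for every $m \ge 2$, the gaps shrink by at most a factor of two.
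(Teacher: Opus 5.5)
Your proposal is correct and matches the paper's proof: both substitute the definition of $\widehat{P}_t$ and use only $\alpha_m = (1-\beta_m)/2$ to cancel the constant, giving $\beta_m\bigl(P_t(a^\star,i)-\tfrac12\bigr)$ for both parities of $m$. Your added remarks on Condorcet preservation and $\beta_m \in (1/2,1]$ are accurate but go beyond what the lemma requires.
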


\begin{lemma}[Bound on Rescaling Factor]
\label{lem:beta_bound}
For all $m \geq 2$, the rescaling factor satisfies $\beta_m \geq \frac{1}{2}$, 
with equality as $m \to \infty$.
\end{lemma}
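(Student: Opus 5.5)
The claim follows directly from Definition~\ref{def:rescaling}, so I would split into the two parity cases of $m$ and handle each by elementary algebra. For even $m$, $\beta_m = \frac{m}{2(m-1)}$, and $\beta_m \ge \frac12$ is equivalent to $m \ge m-1$, which always holds. For odd $m$, $\beta_m = \frac{m+1}{2m}$, and $\beta_m \ge \frac12$ is equivalent to $m+1 \ge m$, which also always holds. The inequality is in fact strict for every finite $m \ge 2$, and this strictness is useful downstream: it guarantees $\beta_m > 0$, so Lemma~\ref{lem:gap_rescaling} never collapses a positive gap to zero.

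For the limiting statement, I would rewrite each case in the form $\frac12$ plus a vanishing correction:
\begin{equation*}
\beta_m = \tfrac12 + \tfrac{1}{2(m-1)} \ \ (m \text{ even}), \qquad \beta_m = \tfrac12 + \tfrac{1}{2m} \ \ (m \text{ odd}).
\end{equation*}
Both corrections are positive and tend to $0$ as $m \to \infty$. Hence $\beta_m \downarrow \frac12$ along each parity subsequence, and therefore along the whole sequence. This also identifies $\frac12$ as the infimum of $\beta_m$ over $m \ge 2$.

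As a sanity check, $\beta_2 = 1$, which matches the standard dueling case where no rescaling occurs.

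There is no real obstacle here. The only point needing care is wording: "equality as $m \to \infty$" must be read as $\lim_{m\to\infty}\beta_m = \frac12$, i.e. equality is attained only in the limit and not at any finite $m$. I would state that explicitly in the proof.
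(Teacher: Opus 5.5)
Your proof is correct and matches the paper's argument: split by the parity of $m$, write $\beta_m$ as $\tfrac12$ plus a positive term ($\tfrac{1}{2(m-1)}$ or $\tfrac{1}{2m}$) that vanishes as $m\to\infty$, and read ``equality as $m\to\infty$'' as $\lim_{m\to\infty}\beta_m=\tfrac12$. One small aside: $\beta_m>0$ already follows from the non-strict bound $\beta_m\ge\tfrac12$, so strictness is not what gives positivity.
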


For proofs of Lemma \ref{lem:gap_rescaling} and Lemma \ref{lem:beta_bound}, see Appendix \ref{app:ProofsGapRescaling}.
\subsection{Regret Equivalence}
\label{sec:algorithm:regret}

The key result of this section establishes that the multi-dueling regret equals 
the dueling regret in rescaled preferences, up to the factor $1/\beta_m \leq 2$.

\begin{lemma}[Regret Equivalence]
\label{lem:regret_equiv}
For any sequence of preference matrices $P_1,\ldots,P_T$ satisfying 
Assumption~\ref{ass:condorcet}:
\begin{equation}
    \mathbb{E}[R_T^{\mathrm{C}}] = \frac{1}{\beta_m}\,\mathbb{E}[\widehat{R}_T^{\,\mathrm{DB}}],
\end{equation}
where $\widehat{R}_T^{\,\mathrm{DB}} := \sum_{t=1}^T \frac{1}{2}(\widehat\Delta_t(x_t) + \widehat\Delta_t(y_t))$
is the pseudo-regret of the base algorithm under $\widehat{P}_t$.
\end{lemma}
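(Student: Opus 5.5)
The plan is to prove the identity pathwise, round by round, and then take expectations. Fix any realization of the algorithm's randomness: the sampled pair $(x_t,y_t)$ and the symmetrization bit $B_t$. The multiset $\mathcal{A}_t$ contains $n_x$ copies of $x_t$ and $n_y$ copies of $y_t$, with $n_x+n_y=m$. Under the Condorcet pseudo-regret $R_T^{\mathrm{C}}$, the round-$t$ contribution is therefore
\[
\frac{1}{m}\sum_{j=1}^m \Delta_t(\mathcal{A}_t(j)) = \frac{n_x}{m}\Delta_t(x_t) + \frac{n_y}{m}\Delta_t(y_t).
\]

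\emph{Even $m$.} Here $n_x=n_y=m/2$, so this equals $\frac12(\Delta_t(x_t)+\Delta_t(y_t))$ deterministically.

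\emph{Odd $m$.} The weights are $(\lceil m/2\rceil/m,\lfloor m/2\rfloor/m)$ or their swap, each with probability $1/2$. The bit $B_t$ is drawn independently of $(x_t,y_t)$ and of the history. Conditioning on $(x_t,y_t)$ and the past, the conditional expectation of the round-$t$ contribution is therefore
\[
\tfrac12\cdot\frac{\lceil m/2\rceil+\lfloor m/2\rfloor}{m}\bigl(\Delta_t(x_t)+\Delta_t(y_t)\bigr) = \tfrac12\bigl(\Delta_t(x_t)+\Delta_t(y_t)\bigr).
\]

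In both cases, the tower property gives $\mathbb{E}[R_T^{\mathrm{C}}] = \mathbb{E}\bigl[\sum_t \tfrac12(\Delta_t(x_t)+\Delta_t(y_t))\bigr]$.

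Next, Lemma~\ref{lem:gap_rescaling} gives $\widehat\Delta_t(i)=\beta_m\Delta_t(i)$ for every $i$ and every $t$. Hence
\[
\tfrac12\bigl(\Delta_t(x_t)+\Delta_t(y_t)\bigr) = \frac{1}{\beta_m}\cdot\tfrac12\bigl(\widehat\Delta_t(x_t)+\widehat\Delta_t(y_t)\bigr)
\]
pointwise, using $\beta_m>0$ from Lemma~\ref{lem:beta_bound}. Summing over $t$ and using linearity of expectation yields $\mathbb{E}[R_T^{\mathrm{C}}]=\beta_m^{-1}\mathbb{E}[\widehat R_T^{\mathrm{DB}}]$.

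The main point of care is that the expectations on both sides are over the same process. The pairs $(x_t,y_t)$ are generated by the base algorithm $\mathcal{D}$ fed with outcomes $o_t$. By Lemma~\ref{lem:feedback}, these outcomes are distributed exactly as duels under $\widehat P_t$, so $\widehat R_T^{\mathrm{DB}}$ is precisely the pseudo-regret $\mathcal{D}$ incurs on the rescaled instance, and the identity is an equality of the same random quantity. For this we need the obliviousness of $P_t$ and the independence of $B_t$ from the history, which make the conditional-expectation step legitimate.
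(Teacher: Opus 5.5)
Your proof is correct and follows essentially the same route as the paper. The paper also computes the Condorcet instantaneous regret per round, showing it equals $\frac12(\Delta_t(x_t)+\Delta_t(y_t))$ deterministically for even $m$ and in conditional expectation over $B_t$ for odd $m$, then applies the gap rescaling $\widehat\Delta_t=\beta_m\Delta_t$ and the tower property; your closing remark that the base algorithm receives Bernoulli$(\widehat P_t(x_t,y_t))$ feedback is not needed for this identity, but it is what the paper relies on afterwards to invoke the \texttt{Versatile-DB} guarantees.
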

For the proof of Lemma \ref{lem:regret_equiv}, see Appendix \ref{app:RegEquivRescaledPref}.

\begin{remark}[Black-Box Nature of the Reduction]
\label{rem:blackbox} We only require that the base algorithm 
$\mathcal{D}$ accepts binary outcomes $o_t \in \{0,1\}$ and achieves bounded 
pseudo-regret on valid preference matrices. No modification to $\mathcal{D}$'s 
internals is needed. This modularity means that future improvements to dueling 
bandit algorithms automatically transfer to multi-dueling bandits via our reduction.
\end{remark}

\begin{remark}[Special Case: $m = 2$]
\label{rem:m_equals_2}
For $m = 2$, we have $\alpha_2 = 0$ and $\beta_2 = 1$, so $\widehat{P}_t = P_t$. 
The multiset $\mathcal{A}_t = \{x_t, y_t\}$ is a standard pair, and 
Algorithm~\ref{alg:meta_dueling} reduces exactly to the base algorithm 
$\mathcal{D}$.
\end{remark}

\section{Main Results: Condorcet Setting}
\label{sec:condorcet_results}
While previous literature has addressed stochastic and adversarial multi-dueling environments in isolation, we demonstrate that feeding our rescaled pairwise signals into a best-of-both-worlds base algorithm seamlessly bridges this divide. This approach eliminates the need to manually adapt to environmental shifts, as Alg. \ref{alg:meta_dueling} inherits the robust properties of the underlying learner.

By instantiating \texttt{MetaDueling} with \texttt{Versatile-DB} \citep{saha2022versatile}, we obtain the first unified guarantees for multi-dueling bandits: optimal adversarial pseudo-regret and instance-optimal stochastic pseudo-regret, achieved simultaneously and without prior knowledge of the regime. This reduction provides a rigorous theoretical foundation for multi-way feedback that remains valid across settings.

The strength of Theorem~\ref{thm:main} lies in the rigorous mapping between multi-way winner probabilities and the rescaled pairwise preference matrix $\widehat{P}_t$. By proving that the symmetrized multiset construction in \texttt{MetaDueling} yields an unbiased pairwise signal, we ensure that the base learner $\mathcal{D}$ operates on a valid dueling bandit instance.  The core of the proof establishes a strict linear regret equivalence $1/\beta_m$, which allows the $O(\sqrt{KT})$ adversarial and $O(\sum \log T / \Delta_i)$ stochastic guarantees of \texttt{Versatile-DB} to be transferred exactly to the multi-dueling setting. This transformation demonstrates that the complexity of the $m$-way interaction is captured by a universal constant factor ($1/\beta_m \leq 2$), preserving the optimality of the underlying dueling bandit algorithm regardless of the number of arms compared simultaneously.

\subsection{Best-of-Both-Worlds Guarantee}
\label{sec:condorcet:main}

\begin{theorem}[Main Result: Best-of-Both-Worlds for Condorcet Multi-Dueling]
\label{thm:main}
Under Assumption~\ref{ass:condorcet}, \texttt{MetaDueling} 
(Algorithm~\ref{alg:meta_dueling}) instantiated with \texttt{Versatile-DB} 
satisfies the following simultaneously for any $T \geq 1$:

\textbf{(i) Adversarial pseudo-regret bound.} For any oblivious sequence of 
preference matrices $P_1,\ldots,P_T$:
\begin{equation}
\label{eq:adv_bound}
  \mathbb{E}[R_T^{\mathrm{C}}]
  \leq \frac{1}{\beta_m}\left(4\sqrt{KT} + 1\right)
  \leq 8\sqrt{KT} + 2.
\end{equation}

\textbf{(ii) Bound for self-bounding preference matrices.} If the preference matrices 
satisfy the self-bounding condition
\begin{equation}
\label{eq:self_bound}
  \mathbb{E}[\widehat{R}_T^{\,\mathrm{DB}}]
  \geq
  \frac{1}{2}\,\mathbb{E}\!\left[\sum_{t=1}^{T}\sum_{i \neq a^\star}
  \bigl(p_{+1,t}(i) + p_{-1,t}(i)\bigr)\,\widehat\Delta(i)\right],
\end{equation}
where $p_{+1,t}$ and $p_{-1,t}$ are the arm distributions of the two players 
in \texttt{Versatile-DB}, then:
\begin{equation}
\label{eq:stoch_bound}
  \mathbb{E}[R_T^{\mathrm{C}}] \leq \frac{1}{\beta_m^2}
  \sum_{i \neq a^\star} \frac{16\log T + 48}{\Delta_i}
  + \frac{4\log T + 12}{\beta_m} + \frac{8\sqrt{K}}{\beta_m}.
\end{equation}

Both bounds hold simultaneously without prior knowledge of the regime.
\end{theorem}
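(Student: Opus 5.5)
The proof goes through the reduction chain already set up in Section~\ref{sec:algorithm}. By Lemma~\ref{lem:feedback}, each outcome $o_t$ fed to \texttt{Versatile-DB} is a Bernoulli draw with mean $\widehat{P}_t(x_t,y_t)$. By Lemma~\ref{lem:valid_rescaled} (Appendix~\ref{app:algorithm:validity}), each $\widehat{P}_t$ is a valid preference matrix with the same Condorcet winner $a^\star$. The base learner therefore faces an ordinary oblivious dueling bandit instance $\widehat{P}_1,\ldots,\widehat{P}_T$. One point must be checked: $\widehat{P}_t$ depends only on $P_t$ and $m$, not on the learner's randomness, so obliviousness is preserved. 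Likewise, the internal coin $B_t$ is independent of everything and has been averaged out in Lemma~\ref{lem:feedback}. So conditioned on the history, the feedback law is exactly that of a dueling bandit with matrix $\widehat{P}_t$, and the guarantees of \texttt{Versatile-DB} \citep{saha2022versatile} apply verbatim to $\widehat{R}_T^{\,\mathrm{DB}}$. Lemma~\ref{lem:regret_equiv} then converts these into bounds on $\mathbb{E}[R_T^{\mathrm{C}}]$ at the cost of a factor $1/\beta_m$.

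\textbf{Part (i).} Invoke the adversarial bound of \texttt{Versatile-DB}, $\mathbb{E}[\widehat{R}_T^{\,\mathrm{DB}}]\le 4\sqrt{KT}+1$, which holds for any oblivious sequence with a Condorcet winner. Multiplying by $1/\beta_m$ via Lemma~\ref{lem:regret_equiv} gives the first inequality. Lemma~\ref{lem:beta_bound} gives $1/\beta_m\le 2$, which yields $8\sqrt{KT}+2$.

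\textbf{Part (ii).} The self-bounding condition \eqref{eq:self_bound} is exactly the hypothesis of the stochastic/self-bounding theorem of \texttt{Versatile-DB}, stated for the instance the base learner actually sees, namely with gaps $\widehat\Delta(i)$. That theorem gives a bound of the form
\[
\mathbb{E}[\widehat{R}_T^{\,\mathrm{DB}}]\le \sum_{i\ne a^\star}\frac{16\log T+48}{\widehat\Delta(i)}+4\log T+12+8\sqrt{K}.
\]
I would reproduce its constants from the Tsallis-INF self-bounding analysis \citep{zimmert2022tsallis} if needed. Lemma~\ref{lem:gap_rescaling} gives $\widehat\Delta(i)=\beta_m\Delta_i$, so the gap sum picks up a factor $1/\beta_m$. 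Applying Lemma~\ref{lem:regret_equiv} multiplies every term by another $1/\beta_m$. This gives $1/\beta_m^2$ on the gap term and $1/\beta_m$ on the additive terms, matching \eqref{eq:stoch_bound}.

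Simultaneity is immediate: the algorithm is a single fixed procedure with no regime-dependent parameter, and both bounds are properties of the same run.

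The main obstacle I expect is the legitimacy of the black-box invocation. \texttt{Versatile-DB}'s analysis uses unbiased importance-weighted loss estimates built from $o_t$. I must confirm that its proof uses only the conditional mean of $o_t$ given $(x_t,y_t)$ and the history, together with boundedness in $\{0,1\}$. It must not rely on, for example, $o_t$ being generated by a single comparison of $x_t$ against $y_t$. For the same reason, the self-bounding constant and the definition of $p_{\pm1,t}$ must refer to the rescaled instance. I would settle this by restating the needed conditional-expectation identity, $\mathbb{E}[o_t\mid\mathcal{F}_{t-1},x_t,y_t]=\widehat{P}_t(x_t,y_t)$, which follows from Lemma~\ref{lem:feedback} since $B_t$ and the winner draw are fresh, and then checking that it is the only feedback property their analysis uses.
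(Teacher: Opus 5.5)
Your proposal is correct and follows the paper's proof essentially step for step. Lemmas~\ref{lem:feedback} and~\ref{lem:valid_rescaled} make \texttt{Versatile-DB} face the valid instance $\widehat{P}_t$; its adversarial and self-bounding theorems are then invoked with gaps $\widehat\Delta_i=\beta_m\Delta_i$ (Lemma~\ref{lem:gap_rescaling}); and Lemma~\ref{lem:regret_equiv} with $1/\beta_m\le 2$ transfers both bounds. Your checks on obliviousness and on the conditional law of the binary outcome are sound additions that the paper leaves implicit.

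One caution on exactly that check, which applies equally to the paper. When $x_t=y_t$ (a positive-probability event every round, and the typical one once \texttt{Versatile-DB} concentrates on $a^\star$), the literal rule $o_t=\ind[\mathcal{A}_t(I_t)=x_t]$ returns $1$ deterministically instead of a mean-$\tfrac12$ bit. This biases both players' losses on the diagonal, so the identity $\mathbb{E}[o_t\mid\mathcal{F}_{t-1},x_t,y_t]=\widehat{P}_t(x_t,y_t)$ you want to certify holds only if $o_t$ is read as the block indicator $\ind[I_t\le n_x]$ (with the copies of $x_t$ in positions $1,\ldots,n_x$), which is exactly the quantity the proof of Lemma~\ref{lem:feedback} computes.
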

The proof of Theorem~\ref{thm:main} leverages the insight that the algorithm’s observed outcomes can be rescaled to form valid dueling bandit feedback, allowing the application of existing guarantees for the \texttt{Versatile-DB} algorithm. In the adversarial case, this directly yields sublinear pseudo-regret bounds, while in the self-bounding setting, the rescaling of gaps enables the translation of known gap-dependent guarantees to the Condorcet setting. This approach simultaneously harnesses prior work on dueling bandits and our novel rescaling technique, producing the first best-of-both-worlds bounds for multi-dueling bandits under Condorcet objectives.

For the complete proof of Theorem \ref{thm:main}, see Appendix \ref{app:ProofMainCondorcetThm}.

\subsection{Stochastic Condorcet Setting}
\label{sec:condorcet:stochastic}

In the stochastic setting with a fixed Condorcet winner, the self-bounding 
condition~\eqref{eq:self_bound} is automatically satisfied, yielding 
instance-optimal regret.

\begin{corollary}[Instance-Optimal Stochastic Regret]
\label{cor:stochastic}
In the stochastic setting with a fixed preference matrix $P_t = P$ for all 
$t \in [T]$ and Condorcet winner $a^\star$, the self-bounding 
condition~\eqref{eq:self_bound} holds with equality. Consequently:
\begin{equation}
\label{eq:stoch_instance_opt}
  \mathbb{E}[R_T^{\mathrm{C}}] \leq
  \sum_{i\neq a^\star} \frac{64\log T + 192}{\Delta_i}
  + 8\log T + 24 + 16\sqrt{K}
  = O\!\left(\sum_{i \neq a^\star} \frac{\log T}{\Delta_i}\right).
\end{equation}
\end{corollary}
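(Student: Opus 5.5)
The plan is to verify that the self-bounding condition~\eqref{eq:self_bound} holds with equality when $P_t = P$, and then specialize Theorem~\ref{thm:main}(ii) using Lemma~\ref{lem:beta_bound}.

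\textbf{Step 1: the self-bounding condition.} In the stochastic setting the gaps $\widehat\Delta_t(i)=\widehat\Delta(i)=\beta_m\Delta(i)$ do not depend on $t$ (Lemma~\ref{lem:gap_rescaling}), and $\widehat\Delta(a^\star)=0$. In \texttt{Versatile-DB}, conditioned on the history $\mathcal{F}_{t-1}$, the arms $x_t$ and $y_t$ are drawn independently from $p_{+1,t}$ and $p_{-1,t}$; these are the distributions $q_t^{(1)}, q_t^{(2)}$ in Algorithm~\ref{alg:meta_dueling}. By the tower rule,
\begin{equation*}
\mathbb{E}\bigl[\widehat\Delta(x_t)+\widehat\Delta(y_t)\bigr]
=\mathbb{E}\Bigl[\sum_{i\neq a^\star}\bigl(p_{+1,t}(i)+p_{-1,t}(i)\bigr)\widehat\Delta(i)\Bigr].
\end{equation*}
Summing over $t$ and multiplying by $1/2$ gives exactly the right-hand side of~\eqref{eq:self_bound}. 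So the condition holds with equality.

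\textbf{Step 2: plug in constants.} Apply~\eqref{eq:stoch_bound} with $\Delta_i=\Delta(i)$. Lemma~\ref{lem:beta_bound} gives $\beta_m\ge 1/2$, hence $1/\beta_m\le 2$ and $1/\beta_m^2\le 4$. Then
\begin{itemize}
\item the gap-dependent term is at most $4(16\log T+48)/\Delta_i = (64\log T+192)/\Delta_i$;
\item the additive term $(4\log T+12)/\beta_m$ is at most $8\log T+24$;
\item the term $8\sqrt K/\beta_m$ is at most $16\sqrt K$.
\end{itemize}
This is exactly~\eqref{eq:stoch_instance_opt}.

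\textbf{Step 3: the $O(\cdot)$ form.} Each $\Delta_i\le 1/2$, so $1/\Delta_i\ge 2$. Hence, for $K\ge2$, the additive $8\log T+24$ is absorbed by the sum. The $16\sqrt K$ term is at most $O(K)\le O(\sum_{i\neq a^\star}\log T/\Delta_i)$ once $\log T\ge 1$; the regime $\log T<1$ is trivial since there the regret is $O(1)$.

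The only real subtlety is Step 1. Its crux is confirming that the sampling distributions of \texttt{Versatile-DB} are exactly those used by \texttt{MetaDueling}. This needs two facts: the randomization $B_t$ is independent of the arm choice, and the gaps are deterministic. Given those, the identity is immediate.
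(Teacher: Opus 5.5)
Your proposal is correct and follows the paper's route. You verify the self-bounding condition~\eqref{eq:self_bound} in the stochastic case, invoke Theorem~\ref{thm:main}(ii), and bound $1/\beta_m \le 2$ and $1/\beta_m^2 \le 4$ via Lemma~\ref{lem:beta_bound}. Your tower-rule derivation of the equality (using $\widehat\Delta(a^\star)=0$ and the fact that only the marginals $p_{\pm 1,t}$ of $x_t,y_t$ enter) makes precise what the paper only asserts informally, and your Step~3 spells out the absorption of the lower-order terms into the $O(\cdot)$ form, which the paper leaves implicit.
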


\begin{proof}
In the stochastic setting, $\Delta_t(i) = \Delta(i)$ for all $t \in [T]$, and 
the Condorcet winner $a^\star$ is fixed. The self-bounding condition holds 
because pulling suboptimal arms directly contributes to regret proportional 
to their gaps.

The bound follows from Theorem~\ref{thm:main}(ii) with $1/\beta_m \leq 2$ and 
$1/\beta_m^2 \leq 4$.
\end{proof}

\begin{remark}[Comparison with Prior Work]
\label{rem:comparison}
\citet{saha2018battle} achieved $O(K\log T / \Delta_{\min})$ regret for 
stochastic multi-dueling bandits with Condorcet winners. Our bound 
$O(\sum_{i \neq a^\star} \log T / \Delta_i)$ is instance-optimal: it depends 
on individual gaps rather than just the minimum gap, providing tighter 
guarantees when gaps are heterogeneous. Moreover, our algorithm simultaneously 
achieves $O(\sqrt{KT})$ adversarial regret, which \citet{saha2018battle} do not address.
\end{remark}

\subsection{Lower Bounds}
\label{sec:condorcet:lower}

We establish matching lower bounds that demonstrate the optimality of our 
guarantees.

\begin{theorem}[Adversarial Lower Bound]
\label{thm:lower_adv}
For any multi-dueling bandit algorithm and any $T \geq K \geq 4$, there exists 
an adversarial instance satisfying Assumption~\ref{ass:condorcet} such that:
\begin{equation}
    \mathbb{E}[R_T^{\mathrm{C}}] \geq \Omega(\sqrt{KT}).
\end{equation}
\end{theorem}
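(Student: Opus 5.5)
We prove Theorem~\ref{thm:lower_adv} by a direct information-theoretic argument tailored to the multi-dueling feedback, modeled on the standard $\Omega(\sqrt{KT})$ bandit lower bound (and the dueling-bandit lower bound of \citet{saha2022versatile}). We build a family of \emph{stochastic} instances, which are special cases of oblivious adversarial ones, all satisfying Assumption~\ref{ass:condorcet}. The base instance $P^{0}$ has $P^{0}(i,j)=1/2$ for all $i,j$. Since $P^0$ has no strict winner, we instead perturb it: for each $k\in[K]$, let $P^{k}$ have $P^{k}(k,j)=1/2+\epsilon$ for $j\neq k$, with all other entries equal to $1/2$. Then $a^\star=k$ is a Condorcet winner of $P^k$, with $\Delta(i)=\epsilon$ for every $i\neq k$. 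Under $P^k$, any multiset $\mathcal{A}_t$ incurs regret $\epsilon\cdot(1-n_k(\mathcal{A}_t)/m)$, where $n_k$ is the multiplicity of $k$ in $\mathcal{A}_t$. Hence
\[
\mathbb{E}_k[R_T^{\mathrm{C}}]=\epsilon\bigl(T-\mathbb{E}_k[N_k]\bigr),\qquad N_k:=\sum_{t=1}^T n_k(\mathcal{A}_t)/m .
\]

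\textbf{KL computation.} Under $P^0$ the winner index is uniform on $[m]$ whatever the played set. Under $P^k$, by the choice model, each copy of $k$ wins with probability $\frac1m+\frac{2\epsilon(m-n_k)}{m(m-1)}$ and each other slot with probability $\frac1m-\frac{2\epsilon n_k}{m(m-1)}$. A $\chi^2$ bound on the KL divergence between these winner distributions gives a per-round divergence of order $\epsilon^2\, n_k(m-n_k)/m^2\lesssim \epsilon^2\, n_k/m$ for $\epsilon\le 1/4$. The chain rule over the $T$ rounds then yields $\mathrm{KL}(\mathbb{P}_0,\mathbb{P}_k)\le c\,\epsilon^2\,\mathbb{E}_0[N_k]$ for an absolute constant $c$. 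This is the step we expect to be the main obstacle: we must verify the bound uniformly over all multisets, including ones with heavy repetition, and make sure the normalization by $m$ is exactly what makes $\sum_k N_k=T$.

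\textbf{Averaging.} By Pinsker's inequality, and since $N_k\in[0,T]$,
\[
\mathbb{E}_k[N_k]\le \mathbb{E}_0[N_k]+T\sqrt{\tfrac{c}{2}\epsilon^2\,\mathbb{E}_0[N_k]}.
\]
Average over $k$, using $\sum_k\mathbb{E}_0[N_k]=T$ and Jensen's inequality:
\[
\frac1K\sum_k\mathbb{E}_k[N_k]\le \frac{T}{K}+T\epsilon\sqrt{cT/(2K)}.
\]
Therefore
\[
\max_k\mathbb{E}_k[R_T^{\mathrm{C}}]\ge \epsilon T\Bigl(1-\tfrac1K-\epsilon\sqrt{cT/(2K)}\Bigr).
\]
With $K\ge4$, choose $\epsilon=\tfrac14\sqrt{K/T}$ multiplied by a small constant. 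This choice satisfies $\epsilon\le 1/4$ because $T\ge K$, and it gives $\mathbb{E}[R_T^{\mathrm{C}}]=\Omega(\sqrt{KT})$.

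\textbf{Alternative route.} One could instead reduce to the known dueling-bandit lower bound: a multi-dueling algorithm can be simulated by a dueling learner only if $m$-way feedback is generated from pairwise duels, which is not generally possible. For that reason we prefer the direct argument above; the construction is independent of $m$, which matches the $m$-free upper bound of Theorem~\ref{thm:main}.
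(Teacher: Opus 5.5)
Your argument is correct, but it is not the route the paper takes. The paper proves Theorem~\ref{thm:lower_adv} by a reduction. From a multi-dueling learner it builds a dueling learner that draws two distinct slots $i_t,j_t$ of $\mathcal{A}_t$ uniformly at random, plays the duel $(\mathcal{A}_t(i_t),\mathcal{A}_t(j_t))$, and returns the winning slot. Averaging over the random pair gives $\mathbb{E}[R_T^{\mathrm{DB}}]=\mathbb{E}[R_T^{\mathrm{C}}]$, and an $\Omega(\sqrt{KT})$ adversarial dueling lower bound finishes the proof.

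Your closing remark that such a simulation is ``not generally possible'' is wrong for this choice model. The factor $\frac{2}{m(m-1)}$ is exactly the probability of drawing a given unordered pair of slots. Hence $W(i\mid\mathcal{A},P)=\sum_{j\neq i}\frac{2}{m(m-1)}P(\mathcal{A}(i),\mathcal{A}(j))$ is precisely the law of the winner of a duel between a uniformly random pair of slots. The simulated feedback is therefore exact; repeated arms just produce self-duels, won with probability $1/2$.

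As for what each route buys: the reduction is modular, and the paper reuses it verbatim for Theorem~\ref{thm:lower_stoch}. However, it takes the dueling lower bound as a black box, and the paper only sketches why the hard dueling instance carries a Condorcet winner. A single fixed winner such as $a^\star=1$ cannot be hard on its own.

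Your direct KL argument has several advantages:
\begin{itemize}
\item it is self-contained and gives explicit constants;
\item it hides the winner among $K$ candidates, which is exactly what that step requires;
\item for $m=2$ it is itself a proof of the dueling bound the paper invokes;
\item it makes the $m$-independence transparent.
\end{itemize}

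The step you flagged as the main obstacle is benign. All preferences among non-$k$ arms equal $1/2$, so under $P^k$ the winner law depends on the multiset only through $n_k$. Every slot wins with probability at least $(1-2\epsilon)/m$. The $\chi^2$ bound then gives a per-round divergence of at most $8\epsilon^2 n_k(m-n_k)/(m-1)^2\le 32\epsilon^2 n_k/m$ for $\epsilon\le 1/4$. This holds uniformly over all multisets, including $n_k\in\{0,m\}$ where it vanishes, so $c=32$ works.
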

\begin{proof}[Proof Sketch]
The proof of the adversarial lower bound proceeds via a reduction from dueling bandits. First, any multi-dueling bandit algorithm is converted into a standard dueling bandit algorithm by randomly selecting a pair of arms from the chosen subset and using the observed duel outcome to simulate multi-dueling feedback. This ensures that the expected regret of the constructed dueling algorithm matches that of the original multi-dueling algorithm. Next, the standard minimax lower bound for adversarial multi-armed bandits is applied, and via known reductions, this yields a hard adversarial instance with a best arm in hindsight. Finally, the reduction shows that if a multi-dueling algorithm could achieve lower regret, it would violate the dueling bandit lower bound, establishing the desired $\Omega(\sqrt{KT})$ bound for multi-dueling setting. 
\end{proof}

\begin{theorem}[Stochastic Lower Bound]
\label{thm:lower_stoch}
For any multi-dueling bandit algorithm, there exists a stochastic instance 
with Condorcet winner $a^\star$ such that:
\begin{equation}
    \mathbb{E}[R_T^{\mathrm{C}}] \geq \Omega\!\left(\sum_{i \neq a^\star} 
    \frac{\log T}{\Delta_i}\right).
\end{equation}
\end{theorem}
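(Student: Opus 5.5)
We prove Theorem~\ref{thm:lower_stoch} by reduction from the stochastic dueling bandit lower bound, in the same spirit as the proof of Theorem~\ref{thm:lower_adv}.

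\textbf{Step 1: the hard dueling instances.} We fix a stochastic dueling instance $P$ with Condorcet winner $a^\star$ and gaps $\Delta_i = P(a^\star,i)-1/2$. We then take a family of alternatives $P^{(i)}$, one for each $i\neq a^\star$. In $P^{(i)}$ we change only row and column $i$: arm $i$ beats $a^\star$, and every other arm, by a small margin, so $i$ becomes the Condorcet winner. The change is made so that the KL divergence between the two outcome distributions, for any comparison involving $i$, is $O(\Delta_i^2)$. This is the standard construction behind the $\Omega(\sum_{i\neq a^\star}\log T/\Delta_i)$ bound for consistent dueling algorithms (Komiyama et al.). As usual, the bound is stated for \emph{consistent} algorithms, i.e.\ those with $o(T^a)$ regret on every instance; we make this restriction explicit.

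\textbf{Step 2: simulation.} Given any multi-dueling algorithm $\mathcal{M}$, we build a dueling algorithm $\mathcal{M}'$ that plays against $P$.
\begin{enumerate}
\item When $\mathcal{M}$ proposes the multiset $\mathcal{A}_t$, the algorithm $\mathcal{M}'$ draws two slots $(I,J)$ uniformly at random without replacement.
\item It duels $(\mathcal{A}_t(I),\mathcal{A}_t(J))$ and observes the winner.
\item It reports that winner's slot to $\mathcal{M}$ as the winning index.
\end{enumerate}
Under the pairwise-subset choice model, slot $i$ wins with probability $\sum_{j\neq i} 2P(\mathcal{A}(i),\mathcal{A}(j))/(m(m-1))$. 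This is exactly the law of the simulated winner: pair $\{i,j\}$ is drawn with probability $2/(m(m-1))$, and slot $i$ then wins with probability $P(\mathcal{A}(i),\mathcal{A}(j))$. Hence $\mathcal{M}$ sees a feedback distribution identical to that of the true multi-dueling environment, including when $\mathcal{A}_t$ has repeated arms, since $P(a,a)=1/2$.

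\textbf{Step 3: transferring regret.} The dueling regret of $\mathcal{M}'$ at round $t$ is $\tfrac12(\Delta(\mathcal{A}_t(I))+\Delta(\mathcal{A}_t(J)))$. Each slot is marginally uniform over $[m]$, so its conditional expectation is $\tfrac1m\sum_j\Delta(\mathcal{A}_t(j))$. Summing over rounds gives $\mathbb{E}[R^{\mathrm{DB}}_T(\mathcal{M}')]=\mathbb{E}[R_T^{\mathrm{C}}(\mathcal{M})]$ on every instance. Consequently $\mathcal{M}'$ is consistent whenever $\mathcal{M}$ is, and the dueling lower bound applied to $\mathcal{M}'$ on $P$ yields $\mathbb{E}[R_T^{\mathrm{C}}(\mathcal{M})]\geq \Omega(\sum_{i\neq a^\star}\log T/\Delta_i)$.

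\textbf{Main obstacle.} The delicate point is that the dueling lower bound is asymptotic and holds only for consistent algorithms. The reduction must therefore preserve consistency uniformly across the perturbed family $\{P^{(i)}\}$, which Step 3 provides because the regret identity holds instance-wise. If a self-contained argument is preferred, we would instead run the change-of-measure argument directly on $\mathcal{M}$, using the divergence decomposition. A round-$t$ observation under $\mathcal{A}_t$ has KL between $P$ and $P^{(i)}$ bounded by $O(\Delta_i^2)\cdot(\#\text{copies of } i)/m$, since only pairs involving $i$ change. Each copy of $i$ costs regret $\Delta_i/m$, so the usual $\log T$ information requirement forces $\Omega(\log T/\Delta_i)$ regret from arm $i$. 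The divergence bound follows by convexity of KL applied to the mixture over pairs, because the winner distribution is an average of pairwise outcomes.
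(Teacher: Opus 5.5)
Your proposal follows the paper's proof essentially step for step. You use the same slot-sampling simulation of a multi-dueling algorithm by a dueling one, the same instance-wise regret identity from the marginal uniformity of each slot, and then the Komiyama et al.\ stochastic dueling lower bound with $\mathrm{KL}=\Theta(\Delta_i^2)$. You are more careful than the paper: you check that the simulated winner has exactly the law $W(i\mid\mathcal{A},P)$, which is what legitimizes the regret identity, and you state the consistency hypothesis explicitly. Do fix $P$ concretely, e.g.\ $P(j,k)=1/2$ for all $j,k\neq a^\star$, so that each $i$ is beaten only by $a^\star$ and the move to $P^{(i)}$ really costs $O(\Delta_i^2)$ in KL; for a general $P$ the Komiyama bound takes a minimum over all opponents beating $i$ and need not be of order $\log T/\Delta_i$.
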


\begin{proof}[Proof Sketch]
The proof follows the same reduction structure as Theorem~\ref{thm:lower_adv}, 
but applies the stochastic dueling bandit lower bound of \citet{komiyama2015regret}:
\[
    \liminf_{T \to \infty} \frac{\mathbb{E}[R_T^{\mathrm{DB}}]}{\log T}
    \geq \sum_{i \neq a^\star} \frac{1}{2\,\mathrm{KL}(P(a^\star,i) \| \tfrac{1}{2})}.
\]

For gaps $\Delta_i = P(a^\star, i) - \frac{1}{2}$, a Taylor expansion gives 
$\mathrm{KL}(P(a^\star,i) \| \frac{1}{2}) = \Theta(\Delta_i^2)$, yielding 
the $\Omega(\sum_{i \neq a^\star} \log T / \Delta_i)$ lower bound.

See Appendix~\ref{app:lower} for the complete proof.
\end{proof}

\begin{remark}[Optimality]
\label{rem:optimality}
The matching upper and lower bounds establish that our algorithm achieves 
optimal pseudo-regret in both adversarial and stochastic settings. The 
adversarial bound $\Theta(\sqrt{KT})$ matches the minimax rate for multi-armed 
bandits, and the stochastic bound $\Theta(\sum_{i \neq a^\star} \log T / \Delta_i)$ 
matches the instance-optimal rate for dueling bandits.
\end{remark}

\begin{remark}[Pseudo-Regret vs.\ High-Probability Bounds]
\label{rem:pseudo_regret}
Our bounds are in expectation (pseudo-regret). As shown in \citet{pmlr-v49-auer16}, no algorithm can simultaneously achieve 
optimal high-probability bounds in both stochastic and adversarial regimes. 
Our pseudo-regret formulation is therefore the natural target for 
best-of-both-worlds guarantees.
\end{remark}

\section{Borda Setting: The \AlgBorda \ Algorithm}
\label{sec:borda}

We now turn to multi-dueling bandits with Borda objectives. Unlike the 
Condorcet setting, the Borda objective requires estimating each arm's average 
performance against \emph{all} opponents, not just the specific arm it was 
compared against. 

We propose \AlgBorda \ (stochastic-and-Adversarial Multi-Dueling 
with EXploration), an algorithm based on the stochastic-and-adversarial 
paradigm of \citet{pmlr-v49-auer16}. The algorithm initially assumes a 
stochastic environment, monitors for adversarial deviations, and switches 
to a robust adversarial strategy upon detection.
 
The Borda setting introduces unique challenges: estimating each arm’s Borda score requires comparisons against all other arms, no Condorcet winner is guaranteed (so preferences may be cyclic or change over time), and multi-dueling feedback provides limited information for importance-weighted estimation. To address these, the approach combines decaying uniform opponent sampling with probability  $p_t = \min(1, \frac{K \log t}{t})$ to maintain unbiased Borda estimates, a two-phase exploration-exploitation scheme, martingale-based deviation detection to identify adversarial shifts, and an EXP3 fallback to ensure robust adversarial regret.

\subsection{Algorithm Description}
\label{sec:borda:algorithm}

The \AlgBorda\ algorithm operates in two modes with an irreversible mode switch, leveraging the \texttt{MetaDueling} reduction (Algorithm~\ref{alg:meta_dueling}) as a black box to convert multi-dueling feedback into pairwise comparisons.

\paragraph{Feedback Transformation.}
 The \texttt{MetaDueling} reduction returns a raw outcome $o_t \in \{0,1\}$ indicating whether arm $x_t$ won. To obtain an unbiased estimate of the true preference probability $P(x_t, y_t)$, we apply the affine transformation:
\begin{equation}
    g_t = \frac{o_t - \alpha_m}{\beta_m},
\end{equation}
where $\alpha_m$ and $\beta_m$ are the bias and scaling constants from Lemma~\ref{lem:feedback} that depend on the subset size $m$. This yields $\mathbb{E}[g_t \mid x_t, y_t] = P(x_t, y_t)$, enabling standard bandit estimation techniques despite the non-standard multi-dueling feedback model.

\paragraph{Stochastic Mode.}
The algorithm begins in stochastic mode, which operates via three mechanisms:

\textit{Stage 0: Baseline Exploration ($t \leq T_0$).}
For the first $T_0 = 4K(K-1)\lceil\log(2K^2T^2/\delta)\rceil$ rounds, the algorithm explores all ordered pairs via a round-robin schedule. This pure exploration phase is necessary to establish a high-confidence empirical baseline $\hat{P}_{T_0}(i,j)$ for the deviation detection mechanism.

\textit{Stage 1: Successive Elimination ($t > T_0$).}
The algorithm maintains an active set of candidate arms $\mathcal{C}$, initialized to $[K]$. In each round $t$, to guarantee unbiased Borda estimates, the algorithm selects $x_t \in \mathcal{C}$ (e.g., via round-robin over $\mathcal{C}$) and samples the opponent $y_t \sim \text{Uniform}([K] \setminus \{x_t\})$. Suboptimal arms are permanently eliminated from $\mathcal{C}$ once sufficient evidence accumulates. Specifically, arm $j \in \mathcal{C}$ is eliminated if there exists another arm $i \in \mathcal{C}$ such that $\hat{b}_t(i) - \text{conf}_t(i) > \hat{b}_t(j) + \text{conf}_t(j)$, where $\text{conf}_t(i) = \sqrt{\frac{20K\log(2K^2 T/\delta)}{N_t(i)}}$. Once $|\mathcal{C}| = 1$, letting $\mathcal{C} = \{i^\star\}$, the algorithm exploits by playing $x_t = y_t = i^\star$.

\textit{Stage 2: Background Monitoring ($t > T_0$).}
To ensure the martingale deviation detection functions correctly in case of adversarial shifts, the algorithm forces random exploration with probability $p_t = \min(1, \frac{K \log t}{t})$ by sampling $x_t, y_t \sim \text{Uniform}([K])$. These uniform samples feed directly into the deviation detection statistics.

\paragraph{Deviation Detection.}
Throughout exploitation, the algorithm monitors each pair $(i,j)$ for deviation from the learned model by tracking:
\begin{equation}
    D_t(i,j) := \left|S_t^{\text{obs}}(i,j) - S_{T_0}^{\text{obs}}(i,j) 
    - (N_t(i,j) - N_{T_0}(i,j)) \cdot \hat{P}_{T_0}(i,j)\right|.
\end{equation}
A deviation triggers when $D_t(i,j) > \tau(\sqrt{N_t(i,j) - N_{T_0}(i,j)} + 1)$ for threshold $\tau = \sqrt{8\log(K^2 T^2/\delta)}$.

\paragraph{Adversarial Mode.}
Upon detecting deviation, the algorithm irreversibly switches to EXP3 with explicit uniform exploration following \cite{gajane2024adversarial}. The sampling distribution mixes a Gibbs distribution over cumulative importance-weighted scores with uniform exploration:
\begin{equation}
    q_t(i) = (1-\gamma) \frac{\exp(\eta \tilde{S}_t(i))}{\sum_{j=1}^K \exp(\eta \tilde{S}_t(j))} + \frac{\gamma}{K},
\label{eq:exp3_mixture}    
\end{equation}
ensuring $q_t(i) \geq \gamma/K$ for bounded inverse probability weights. The importance-weighted Borda score update uses the transformed feedback $g_t$ and accounts for the sampling probabilities of both arms in the pair.

\begin{algorithm}[t]
\caption{\AlgBorda: Stochastic-and-Adversarial Multi-Dueling Bandits}
\label{alg:sao_midex}
\begin{algorithmic}[1]
\Require Arms $K$, horizon $T$, subset size $m$, confidence $\delta = 1/T^2$
\State \textbf{Initialize:} $\text{mode} \gets \textsc{Stoch}$, $\mathcal{C} \gets [K]$, $T_0 \gets 4K(K-1) \lceil \log(2K^2 T^2/\delta) \rceil$
\State \textbf{Init Stats:} $N, S^{\text{obs}}, \tilde{S} \gets 0$, $\hat{P} \gets 1/2$
\For{$t = 1, \ldots, T$}
    \If{$\text{mode} = \textsc{Stoch}$ \textbf{and} $t \leq T_0$} \Comment{Stage 0: Baseline Exploration}
        \State $(x_t, y_t) \gets$ \Call{RoundRobin}{$t$} 
    \ElsIf{$\text{mode} = \textsc{Stoch}$} \Comment{Stages 1 \& 2: Elimination \& Monitoring}
        \State Sample $u_t \sim \text{Uniform}(0,1)$
        \If{$u_t < \min(1, \frac{K \log t}{t})$}
            \State $(x_t, y_t) \sim \text{Uniform}([K] \times [K])$
        \ElsIf{$|\mathcal{C}| > 1$}
            \State Pick $x_t \in \mathcal{C}$ (round-robin), $y_t \sim \text{Uniform}([K] \setminus \{x_t\})$
        \Else
            \State $(x_t, y_t) \gets (i^\star, i^\star)$ where $\mathcal{C} = \{i^\star\}$
        \EndIf
    \Else
        \State $(x_t, y_t) \gets$ \Call{EXP3Select}{} \Comment{Sample from $q$ in Eq.~(\ref{eq:exp3_mixture})}
    \EndIf
    \State $o_t \gets \texttt{MetaDueling}(x_t, y_t, m)$; $g_t \gets (o_t - \alpha_m)/\beta_m$; Update $N, S^{\text{obs}}, \hat{b}$
    \State \textbf{if} $t = T_0$ \textbf{then} $\hat{P}_{T_0}(i,j) \gets S^{\text{obs}}(i,j) / N(i,j)$ for all $(i,j)$ \Comment{Lock baseline}
    \If{$\text{mode} = \textsc{Stoch}$ \textbf{and} $t > T_0$}
        \State Eliminate $j$ from $\mathcal{C}$ if $\exists\, i \in \mathcal{C}$ s.t. $\hat{b}_t(i) - \text{conf}_t(i) > \hat{b}_t(j) + \text{conf}_t(j)$
        \State \textbf{if} $\exists\, (i,j)\!:\! D_t(i,j) \!>\! \tau(\sqrt{N_t(i,j) \!-\! N_{T_0}(i,j)} \!+\! 1)$ \textbf{then} $\text{mode} \!\gets\! \textsc{Adv}$; $\tilde{S} \!\gets\! 0$
    \ElsIf{$\text{mode} = \textsc{Adv}$}
        \State $\tilde{S}(x_t) \gets \frac{\ind(i = x_t)}{K q_t(i)} \sum_{j \in [K]} \frac{\ind(j = y_t) g_t}{q_t(j)}$
    \EndIf
\EndFor
\end{algorithmic}
\end{algorithm}

\subsection{Importance-Weighted Borda Estimation}
\label{sec:borda:estimation}

In adversarial mode, we use importance-weighted estimators for the shifted 
Borda score.

\begin{definition}[Importance-Weighted Borda Estimator]
\label{def:iw_borda}
Given arms $x_t, y_t \sim q_t$ and transformed feedback $g_t$, the 
importance-weighted shifted Borda score estimate is:
\begin{equation}
\hat{s}_t(i) := \frac{\ind(i = x_t)}{K q_t(i)} \sum_{j \in [K]} \frac{\ind(j = y_t) g(m, o_t, x_t)}{q_t(j)}
\end{equation}
\end{definition}

\begin{lemma}[Unbiased Estimation]
\label{lem:iw_unbiased}
The estimator $\hat{s}_t(i)$ is unbiased:
\begin{equation}
    \mathbb{E}[\hat{s}_t(i) \mid q_t] = s_t(i) = \frac{1}{K}\sum_{j \in [K]} P_t(i,j).
\end{equation}
\end{lemma}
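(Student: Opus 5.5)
We prove the claim by conditioning step by step: on the sampled pair $(x_t,y_t)$, then on the internal randomness of \texttt{MetaDueling}. Throughout, $q_t$ is fixed by the history up to round $t-1$, and $x_t$ and $y_t$ are drawn independently from $q_t$, as in \texttt{EXP3Select}.

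\textbf{Step 1: conditional mean of $g_t$.} Fix $(x_t,y_t)$. By Lemma~\ref{lem:feedback}, $\mathbb{E}[o_t \mid x_t,y_t] = \alpha_m + \beta_m P_t(x_t,y_t)$. Since $g(m,o_t,x_t)=g_t=(o_t-\alpha_m)/\beta_m$ is affine in $o_t$, this gives
\[
\mathbb{E}[g_t \mid x_t,y_t,q_t]=P_t(x_t,y_t).
\]
This step uses that the symmetrization coin $B_t$ and the choice-model winner draw are independent of the history once $(x_t,y_t)$ is fixed. It also uses that $P_t$ is oblivious, so it does not depend on $q_t$.

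The case $x_t=y_t$ needs a separate check. Here every copy in the multiset is the same arm, so $o_t=1$ deterministically. Under the convention $P_t(i,i)=1/2$, we need $(1-\alpha_m)/\beta_m = 1/2$, that is, $\alpha_m+\beta_m/2=1$. But Definition~\ref{def:rescaling} gives $\alpha_m+\beta_m/2=1/2$. So I expect the main obstacle to be exactly this diagonal term. Note that the same tension appears in Lemma~\ref{lem:feedback} itself: with $x_t=y_t$ we have $o_t=1$, yet the lemma predicts $\widehat P_t(x_t,x_t)=1/2$. I would therefore adopt the interpretation under which Lemma~\ref{lem:feedback} holds for all pairs. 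Concretely, in the degenerate case $o_t$ is the indicator that the winner comes from the first block of $n_x$ slots, which has probability $1/2$ by symmetrization. With that convention, the display above holds for every pair, including $i=j$.

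\textbf{Step 2: tower rule over the pair.} Only the term $j=y_t$ survives in the inner sum, so
\[
\hat s_t(i)=\frac{\ind(x_t=i)\, g_t}{K\,q_t(i)\,q_t(y_t)}.
\]
Conditioning on $(x_t,y_t)$ and applying Step 1 gives
\[
\mathbb{E}[\hat s_t(i)\mid q_t]=\mathbb{E}\!\left[\frac{\ind(x_t=i)\,P_t(i,y_t)}{K\,q_t(i)\,q_t(y_t)}\,\Big|\, q_t\right].
\]
By independence of $x_t$ and $y_t$, this equals
\[
\Pr(x_t=i)\cdot\frac{1}{K q_t(i)}\sum_{j}q_t(j)\,\frac{P_t(i,j)}{q_t(j)}=\frac{1}{K}\sum_{j\in[K]}P_t(i,j)=s_t(i).
\]
The divisions are legitimate because $q_t(j)\ge\gamma/K>0$ by Eq.~\eqref{eq:exp3_mixture}.

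Putting the two steps together proves the lemma. The only delicate point is the diagonal convention in Step 1, which I would state explicitly as an assumption inherited from Lemma~\ref{lem:feedback}.
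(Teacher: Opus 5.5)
Your argument is correct and is essentially the paper's proof. Like the paper, you condition on $(x_t,y_t)$, invoke $\mathbb{E}[g_t\mid x_t,y_t]=P_t(x_t,y_t)$, and sum out the two independent draws from $q_t$ so that the importance weights cancel. Your diagonal check is also a genuine point that the paper's one-line proof passes over. Under the literal rule $o_t=\ind[\mathcal{A}_t(I_t)=x_t]$ one has $o_t=1$ whenever $x_t=y_t$, which would bias every $\hat s_t(i)$ by the $i$-independent constant $\frac{1}{K}\bigl(\frac{1-\alpha_m}{\beta_m}-\frac{1}{2}\bigr)$. Your positional convention (the winner lies in the first block of $n_x$ slots) is exactly what the paper's proof of Lemma~\ref{lem:feedback} implicitly computes, so stating it explicitly is the right fix.
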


\begin{proof}
Taking expectations over $x_t, y_t \stackrel{\text{i.i.d.}}{\sim} q_t$:
\begin{align*}
\mathbb{E}[\hat{s}_t(i) \mid q_t] 
&= \sum_{x,y} q_t(x) q_t(y) \cdot \frac{\ind(i=x) \cdot \mathbb{E}[g_t \mid x,y]}{K \cdot q_t(i) \cdot q_t(y)} \\
&= \sum_y q_t(y) \cdot \frac{P_t(i,y)}{K \cdot q_t(y)} 
= \frac{1}{K} \sum_y P_t(i,y) = s_t(i),
\end{align*}
where we used $\mathbb{E}[g_t \mid x_t, y_t] = P_t(x_t, y_t)$ from the 
feedback transformation. 
\end{proof}

\subsection{Main Results: Borda Setting}
\label{sec:borda:results}

We now state our main results for the Borda setting. The proofs rely on 
concentration arguments for the stochastic phase and EXP3 
analysis for the adversarial phase.

\begin{theorem}[Stochastic Regret Bound]
\label{thm:stochastic_formal}
In the stochastic setting ($P_t = P$ for all $t$), Algorithm~\ref{alg:sao_midex} achieves:
\begin{equation}
\mathbb{E}[R_T^{\mathrm{B}}] \leq O\left(K^2 \log KT + K \log^2 T + \sum_{i: \Delta_i^{\mathrm{B}} > 0} \frac{K\log KT}{(\Delta_i^{\mathrm{B}})^2}\right)
\end{equation}
\end{theorem}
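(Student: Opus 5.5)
The plan is to decompose the expected Borda regret of Algorithm~\ref{alg:sao_midex} along its phases and a good event. Let $\mathcal{E}$ be the event that every empirical pairwise estimate $\hat{P}_{T_0}(i,j)$, every Borda estimate $\hat b_t(i)$ (for all $t$ and $i$), and every deviation statistic $D_t(i,j)$ lies within its confidence radius. On $\mathcal{E}$ the algorithm never leaves \textsc{Stoch} mode and never eliminates the Borda winner. With $\delta = 1/T^2$, $\Pr(\mathcal{E}^c)\le O(\delta)$, so the bad event contributes at most $T\cdot O(1/T^2)=O(1)$ to $\mathbb{E}[R_T^{\mathrm{B}}]$. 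On $\mathcal{E}$ the regret splits into three parts: Stage~0, the forced uniform monitoring of Stage~2, and the successive-elimination rounds of Stage~1. Per-round regret is at most $1$ throughout. Note also that the exploitation rounds $(x_t,y_t)=(i^\star,i^\star)$ incur zero Borda regret once $\mathcal{C}=\{i^\star\}$ with $i^\star$ the Borda winner.

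\textbf{Concentration and detection.} The feedback $g_t=(o_t-\alpha_m)/\beta_m$ is unbiased for $P(x_t,y_t)$ by Lemma~\ref{lem:feedback}. It is bounded in an interval of length $1/\beta_m\le 2$ by Lemma~\ref{lem:beta_bound}, so Azuma--Hoeffding applies with constant-factor losses.

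\begin{itemize}
\item For $\hat b_t(i)$: in Stage~1 the opponent is $y_t\sim\mathrm{Uniform}([K]\setminus\{x_t\})$, so each sample is an unbiased estimate of $b(i)$.
\item For the uniform-exploration samples: rounds where $x_t=i$ and $y_t$ is uniform over $[K]$ estimate $\tfrac{K-1}{K}b(i)+\tfrac{1}{2K}$. These are either reweighted or simply excluded from $\hat b$.
\item A union bound over $K$ arms and $T$ times gives $|\hat b_t(i)-b(i)|\le \mathrm{conf}_t(i)$.
\item Similarly, under stationarity $D_t(i,j)$ is a martingale deviation plus $(N_t-N_{T_0})\,|\hat P_{T_0}-P|$. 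The Stage~0 sample size $N_{T_0}(i,j)\approx 4\lceil\log(2K^2T^2/\delta)\rceil$ makes the second term small. The threshold $\tau(\sqrt{N}+1)$ then dominates with high probability after a union bound over pairs and times.
\end{itemize}

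\textbf{Counting regret.}

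\begin{itemize}
\item \emph{Stage~0} costs at most $T_0=O(K^2\log(KT))$, since $\log(2K^2T^2/\delta)=O(\log KT)$.
\item \emph{Stage~2} costs at most $\sum_{t\le T}\min(1,K\log t/t)\le K\log T\cdot\sum_t 1/t = O(K\log^2 T)$.
\item \emph{Stage~1} uses the standard successive-elimination argument. On $\mathcal{E}$, arm $j$ is eliminated once $4\,\mathrm{conf}_t\le\Delta_j^{\mathrm B}$, i.e.\ once $N_t(j)\ge 320K\log(2K^2T/\delta)/(\Delta_j^{\mathrm B})^2$. Because $x_t$ is chosen round-robin over $\mathcal{C}$, the number of Stage~1 rounds in which $j$ is the first arm is $O(K\log(KT)/(\Delta_j^{\mathrm B})^2)$.
\end{itemize}

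\textbf{Main obstacle.} The hardest step is charging the regret of the opponent $y_t$, which is uniform over $[K]\setminus\{x_t\}$ and hence has expected Borda gap $\bar\Delta=\frac1{K-1}\sum_j\Delta^{\mathrm B}_j$. My first attempt is to pair each such opponent round with the round-robin round of the currently active $x_t$. While $|\mathcal{C}|>1$ the active set contains some arm $j\neq$ winner that is still active, and its elimination time bounds the length of this phase. A crude bound on the phase length is $\max_j O(K\log KT/(\Delta_j^{\mathrm B})^2)\cdot|\mathcal{C}|$, which is too big. Instead, I would write the opponent contribution as $\bar\Delta$ per round, bound the number of rounds before the last elimination by $\sum_j O(K\log KT/(\Delta_j^{\mathrm B})^2)$ (summing the round-robin counts), and use $\bar\Delta\le 1$. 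This fits the stated bound, since the second-largest-gap arm's budget already appears in the sum.

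Summing the four contributions (bad event, Stage~0, Stage~2, Stage~1) yields the claimed $O\bigl(K^2\log KT+K\log^2T+\sum_{i:\Delta_i^{\mathrm B}>0}K\log KT/(\Delta_i^{\mathrm B})^2\bigr)$.
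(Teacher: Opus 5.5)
Your decomposition rests on a no-false-alarm claim that fails. You put into $\mathcal{E}$ that every $D_t(i,j)$ stays below $\tau(\sqrt{n}+1)$, and you assert $\Pr(\mathcal{E}^c)=O(\delta)$ because the Stage~0 sample size makes the baseline term small. Write $n=N_t(i,j)-N_{T_0}(i,j)$.
\begin{itemize}
\item The baseline term $n\,|\hat P_{T_0}(i,j)-P(i,j)|$ grows \emph{linearly} in $n$, while the threshold grows only like $\tau\sqrt{n}$.
\item $\hat P_{T_0}$ is frozen after $n_0=\Theta(\log(KT/\delta))$ samples, so its error is of order $1/\sqrt{n_0}$. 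For generic $P$ it is at least $c/\sqrt{n_0}$ with constant probability (by anti-concentration), not merely outside an $O(\delta)$ event.
\item Hence the threshold dominates only while $n\lesssim\tau^2 n_0=\Theta(\log^2(KT))$.
\item The pseudocode updates $N$ and $S^{\mathrm{obs}}$ every round. During Stage~1 each pair $(i^\star,j)$ is compared about $N_t(i^\star)/(K-1)=\Theta\bigl(\log(KT)/(\Delta^{\mathrm{B}}_{\min})^2\bigr)$ times, where $\Delta^{\mathrm{B}}_{\min}$ is the smallest positive Borda gap. This exceeds $\log^2(KT)$ once $\Delta^{\mathrm{B}}_{\min}\ll 1/\sqrt{\log(KT)}$.
\item On such instances a spurious switch occurs with constant probability, after only $\mathrm{poly}(K,\log T)$ rounds. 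EXP3's forced exploration then costs at least $\frac{\gamma}{K}\sum_i\Delta_i^{\mathrm{B}}$ per round with $\gamma=\Theta(T^{-1/3})$.
\item If, say, one gap is $1/\log T$ and the rest are constant, this is $\Omega(T^{2/3})$, whereas the bound you are proving is polylogarithmic there.
\end{itemize}
Citing Lemma~\ref{lem:no_false_alarm} does not close this. Step~3 of its proof itself derives that triggering requires only $n\gtrsim\log^2 T$, and then concludes the opposite. The step cannot be repaired inside the analysis. The test must absorb the frozen-baseline error, for example through an extra $n\,\epsilon_0$ term in the threshold or by comparing against a continually updated estimate. The adversarial guarantee would then have to be re-derived.

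Apart from this, your accounting is sound, and in one place more careful than the paper's. The paper's Step~2 charges only rounds where a suboptimal arm is $x_t$. It drops the Borda winner's own Stage~1 turns, where the uniform opponent still costs $\frac{1}{2(K-1)}\sum_k\Delta_k^{\mathrm{B}}$ per round. Your count of all rounds before the last elimination covers these, since the winner's turns number at most the runner-up's budget plus one.

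Two caveats remain:
\begin{itemize}
\item Your count presumes a \emph{unique} Borda winner. With a tie, $|\mathcal{C}|$ never reaches $1$ and Stage~1 incurs regret linear in $T$ as soon as some gap is positive. So the restriction of the sum to $i:\Delta_i^{\mathrm{B}}>0$ cannot be handled this way.
\item Your Stage-1 running-average estimator of $b(i)$ gives a cleaner martingale bound than the paper's per-pair counting, but it changes the algorithm. The paper defines $\hat b_t(i)$ as the average of the pairwise estimates $\hat P_t(i,j)$.
\end{itemize}
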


\begin{proof}[Proof Sketch]
Conditioned on the high-probability event $\mathcal{E}$ where confidence bounds hold and no false adversarial switch triggers, we decompose the regret into four phases. The baseline exploration of $T_0$ rounds contributes $O(K^2 \log KT)$. During successive elimination, selecting a suboptimal arm $i$ against a uniformly sampled opponent incurs $\Theta(1)$ instantaneous regret; by Lemma~\ref{lem:ucb_valid}, arm $i$ is eliminated after at most $O(K \log KT / (\Delta_i^{\mathrm{B}})^2)$ pulls, contributing $O\bigl(\sum_{i \neq i^\star} \frac{K \log KT}{(\Delta_i^{\mathrm{B}})^2}\bigr)$. Once only the optimal arm remains, exploitation yields exactly $0$ regret. Finally, background monitoring forces uniform sampling with probability $p_t = \min(1, \frac{K \log t}{t})$, adding at most $\sum_{t=1}^T p_t \leq O(K \log^2 T)$. Adding these components yields the bound. See Appendix~\ref{app:borda:main} for full details.
\end{proof}

\begin{theorem}[Adversarial Regret Bound]
\label{thm:adversarial_formal}
Under any oblivious adversarial sequence $P_1, \ldots, P_T$, 
Algorithm~\ref{alg:sao_midex} achieves:
\begin{equation}
    \mathbb{E}[R_T^{\mathrm{B}}] \leq O\left(K \sqrt{T \log KT} + K^{1/3} T^{2/3} (\log K)^{1/3}\right)
\end{equation}
\end{theorem}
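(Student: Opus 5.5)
We split the horizon at the (random) switching time $\tau_s$, the first round at which the deviation test fires (with $\tau_s = T+1$ if it never fires). This gives
\[
\mathbb{E}[R_T^{\mathrm{B}}] = \mathbb{E}\Bigl[\textstyle\sum_{t \le \tau_s} r_t\Bigr] + \mathbb{E}\Bigl[\textstyle\sum_{t > \tau_s} r_t\Bigr],
\]
where $r_t$ is the instantaneous Borda regret against the fixed comparator $i^\dagger = \arg\max_i \sum_t b_t(i)$. Because the adversary is oblivious, $i^\dagger$ is deterministic. Moreover, $\tau_s$ is a stopping time for the algorithm's filtration, so on $\{\tau_s = t_0\}$ the adversarial phase is a fresh run of EXP3 started at $t_0$.

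\textbf{Adversarial phase.} We invoke the EXP3-with-exploration analysis of \citet{gajane2024adversarial} on the remaining rounds. Lemma~\ref{lem:iw_unbiased} gives unbiasedness of $\hat s_t$. Since $q_t \ge \gamma/K$ and $|g_t| \le 1/\beta_m \le 2$ by Lemma~\ref{lem:beta_bound}, the second moment is bounded by $\mathbb{E}[\hat s_t(i)^2] \lesssim 1/(K\gamma\, q_t(i))$. The standard Hedge potential argument then yields, for every $t_0$,
\[
\mathbb{E}\Bigl[\textstyle\sum_{t>t_0} r_t \,\Big|\, \tau_s = t_0\Bigr] \lesssim \frac{\log K}{\eta} + \frac{\eta T}{\gamma} + \gamma T.
\]
Tuning $\gamma = (K\log K)^{1/3}T^{-1/3}$ and $\eta$ accordingly gives $O(K^{1/3}T^{2/3}(\log K)^{1/3})$. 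Since the Borda regret for the shifted score differs from the Borda regret only by the factor $K/(K-1)$, the bound carries over.

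\textbf{Stochastic phase.} This is the main obstacle: before the switch, the environment may already be non-stationary, yet the test has not fired. The plan is to show that whenever the test has not fired, the regret incurred is small. Let $\mathcal{G}$ be the event that all martingale concentration inequalities (Azuma--Hoeffding applied to $S^{\mathrm{obs}}_t(i,j) - \sum_s P_s(i,j)$, with a union bound over pairs and times) hold. Taking $\delta = 1/T^2$, the failure event costs at most $T\delta$. On $\mathcal{G}$, non-firing of the test implies that for every pair and every $t \le \tau_s$,
\[
\Bigl|\textstyle\sum_{s \in (T_0, t]} \bigl(P_s(i,j) - \hat P_{T_0}(i,j)\bigr)\ind[(x_s,y_s)=(i,j)]\Bigr| \lesssim \tau\sqrt{N_t(i,j)}.
\]

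The difficulty is transferring this control from the monitored counts to the comparator's cumulative Borda score. For that we rely on the uniform monitoring samples with rate $p_t = K\log t/t$, which give every pair enough samples for the weighted sums to track the true cumulative $\sum_s P_s(i,j)$. Concretely, I would bound the drift of each arm's cumulative Borda score by $\hat b_{T_0}$ up to an additive error of $O(\tau\sqrt{T}\cdot K)$. Summing over pairs inside the Borda average and applying Cauchy--Schwarz over $K$ arms yields $O(K\sqrt{T\log KT})$. It then follows that the elimination or exploitation choices, which are near-optimal for the baseline, are near-optimal for the true adversarial sequence within that additive error. Added to this are the $T_0 = O(K^2\log KT)$ baseline rounds and the $O(K\log^2 T)$ monitoring cost, both of which are dominated by $K\sqrt{T\log KT}$. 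If this uniform drift transfer fails for sparsely sampled pairs, I would fall back on charging those pairs through the forced-exploration rate, using the fact that $N_t(i,j) \gtrsim \log t/K$ holds with high probability.

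Finally, summing the two phases gives the claimed bound $O\bigl(K\sqrt{T\log KT} + K^{1/3}T^{2/3}(\log K)^{1/3}\bigr)$.
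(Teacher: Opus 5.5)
Your overall structure matches the paper's proof: you split at the switching time $\tau_s$, treat the post-switch segment as a fresh EXP3 run, and argue that non-firing of the test forces small pre-switch regret. The post-switch part is fine up to a slip. The bound is $\mathbb{E}[\hat s_t(i)^2]\le 4/(\gamma q_t(i))$, not $1/(K\gamma q_t(i))$, because $\sum_y 1/q_t(y)\le K^2/\gamma$. So the middle term is $\eta KT/\gamma$, and that is the term your choice $\gamma=(K\log K)^{1/3}T^{-1/3}$ actually balances.

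\textbf{The gap is the pre-switch transfer you flag as the main obstacle, and it cannot be closed for \AlgBorda\ as specified.}

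Non-firing of the test only constrains $P_s(i,j)$ on rounds where the ordered pair $(i,j)$ was actually played. The regret against $i^\dagger$ involves $P_s$ at every round. Once $|\mathcal{C}|=1$, the algorithm plays $(i^\star,i^\star)$, whose outcome has mean $1/2$ whatever $P_s$ is. The only remaining information comes from monitoring rounds:
\begin{itemize}
\item about $\log t/(Kt)$ expected samples per ordered pair per round;
\item hence $O(\log^2 T/K)$ per pair over the whole horizon;
\item and only about $\frac{\ln 3}{K}\log T$ per pair on $(T/3,T]$.
\end{itemize}
No weighted sum of so few samples can track $\sum_s P_s(i,j)$. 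Importance weighting by $K^2/p_s$ gives an error of order $T\sqrt{K/\log T}$. Your fallback $N_t(i,j)\gtrsim \log t/K$ is exactly this insufficient scale.

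\textbf{Counterexample.} Take $K=2$, with $P_t(1,2)=0.9$ for $t\le T/3$ and $P_t(1,2)=0.1$ afterwards; this is oblivious.
\begin{itemize}
\item With probability bounded away from zero, arm $2$ is eliminated after $O(\log T)$ rounds, and the stationary continuation's statistics stay below half the threshold.
\item After $T/3$, the played pairs no longer depend on outcomes (until an alarm). Coupling with the stationary run, each monitored sample shifts $D_t(i,j)$ by at most $2$.
\item Firing would therefore require some pair to collect more than $\tau^2/16=\tfrac12\log(K^2T^4)\ge 2\log T$ post-change samples. This happens only with probability $T^{-\Omega(1)}$.
\item So the algorithm keeps playing $(1,1)$ and suffers linear regret, about $4T/15$.
\end{itemize}

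Refining your argument therefore cannot yield the claimed bound. The paper's own proof makes the same leap. It bounds the pre-switch regret by $\sum_{(i,j)}O(\tau\sqrt{N(i,j)})$ and applies Cauchy--Schwarz, which identifies undetected per-pair drift on observed rounds with regret.

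A valid adversarial guarantee would need a different detector, for instance one testing importance-weighted cumulative Borda estimates in the spirit of \citet{bubeck2012best} and \citet{pmlr-v49-auer16}. It would also need an exploration schedule calibrated to that detector, and the resulting rates would likely differ.
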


\begin{proof}[Proof Sketch]
We analyze two cases based on whether a mode switch occurs. If a switch occurs at round $t_{\text{sw}} \leq T$, the regret comprises the $O(K^2 \log KT)$ exploration overhead, the undetected pre-switch manipulation strictly bounded by the martingale detection threshold $\mathcal{R}_{\text{pre}} \leq O(K\sqrt{T \log KT})$, and the post-switch EXP3 exploration which contributes $\mathbb{E}[\mathcal{R}_{\text{post}}] \leq O(K^{1/3}T^{2/3}(\log K)^{1/3})$ (Lemma~\ref{lem:exp3_regret}). Alternatively, if no switch occurs, the adversary's cumulative deviation is continuously bounded by the threshold, limiting regret to $O(K\sqrt{T \log KT})$. Full details are in Appendix~\ref{app:borda:main}.
\end{proof}

\begin{corollary}[Best-of-Both-Worlds for Borda Setting]
\label{cor:borda_bobw}
Without prior knowledge of the environment type, Algorithm~\ref{alg:sao_midex} simultaneously achieves:
\begin{enumerate}[label=(\roman*), leftmargin=*, itemsep=2pt]
    \item \textbf{Stochastic:}$\mathbb{E}[R_T^{\mathrm{B}}] \leq O\left(K^2 \log KT + K \log^2 T + \sum_{i: \Delta_i^{\mathrm{B}} > 0} \frac{K\log KT}{(\Delta_i^{\mathrm{B}})^2}\right)$
    \item \textbf{Adversarial:} $\mathbb{E}[R_T^{\mathrm{B}}] \leq O\bigl(K \sqrt{T \log KT} + K^{1/3}T^{2/3}(\log K)^{1/3}\bigr)$
\end{enumerate}
\end{corollary}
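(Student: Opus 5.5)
The corollary is the conjunction of Theorem~\ref{thm:stochastic_formal} and Theorem~\ref{thm:adversarial_formal}. The plan is to check that these two results are proved for the \emph{same} algorithm with the \emph{same} parameter choices, and that neither analysis assumes the learner knows which regime it is in. No new regret analysis is needed.

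\textbf{Step 1: the algorithm is regime-agnostic.} Every input of Algorithm~\ref{alg:sao_midex} is regime-independent: $K$, $T$, $m$, $\delta = 1/T^2$, $T_0$, the threshold $\tau$, the confidence widths $\text{conf}_t$, the monitoring rate $p_t=\min(1,K\log t/t)$, and the EXP3 parameters $\eta,\gamma$. The mode switch is triggered only by the observed statistic $D_t(i,j)$. Hence one run of the algorithm is well defined under both stochastic and adversarial preference sequences.

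\textbf{Step 2: part (i).} For a stationary $P$, the proof of Theorem~\ref{thm:stochastic_formal} runs on the good event $\mathcal{E}$, under which no false switch occurs. The point to verify is that $\Pr(\mathcal{E}^c)$ is at most order $1/T$. The detection test is a union bound over the $K^2$ pairs of martingale (Azuma/Freedman-type) deviations, calibrated at level $\delta/(K^2T^2)$. With $\delta=1/T^2$, the failure event therefore contributes at most $T\cdot\Pr(\mathcal{E}^c)=O(1)$ to the expected regret. This is absorbed into the stated bound, which gives (i).

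\textbf{Step 3: part (ii).} Theorem~\ref{thm:adversarial_formal} covers any oblivious sequence, stationary ones included, and its case split (switch versus no switch) uses no knowledge of the regime. This gives (ii) directly.

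\textbf{Main obstacle.} The delicate point is consistency: both bounds must hold for the single fixed choice of $\eta$ and $\gamma$ used in Lemma~\ref{lem:exp3_regret}. These are tuned in terms of $K$ and $T$ only, for instance $\gamma\asymp (K\log K/T)^{1/3}$. Since the stochastic analysis never enters adversarial mode on $\mathcal{E}$, these parameters do not affect part (i). Once this is confirmed, the two bounds hold simultaneously and the corollary follows.
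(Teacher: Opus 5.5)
Your proposal is correct relative to the paper and takes the same route: the paper gives no separate argument for Corollary~\ref{cor:borda_bobw}, which is just the conjunction of Theorems~\ref{thm:stochastic_formal} and~\ref{thm:adversarial_formal} for a single run of Algorithm~\ref{alg:sao_midex}, whose parameters and switching rule are regime-independent exactly as you check. Be aware, though, that part~(ii) is only as sound as Theorem~\ref{thm:adversarial_formal}, and its no-switch case bounds regret by the per-pair thresholds $\tau(\sqrt{N_T(i,j)}+1)$ even though $D_t$ is an unweighted sum over only the rounds where a pair is observed; after elimination, regret accrues on every $(i^\star,i^\star)$ round while each off-diagonal pair is sampled only about $\log T/K$ times on $[T/2,T]$ (at rate $p_t=\min(1,K\log t/t)$), so an oblivious preference flip at $T/2$ appears to go undetected and cost $\Omega(T)$.
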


\subsection{Lower Bounds}
\begin{remark}[Optimality in the Borda Setting]
\label{rem:borda_lower_bounds} 

\begin{enumerate}
    \item \textbf{Stochastic Setting:} Instance-dependent lower bounds imply an $\Omega\bigl(\sum_{i: \Delta_i^{\mathrm{B}} > 0} \frac{\log T}{(\Delta_i^{\mathrm{B}})^2}\bigr)$ limit. Our \AlgBorda\ algorithm achieves this gap-dependent rate asymptotically, though it incurs an initial $O(K^2 \log KT)$ exploration overhead necessary to unbiasedly estimate global Borda scores.
    \item \textbf{Adversarial Setting:} As established by \citet{gajane2024adversarial}, the expected regret for any multi-dueling bandit algorithm measured against an adversarial Borda winner is lower bounded by $\Omega(K^{1/3}T^{2/3})$. Our  adversarial guarantee of $O(K^{1/3}T^{2/3}(\log K)^{1/3})$ therefore matches the theoretical limit of this setting up to logarithmic factors. 
\end{enumerate}
\end{remark}
The proofs for these fundamental limits follow a similar structure to our Condorcet lower bounds (Theorems~\ref{thm:lower_adv} and~\ref{thm:lower_stoch}), utilizing a reduction argument that simulates multi-dueling feedback to invoke dueling bandit lower bounds.
\subsection{Key Technical Lemmas}
\label{sec:borda:lemmas}

We state the key lemmas underlying our analysis. Full proofs are deferred to 
Appendix~\ref{app:borda}.

\begin{lemma}[Exploration Guarantees]
\label{lem:exploration_guarantee}
With $T_0 = 4K(K-1)\lceil\log(2K^2T^2/\delta)\rceil$, after exploration, 
with probability at least $1 - \delta/2$:
\begin{enumerate}[label=(\roman*), leftmargin=*, itemsep=1pt]
    \item Each pair $(i,j)$ is observed at least $n_0 = \Theta(\log T)$ times.
    \item $|\hat{P}_{T_0}(i,j) - P(i,j)| \leq O(\sqrt{\log T / n_0})$ for all pairs.
    \item $|\hat{b}_{T_0}(i) - b(i)| \leq O(\sqrt{\log T / n_0})$ for all arms.
\end{enumerate}
\end{lemma}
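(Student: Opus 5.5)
The plan is to treat the three items in order. Each follows from a deterministic count of the round-robin schedule combined with Hoeffding's inequality and a union bound over the $K(K-1)$ ordered pairs. Throughout we work in the stochastic setting, $P_t = P$, during Stage~0.

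\textbf{Item (i).} The round-robin schedule cycles deterministically through all $K(K-1)$ ordered pairs. After $T_0 = 4K(K-1)\lceil\log(2K^2T^2/\delta)\rceil$ rounds, each ordered pair has therefore been played exactly
\[
n_0 = 4\lceil\log(2K^2T^2/\delta)\rceil
\]
times. With $\delta = 1/T^2$ this is $\Theta(\log(KT)) = \Theta(\log T)$, treating $K \le T$. No probability is needed here. If one also counts both orientations of an unordered pair, using reciprocity $P(j,i) = 1 - P(i,j)$, the count only doubles.

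\textbf{Item (ii).} Fix a pair $(i,j)$. The observations feeding $\hat P_{T_0}(i,j)$ are the transformed outcomes $g_s = (o_s - \alpha_m)/\beta_m$.

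\begin{itemize}
\item They are independent across rounds, because $P$ is fixed and the Bernoulli symmetrization coins are fresh each round.
\item By Lemma~\ref{lem:feedback} their mean is $P(i,j)$.
\item They take values in an interval of length $1/\beta_m \le 2$, by Lemma~\ref{lem:beta_bound}.
\end{itemize}

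One point needs care. Algorithm~\ref{alg:sao_midex} stores $S^{\text{obs}}$, which may hold raw $o_s$ rather than $g_s$. If so, the raw average estimates $\widehat P(i,j) = \alpha_m + \beta_m P(i,j)$, and we apply the affine inverse map, which changes the error only by the factor $1/\beta_m \le 2$. Hoeffding's inequality then gives
\[
\Pr\bigl(|\hat P_{T_0}(i,j) - P(i,j)| > \epsilon\bigr) \le 2\exp\!\bigl(-n_0\beta_m^2\epsilon^2/2\bigr).
\]
Set $\epsilon = \frac{2}{\beta_m}\sqrt{\log(4K^2T^2/\delta)/(2n_0)}$ and take a union bound over at most $K^2$ pairs. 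The total failure probability is then at most $\delta/2$, and the resulting $\epsilon$ is $O(\sqrt{\log T/n_0})$, which is the claim.

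\textbf{Item (iii).} Define the estimator $\hat b_{T_0}(i) = \frac{1}{K-1}\sum_{j\ne i}\hat P_{T_0}(i,j)$. This is the natural Stage-0 Borda estimate. On the event from (ii), the triangle inequality gives
\[
|\hat b_{T_0}(i) - b(i)| \le \max_j |\hat P_{T_0}(i,j) - P(i,j)| = O(\sqrt{\log T/n_0}).
\]
No additional probability cost is incurred. Averaging would in fact yield a tighter bound of order $1/\sqrt{K}$, but it is not needed.

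\textbf{Main obstacle.} The difficulty is bookkeeping rather than new mathematics. We must make the constant inside $\lceil\log(2K^2T^2/\delta)\rceil$ large enough that the Hoeffding tail, after the union bound over $K^2$ pairs, is at most $\delta/2$. At the same time we must pin down whether the stored statistic is $o_s$ or $g_s$. The $1/\beta_m \le 2$ rescaling and the factor $4$ in $T_0$ are what absorb both issues. I would first fix the convention that $\hat P$ is built from $g_s$ and then check the constants.
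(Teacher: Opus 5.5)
Your proposal is correct and follows essentially the same route as the paper: an exact round-robin count giving $n_0 = 4\lceil\log(2K^2T^2/\delta)\rceil$ observations per ordered pair, then Hoeffding on the transformed feedback $g_s$ with a union bound over fewer than $K^2$ pairs for total failure probability $\delta/2$, then the triangle inequality to pass from pairwise errors to Borda errors. The differences are only in constants and do not affect the stated $O(\sqrt{\log T/n_0})$ bounds: the paper uses the cruder range $3$ from $g_s \in [-1,2]$ and puts $\log(4K^2/\delta)$ inside $\epsilon$, while your exponent $n_0\beta_m^2\epsilon^2/2$ is a valid, slightly loose consequence of the range $1/\beta_m$.
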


\begin{lemma}[No False Alarm in Stochastic Setting]
\label{lem:no_false_alarm}
In the stochastic setting, with probability at least $1 - \delta/2$, the 
deviation detection never triggers throughout $T$ rounds.
\end{lemma}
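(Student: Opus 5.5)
The goal is Lemma~\ref{lem:no_false_alarm}: in the stochastic setting the test statistic never crosses the threshold. The approach is a union bound over pairs and over time of a martingale concentration inequality, split into two pieces.

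Fix an ordered pair $(i,j)$ and write $n_t := N_t(i,j)-N_{T_0}(i,j)$. Let $g_s$ be the per-round outcome contributions that enter $S^{\text{obs}}$; these are bounded in $[-\alpha_m/\beta_m,(1-\alpha_m)/\beta_m]$, an interval of length $1/\beta_m\le 2$ by Lemma~\ref{lem:beta_bound}. Since $P_t=P$, we decompose
\begin{equation*}
S_t^{\text{obs}}(i,j)-S_{T_0}^{\text{obs}}(i,j)-n_t\hat{P}_{T_0}(i,j) = M_t(i,j) + n_t\bigl(P(i,j)-\hat{P}_{T_0}(i,j)\bigr),
\end{equation*}
where $M_t(i,j):=\sum_{s=T_0+1}^{t}\ind[(x_s,y_s)=(i,j)]\,(g_s-P(i,j))$. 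Each increment has zero conditional mean given the history and the current pair choice, by Lemma~\ref{lem:feedback} and the affine transformation. The choice of $(x_s,y_s)$ is predictable, because it depends only on the past and on independent randomness.

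First I would control the martingale term $M_t$. Its increments are bounded, and it is indexed by the number of visits $n$ to the pair. Applying Azuma--Hoeffding at each fixed visit count $n\le T$, via an optional-skipping argument, gives
\begin{equation*}
|M_t|\le \sqrt{2\cdot 4\, n\log(2K^2T^2/\delta)} .
\end{equation*}
A union bound over the $K^2$ pairs and the $T$ values of $n$ keeps the total failure probability at most $\delta/4$. This matches the $\tau\sqrt{n_t}$ part of the threshold, with $\tau=\sqrt{8\log(K^2T^2/\delta)}$, up to a constant inside the log that I would tune.

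Second I would control the baseline bias term $n_t|P-\hat{P}_{T_0}|$. This is the main obstacle. By Lemma~\ref{lem:exploration_guarantee} the error is $O(\sqrt{\log T/n_0})$ with $n_0=\Theta(\log T)$, so the crude bound is $O(n_t)$, which is linear and not within $\tau(\sqrt{n_t}+1)$. The lemma therefore cannot follow from the stated threshold alone, and I would handle this term in one of two ways.

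The route I would try first is to argue that the comparison is effectively normalized. Pair $(i,j)$ receives only background-monitoring samples, at rate $p_t/K^2$, so $n_t\lesssim \log^2T/K$. The baseline error, meanwhile, is controlled at scale $\sqrt{\log(K^2T^2/\delta)/n_0}$, and $n_0\ge 4\lceil\log(2K^2T^2/\delta)\rceil$ from the choice of $T_0$. Combining these gives $n_t|P-\hat{P}_{T_0}|\le \tfrac12\sqrt{n_t}\cdot\sqrt{n_t/n_0}\,\tau$.

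If that falls short, the fallback is to interpret the statistic with the baseline refreshed as the running empirical mean, $\hat{P}_{T_0}$ replaced by a pooled estimate. In that case the bias term combines with $M_t$ into a single martingale on all $N_t(i,j)$ samples, and the same Azuma bound with the "$+1$" slack gives the claim.

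Finally I would intersect the exploration event from Lemma~\ref{lem:exploration_guarantee} with the martingale event and union-bound the failure probabilities to at most $\delta/2$.
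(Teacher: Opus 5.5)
Your treatment of the martingale term is fine and mirrors the paper's Steps 1--2. The paper uses Freedman's inequality plus a union bound over $n\le T$ and pairs; your Azuma-with-optional-skipping version gives the same $\tau\sqrt{n}$ scale more cleanly. The genuine gap is the baseline-bias term $n_t\bigl(P(i,j)-\hat{P}_{T_0}(i,j)\bigr)$, and neither of your routes closes it.

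Route~1 rests on the premise that pair $(i,j)$ only receives background-monitoring samples. In Algorithm~\ref{alg:sao_midex}, however, $N$ and $S^{\text{obs}}$ are updated in every round, including Stage~1 rounds with $x_t\in\mathcal{C}$ and $y_t$ uniform; the paper's proof likewise sums over all rounds in which the pair is compared. An ordered pair whose first arm stays active therefore accrues about $(t-T_0)/(|\mathcal{C}|(K-1))$ comparisons. That is of order $T/K^2$ whenever elimination does not terminate, for instance when two arms tie for the highest Borda score, so that on the confidence event $|\mathcal{C}|\ge 2$ forever. Even granting your premise, $n_t\approx\log^2T/(2K)$ in expectation while $n_0=\Theta(\log(KT))$. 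The factor $\sqrt{n_t/n_0}\asymp\sqrt{\log T/K}$ in your bound is then unbounded, so your bound does not place the bias below $\tau(\sqrt{n_t}+1)$ once $\log T\gg K$.

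Route~2 replaces the locked baseline with a running pooled mean. That is a different test from the one in the statement. It would also undermine Lemma~\ref{lem:detection_guarantee}, because a slowly drifting adversary gets absorbed into a refreshed baseline.

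The paper's own proof uses the same decomposition and has the same hole. Its Step~3 shows that, on the martingale event, triggering requires $n\epsilon_0>\tau(\sqrt{n}+1)/2$. It derives that this becomes possible once $n\gtrsim\log^2T$, using $\epsilon_0=O(1/\sqrt{\log T})$, which is itself inconsistent with $n_0=\Theta(\log T)$ and Lemma~\ref{lem:exploration_guarantee}(ii) (these give $\epsilon_0=O(1)$). It then simply asserts that for $n\le T$ the threshold is not exceeded.

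So your diagnosis that the lemma cannot follow from the stated threshold is correct, and the problem lies in the statement for Algorithm~\ref{alg:sao_midex} as written, not in your technique. For any monitored pair with $P(i,j)\in(0,1)$, the frozen error $|\hat{P}_{T_0}(i,j)-P(i,j)|$ is of order $1/\sqrt{n_0}$ with constant probability. Hence $D_t(i,j)$ drifts like $n/\sqrt{n_0}$ and exceeds $\tau(\sqrt{n}+1)$ once $n\gtrsim\tau^2 n_0=\Theta(\log^2(KT))$. On the tied instance above, with $T$ large compared to $K^2\log^2(KT)$, this happens with constant probability, not with probability at most $\delta/2$. A correct version needs a different test, e.g.\ a threshold that accounts for baseline uncertainty or a baseline whose sample size grows, after which detection and the adversarial regret must be re-derived.

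Separately, your union bound over all $K^2$ ordered pairs includes diagonal pairs, where Lemma~\ref{lem:feedback} does not apply. If $x_s=y_s$ the multiset contains a single arm, so $o_s\equiv 1$ and $\mathbb{E}[g_s]=(1-\alpha_m)/\beta_m\neq\tfrac12$. Exploitation and monitoring do play such pairs, so they must be excluded from the test; the paper's proof only treats $i<j$.
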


\begin{lemma}[Adversarial Detection Guarantee]
\label{lem:detection_guarantee}
If the adversary's cumulative preference shift on pair $(i,j)$ exceeds 
$3\gamma(\sqrt{n} + 1)$ where $n$ is the post-exploration comparison count, 
then deviation is detected with probability at least $1 - \delta/(K^2 T^2)$.
\end{lemma}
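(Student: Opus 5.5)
The statement to prove is Lemma~\ref{lem:detection_guarantee}: if the cumulative preference shift on a pair is large, the detection statistic $D_t(i,j)$ crosses its threshold. The plan is to decompose $D_t(i,j)$ into a martingale noise term, a baseline-estimation error term, and the adversary's cumulative shift. Then I would show that the first two cannot cancel the third.

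Fix a pair $(i,j)$ and a round $t>T_0$, and let $n := N_t(i,j)-N_{T_0}(i,j)$. Let $s_1<\dots<s_n$ be the post-exploration rounds at which $(i,j)$ is played, with transformed outcomes $g_{s_k}$ (or raw $o_{s_k}$, according to how $S^{\text{obs}}$ is accumulated). By Lemma~\ref{lem:feedback} and the feedback transformation, the conditional mean at round $s_k$ is $P_{s_k}(i,j)$ (respectively its affine image). Write
\begin{equation*}
S_t^{\text{obs}}(i,j)-S_{T_0}^{\text{obs}}(i,j)-n\hat P_{T_0}(i,j) = M_n + \Sigma_n + n\bigl(P_0(i,j)-\hat P_{T_0}(i,j)\bigr),
\end{equation*}
where:
\begin{itemize}
\item $M_n:=\sum_k (g_{s_k}-\mathbb{E}[g_{s_k}\mid\mathcal{F}_{s_k-1}])$ is a martingale with bounded increments;
\item $\Sigma_n := \sum_k (P_{s_k}(i,j)-P_0(i,j))$ is the adversary's cumulative shift relative to the baseline-phase preference $P_0$, taken as the average preference during exploration.
\end{itemize}
The hypothesis is $|\Sigma_n| > 3\gamma(\sqrt n+1)$, where I read $\gamma$ as the detection scale $\tau$.

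I would bound each nuisance term by $\tau(\sqrt n +1)$, since then the reverse triangle inequality gives $D_t(i,j)\ge 3\tau(\sqrt n+1)-2\tau(\sqrt n +1)>\tau(\sqrt n+1)$, which is exactly the trigger condition.

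\textbf{Martingale term.} Azuma--Hoeffding, applied with increments bounded by $1/\beta_m\le 2$ (Lemma~\ref{lem:beta_bound}) and a union bound over the at most $T$ possible values of $n$, gives $|M_n|\le \tau\sqrt n$ with probability $1-\delta/(K^2T^2)$. The choice $\tau=\sqrt{8\log(K^2T^2/\delta)}$ is exactly calibrated to absorb the factor $2$ in the range and the $\log$ from the union bound. Because the $s_k$ are stopping times, I would apply Freedman/Azuma to the optionally sampled sequence, which preserves the martingale property.

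\textbf{Baseline term (main obstacle).} The term $n|P_0-\hat P_{T_0}|$ grows linearly in $n$, whereas the threshold grows only like $\sqrt n$. Lemma~\ref{lem:exploration_guarantee}(ii) gives only $|\hat P_{T_0}-P_0|\le O(\sqrt{\log T/n_0})$, so for $n\gg n_0$ this term alone could exceed $\tau\sqrt n$.

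My first attempt would be to absorb this error into the definition of the shift: measure $\Sigma_n$ relative to $\hat P_{T_0}(i,j)$ rather than $P_0$. Under that definition the "cumulative preference shift" is precisely the drift from the learned baseline, and the baseline term vanishes identically. The bound is then $D_t\ge|\Sigma_n|-|M_n|>2\tau(\sqrt n+1)$, which leaves ample slack. If the lemma must be read relative to the true $P_0$ instead, I would restrict to $n\le n_0$-scale windows, or note that the constant $3$ leaves room for one extra $\tau(\sqrt n+1)$ only when $n\lesssim n_0$. In that case I would state the lemma under this reinterpretation and flag it.

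Finally, a union bound over rounds yields the stated failure probability $\delta/(K^2T^2)$ for the fixed pair.
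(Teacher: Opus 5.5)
Your proposal is correct and matches the paper's proof. The paper likewise reads $\gamma$ as $\tau$ and measures the cumulative shift against the locked baseline, $\sum_k (P_{\tau_k}(i,j)-\hat P_{T_0}(i,j))$, so the baseline-error term you flagged never appears. It then bounds the martingale $M_n=\sum_k (g_{\tau_k}-P_{\tau_k}(i,j))$ by $\tau(\sqrt n+1)$ using Freedman's inequality and concludes $D_n\ge 3\tau(\sqrt n+1)-\tau(\sqrt n+1)=2\tau(\sqrt n+1)$, which is exactly your route. Using Azuma--Hoeffding with range $1/\beta_m\le 2$ instead of Freedman is an inessential difference, and it comfortably yields the stated failure probability $\delta/(K^2T^2)$.
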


\begin{lemma}[Confidence Validity]
\label{lem:ucb_valid}
With probability at least $1 - \delta/4$, for all arms $i$ and rounds $t$:
\begin{equation}
    |\hat{b}_t(i) - b(i)| \leq \sqrt{\frac{2K\log(2K^2T/\delta)}{N_t(i)}}.
\end{equation}
\end{lemma}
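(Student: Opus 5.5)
The plan is to prove Lemma~\ref{lem:ucb_valid} by combining a per-arm concentration inequality with a union bound over arms and rounds. The starting point is how $\hat{b}_t(i)$ is built. During successive elimination, each pull of $x_t=i$ uses an opponent $y_t\sim\mathrm{Uniform}([K]\setminus\{i\})$. The associated transformed feedback $g_t=(o_t-\alpha_m)/\beta_m$ therefore satisfies
\[
\mathbb{E}[g_t\mid x_t=i]=\tfrac{1}{K-1}\sum_{j\neq i}P(i,j)=b(i).
\]
This uses the unbiasedness $\mathbb{E}[g_t\mid x_t,y_t]=P(x_t,y_t)$, which follows from Lemma~\ref{lem:feedback}. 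I will take $\hat{b}_t(i)$ to be the average of the $g_s$ over the $N_t(i)$ rounds $s\le t$ at which arm $i$ was the first arm and the opponent was drawn uniformly. Uniform-sampling rounds in which $x_s=i$ could also be included after conditioning on $x_s=i$, since the opponent is then uniform over $[K]$, but I will restrict to the elimination-stage samples.

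\textbf{Martingale step.} Index the $n$-th sample of arm $i$ by its stopping time $s_n$. The centered variables $Z_n:=g_{s_n}-b(i)$ form a martingale difference sequence with respect to the filtration generated by the history. Two facts make this work. First, the opponent is drawn fresh and independently of the past given $x_{s_n}=i$. Second, in the stochastic setting $P$ is fixed, and the decision to pull $i$ is predictable. Each increment is bounded: $|g_t|\le \max(\alpha_m,1-\alpha_m)/\beta_m$. Lemma~\ref{lem:beta_bound} gives $\beta_m\ge 1/2$ and hence $\alpha_m\le 1/4$, so $|g_t|\le 2$ and $|Z_n|\le 3$.

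\textbf{Concentration and union bound.} For a fixed count $n$, Azuma--Hoeffding gives
\[
\Pr\Bigl(\bigl|\tfrac1n\textstyle\sum_{k\le n}Z_k\bigr|>\epsilon\Bigr)\le 2\exp(-n\epsilon^2/c)
\]
with $c$ an absolute constant; $c=18$ comes from the range bound. Applying this at the deterministic counts $n=1,\dots,T$, rather than at the random time $t$, sidesteps the dependence between $N_t(i)$ and the samples. I then union bound over $K$ arms and $T$ values of $n$, setting the per-event failure probability to $\delta/(4KT)$. This yields a radius of order $\sqrt{\log(8KT/\delta)/n}$.

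\textbf{Main obstacle: matching the stated constant.} The stated radius is $\sqrt{2K\log(2K^2T/\delta)/N_t(i)}$, which carries an extra factor $K$ and a $K^2$ inside the logarithm. The $\log$ term is generous enough to absorb my union bound, since $2K^2T\ge 8KT$ when $K\ge 4$; for small $K$ the factor $2K\ge 4$ in front compensates. The dimensionless factor $K$ dominates the constant $c$ when $K\ge 9$. For small $K$ I would tighten the range argument, using $g_t\in[-\alpha_m/\beta_m,(1-\alpha_m)/\beta_m]$, whose width is $1/\beta_m\le 2$. Hoeffding's lemma then gives $c=2\cdot 2^2/2\cdot\ldots$, i.e.\ $\exp(-n\epsilon^2/2)$ for range $2$, so the radius becomes $\sqrt{2\log(\cdot)/n}\le\sqrt{2K\log(\cdot)/n}$. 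With this range argument the stated inequality holds for every $K\ge 2$.

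The conditioning in the martingale step is the subtle point. Elimination decisions depend on past $\hat b$ values, but that dependence only affects which stopping times occur, not the conditional law of each new sample. The optional-skipping argument, which resamples along the stopping times $s_n$, therefore justifies the Azuma application.
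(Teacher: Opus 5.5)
Your argument is correct for the estimator you define, but it takes a genuinely different route from the paper's.

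\textbf{The paper's route.} The paper takes $\hat b_t(i)=\frac{1}{K-1}\sum_{j\neq i}\hat P_t(i,j)$, the unweighted average of the pairwise estimates, which also contain the Stage-0 samples. It then proceeds in four steps:
\begin{enumerate}
\item It bounds the error by $\max_{j\neq i}|\hat P_t(i,j)-P(i,j)|$.
\item It applies Hoeffding to each pair, with a union bound over the $K-1$ opponents.
\item It converts $N_{\min}(i)=\min_{j\neq i}N_t(i,j)$ into $N_t(i)$ via a Chernoff claim $N_{\min}(i)\ge N_t(i)/(2K)$.
\item It peels over dyadic epochs.
\end{enumerate}
The factor $K$ in the radius comes entirely from the conversion in step 3. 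As written, the route delivers $\sqrt{20K\log(2K^2T/\delta)/N_t(i)}$ (the radius the algorithm actually uses) rather than the stated $2K$. The Chernoff event is also left outside the $\delta/4$ budget.

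\textbf{Your route.} You use the uniform-opponent design of Stage 1 to make $\hat b_t(i)$ a single sample mean whose conditional mean is $b(i)$ at every sample. One Azuma--Hoeffding bound per fixed count $n$ (increment range $1/\beta_m\le 2$), plus a union bound over $K$ arms and $n\le T$, then suffices. This gives the sharper radius $\sqrt{2\log(8KT/\delta)/N_t(i)}$, which implies the stated bound for every $K\ge 2$, so your case split on $K$ is unnecessary. It needs no control of the pairwise counts and no peeling.

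\textbf{What the paper's route buys.} In exchange, the paper reuses data. The $\hat P_t(i,j)$ are maintained anyway for the detection statistic $D_t(i,j)$. They can absorb Stage-0, monitoring and reversed-order samples without relying on how the opponent was drawn.

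\textbf{Caveat on the estimator.} You are proving the lemma for a different estimator than the one the paper analyzes. Your single-martingale argument does not transfer to the pairwise average, where each sample carries the random, history-dependent weight $1/((K-1)N_t(i,j))$. You should therefore state explicitly that $\hat b_t(i)$ and $N_t(i)$ in the algorithm, including inside $\mathrm{conf}_t$, are redefined as the mean and count over elimination-stage pulls only. This also means Stage-0 data no longer feed the confidence bounds.

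\textbf{Caveat on monitoring rounds.} Your aside about including monitoring rounds is wrong as stated. There $y_s$ is uniform over $[K]$, so $y_s=i$ occurs with probability $1/K$. In that case the multiset is $m$ copies of $i$, $o_s=1$ deterministically, and $g_s=(1-\alpha_m)/\beta_m$. The conditional mean given $x_s=i$ is therefore not $b(i)$ unless you discard the rounds with $y_s=i$. Since you do not actually use those samples, the main argument is unaffected.
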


\begin{lemma}[Elimination Sample Complexity]
\label{lem:suboptimal_pulls}
Conditioned on confidence validity, each suboptimal arm $j$ is eliminated from $\mathcal{C}$ after at most:
\begin{equation}
    N_T(j) \leq \frac{32K\log(2K^2T/\delta)}{(\Delta_j^{\mathrm{B}})^2} + 1 \text{ pulls.}
\end{equation}
\end{lemma}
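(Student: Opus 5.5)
Throughout, I assume that the confidence-validity event of Lemma~\ref{lem:ucb_valid} holds, so that for every arm $i$ and round $t$ we have $|\hat b_t(i)-b(i)|\le c_t(i):=\sqrt{2K\log(2K^2T/\delta)/N_t(i)}$. The elimination rule uses the wider radius $\mathrm{conf}_t(i)=\sqrt{20K\log(2K^2T/\delta)/N_t(i)}=\sqrt{10}\,c_t(i)$, so on this event every confidence interval used by the algorithm contains the true Borda score. The proof then has two parts. First, the Borda winner $i^\star$ is never eliminated. Second, once $N_t(j)$ is large enough relative to $(\Delta_j^{\mathrm{B}})^{-2}$, the elimination test fires against $j$.

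For the first part, suppose $i^\star$ were eliminated by some $i\in\mathcal{C}$ at round $t$. The elimination condition would give
\[
b(i)\ \ge\ \hat b_t(i)-\mathrm{conf}_t(i)\ >\ \hat b_t(i^\star)+\mathrm{conf}_t(i^\star)\ \ge\ b(i^\star),
\]
which contradicts the optimality of $i^\star$. Hence $i^\star\in\mathcal{C}$ at all times, and it can serve as the eliminating arm for any suboptimal $j$.

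For the second part, the test $\hat b_t(i^\star)-\mathrm{conf}_t(i^\star)>\hat b_t(j)+\mathrm{conf}_t(j)$ is implied by
\[
b(i^\star)-2\,\mathrm{conf}_t(i^\star)\ >\ b(j)+2\,\mathrm{conf}_t(j),
\]
that is, by $2\,\mathrm{conf}_t(i^\star)+2\,\mathrm{conf}_t(j)<\Delta_j^{\mathrm{B}}$. Because $x_t$ is chosen round-robin over $\mathcal{C}$, all active arms have pull counts that differ by at most one; I will use this to take $N_t(i^\star)\ge N_t(j)-1$, or essentially equal. The condition then reduces to roughly $4\,\mathrm{conf}_t(j)<\Delta_j^{\mathrm{B}}$, i.e. $N_t(j)>16\cdot 20K\log(\cdot)/(\Delta_j^{\mathrm{B}})^2$. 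The moment this first holds, $j$ is eliminated, so $N_T(j)$ exceeds that threshold by at most one pull.

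The main obstacle is matching the stated constant $32$. With the literal $\mathrm{conf}$ radius and a factor $4$ on the confidence widths, the argument gives $320$ instead. I would try to tighten it in two ways. The first is to use the actual deviation radius $c_t$ rather than $\mathrm{conf}_t$ in the implication step. That step only requires $\hat b$ to be within $c_t$ of $b$, which yields a sufficient condition of the form $c_t(i^\star)+\mathrm{conf}_t(i^\star)+c_t(j)+\mathrm{conf}_t(j)<\Delta_j$. The second is to count only the round-robin elimination pulls. Should the constant still differ, I would state the bound as $O\!\left(K\log(2K^2T/\delta)/(\Delta_j^{\mathrm{B}})^2\right)+1$ and note that the downstream use in Theorem~\ref{thm:stochastic_formal} depends only on the order.

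A secondary subtlety is that $N_t(i^\star)$ may lag $N_t(j)$ by one, and that background uniform samples may also increment the counts. I would handle this by defining $N_t$ through the elimination pulls only. Round-robin then keeps all active counts within one of each other, and the discrepancy is absorbed into the additive $+1$.
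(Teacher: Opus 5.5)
Your argument is the paper's argument: $i^\star$ is never eliminated because its interval always contains $b(i^\star)$, elimination of $j$ is forced once $2\,\mathrm{conf}_t(i^\star)+2\,\mathrm{conf}_t(j)<\Delta_j^{\mathrm{B}}$, and round-robin symmetry reduces this to $4\,\mathrm{conf}_t(j)\le\Delta_j^{\mathrm{B}}$. The paper's proof stops exactly where yours does, at $N_T(j)\le 320K\log(2K^2T/\delta)/(\Delta_j^{\mathrm{B}})^2+1$. The $32$ in the statement is what that same step gives if you use the $2K$-radius of Lemma~\ref{lem:ucb_valid} instead of the algorithm's $20K$-radius $\mathrm{conf}_t$. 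So the mismatch you flagged is an error in the statement, not in your reasoning.

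Do not try to recover $32$, because the algorithm as written cannot achieve it. Your first tightening gives the sufficient condition $(1+\sqrt{10})\bigl(c_t(i^\star)+c_t(j)\bigr)<\Delta_j^{\mathrm{B}}$, which corresponds to a constant $8(1+\sqrt{10})^2\approx 139$. Your second idea, counting only round-robin pulls, changes no constants.

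More decisively, the bound fails even in the most favorable case. Write $L=\log(2K^2T/\delta)$ and take all active counts equal to $N$, the symmetry that you and the paper both use. On the stronger event of Lemma~\ref{lem:ucb_valid}, $\hat b_t(i)-\hat b_t(j)\le\Delta_j^{\mathrm{B}}+2\sqrt{2KL/N}$ for every $i$. The elimination test, however, requires this difference to exceed $2\sqrt{20KL/N}$. Hence $j$ cannot be eliminated before $N>4(\sqrt{20}-\sqrt{2})^2KL/(\Delta_j^{\mathrm{B}})^2\approx 37.4\,KL/(\Delta_j^{\mathrm{B}})^2$, which already exceeds $32KL/(\Delta_j^{\mathrm{B}})^2+1$.

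Your fallback is the right resolution. State the bound with $320$, or in the form $O\bigl(K\log(2K^2T/\delta)/(\Delta_j^{\mathrm{B}})^2\bigr)$, which is all Theorem~\ref{thm:stochastic_formal} uses. If you carry the lag $N_t(i^\star)\ge N_t(j)-1$ explicitly, the additive term becomes $+2$ rather than $+1$.
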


\begin{lemma}[EXP3 Regret in Adversarial Mode]
\label{lem:exp3_regret}
Starting from any round $t_0$ with fresh initialization, the EXP3 algorithm with explicit exploration parameters $\eta = \Theta(T^{-2/3})$ and $\gamma = \Theta(T^{-1/3})$ achieves:
\begin{equation}
    \mathbb{E}\left[\sum_{t=t_0}^{T} \left(s_t(i^\star) - 
    \frac{s_t(x_t) + s_t(y_t)}{2}\right)\right] 
    \leq O\left(K^{1/3}(T - t_0 + 1)^{2/3} (\log K)^{1/3}\right).
\end{equation}
\end{lemma}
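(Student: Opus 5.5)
The plan is to adapt the standard EXP3-with-explicit-exploration analysis (Hedge on importance-weighted gains, mixed with uniform exploration) to the shifted Borda scores $s_t(i)=\frac1K\sum_j P_t(i,j)$. By the oblivious-adversary assumption and the fresh initialization at $t_0$, we may re-index so that $t_0=1$ and the horizon is $T' = T-t_0+1$. Let $p_t$ denote the Gibbs part of $q_t$ in \eqref{eq:exp3_mixture}, so that $q_t=(1-\gamma)p_t+\gamma/K$.

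\textbf{Step 1: reduce the regret to that of the sampling distribution.} Since $x_t,y_t\sim q_t$ independently, we have $\mathbb{E}[(s_t(x_t)+s_t(y_t))/2\mid q_t]=\sum_i q_t(i)s_t(i)$. It therefore suffices to bound $\mathbb{E}\sum_t (s_t(i^\star)-\langle q_t,s_t\rangle)$. Writing $\langle q_t,s_t\rangle = (1-\gamma)\langle p_t,s_t\rangle+\gamma\langle \mathbf 1/K,s_t\rangle$ shows that the mixing costs at most $\gamma T'$, because $s_t\in[0,1]$.

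\textbf{Step 2: run the Hedge potential argument on $p_t$ with the estimates $\hat s_t$.} Lemma~\ref{lem:iw_unbiased} gives $\mathbb{E}[\hat s_t(i)\mid q_t]=s_t(i)$. The standard log-sum-exp potential bound for nonnegative gains then yields
\[
\sum_t \hat s_t(i^\star)-\sum_t\langle p_t,\hat s_t\rangle \le \frac{\log K}{\eta}+\eta\sum_t\langle p_t,\hat s_t^2\rangle,
\]
which requires $\eta\hat s_t(i)\le 1$. The second moment is controlled as follows. Since $|g_t|\le 1/\beta_m\le 2$ by Lemma~\ref{lem:beta_bound}, we get $\hat s_t(i)\le \frac{2}{K q_t(i)q_t(y_t)}\le \frac{2K}{\gamma^2}$. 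Hence $\eta\hat s_t\le1$ holds whenever $\eta\le \gamma^2/(2K)$, which is consistent with $\eta\asymp T^{-2/3}$ and $\gamma\asymp T^{-1/3}$ up to $K$-factors. For the variance term, I would compute
\[
\mathbb{E}\bigl[\langle p_t,\hat s_t^2\rangle\mid q_t\bigr]\le \sum_i p_t(i)\sum_{y}\frac{4}{K^2 q_t(i)q_t(y)} \le \frac{4}{K^2(1-\gamma)}\cdot\frac{K\cdot K}{\gamma},
\]
using $p_t(i)/q_t(i)\le 1/(1-\gamma)$ and $q_t(y)\ge\gamma/K$. This gives $O(1/\gamma)$ per round. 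The negative part of $g_t$ complicates the nonnegativity assumption, so I would either shift the gains to $g_t+\alpha_m/\beta_m\ge0$, which changes all arms' estimates equally in expectation, or use the two-sided exponential-weights bound $e^{x}\le 1+x+x^2$ for $|x|\le1$.

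\textbf{Step 3: take expectations and tune.} Using unbiasedness, the three contributions combine to
\[
\text{regret}\lesssim \gamma T' + \frac{\log K}{\eta}+\frac{\eta T'}{\gamma}.
\]
Balancing the terms with $\gamma\asymp (K\log K/T')^{1/3}$ and $\eta$ chosen accordingly, subject to the constraint $\eta\le\gamma^2/(2K)$, yields $O(K^{1/3}T'^{2/3}(\log K)^{1/3})$. For this to match the claimed form, the tuning has to be with respect to $T'$, which is at most $T$, or alternatively with respect to $T$ with the bound monotone.

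\textbf{Main obstacle.} The delicate step is the variance and range bound. The per-round variance must come out as $O(1/\gamma)$ rather than $O(K/\gamma)$, and the parameter constraint $\eta\hat s\le 1$ must be compatible with the balancing, so that the powers of $K$ come out as $K^{1/3}$. If the naive bound loses a factor of $K$, I would bound $\mathbb{E}\hat s_t(i)^2$ more carefully by exploiting the $1/K$ prefactor together with the fact that only the single term $j=y_t$ survives. I would follow \citet{gajane2024adversarial}'s calculation, with $1/\beta_m^2\le4$ as the only extra constant.
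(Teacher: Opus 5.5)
Your route is the paper's. Its proof of Lemma~\ref{lem:exp3_regret} just invokes the EXP3-with-explicit-exploration analysis of Theorem~1 in \citet{gajane2024adversarial}, noting only that the second moment of $\hat s_t$ is governed by $\sum_j 1/q_t(j)$ and that $q_t(j)\ge\gamma/K$; your Steps~1--3 spell that argument out. There is, however, an arithmetic slip in your second-moment bound. Using $\sum_i p_t(i)/q_t(i)\le K/(1-\gamma)$ and $\sum_y 1/q_t(y)\le K^2/\gamma$, one gets
\[
\mathbb{E}\bigl[\langle p_t,\hat s_t^2\rangle\mid q_t\bigr]\le\frac{4}{K^2}\cdot\frac{K}{1-\gamma}\cdot\frac{K^2}{\gamma}=\frac{4K}{(1-\gamma)\gamma}.
\]
This cannot be sharpened to $O(1/\gamma)$ in general: with $p_t$ uniform on half of the arms and $P_t\equiv\frac12$ off the diagonal, it is of order $K/\gamma$. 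The slip is harmless, and the ``main obstacle'' you describe is not an obstacle. Your range condition already forces $\eta\le\gamma^2/(2K)$. With $\eta=\gamma^2/(2K)$, the corrected second-order term is $\eta\cdot 4KT'/((1-\gamma)\gamma)=O(\gamma T')$, so the total is $O(\gamma T'+K\log K/\gamma^2)$, and $\gamma\asymp(K\log K/T')^{1/3}$ gives $O((K\log K)^{1/3}T'^{2/3})$. The factor $K$ in the second moment is exactly where the $K^{1/3}$ comes from, consistent with the $\Omega(K^{1/3}T^{2/3})$ lower bound the paper cites. So there is nothing to sharpen, and your proposed refinement could not succeed anyway.

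Two further remarks. First, your observation about tuning is correct, and the paper's proof skips it. With $\eta=\Theta(T^{-2/3})$, the term $\log K/\eta$ is of order $T^{2/3}$ however small $T'$ is, so the $T'^{2/3}$ form needs $\eta,\gamma$ re-tuned to $T-t_0+1$ at the switch, which is known at that time. Tuning to $T$ only yields a $T^{2/3}$ bound, though that is all Theorem~\ref{thm:adversarial_formal} uses. Second, Lemma~\ref{lem:iw_unbiased}, which you invoke, is not exact on the diagonal. If $x_t=y_t$, the multiset contains a single arm, so $o_t=1$ deterministically and $g_t=(1-\alpha_m)/\beta_m\neq\frac12$. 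This adds the same constant $\frac1K\bigl(\frac{1-\alpha_m}{\beta_m}-\frac12\bigr)$ to $\mathbb{E}[\hat s_t(i)\mid q_t]$ for every $i$. That constant cancels in the expectation of $\hat s_t(i^\star)-\langle p_t,\hat s_t\rangle$, and $|g_t|\le 2$ still holds, so your Step~2 goes through unchanged.
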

\section{Extensions}
\label{sec:extensions}

Our algorithmic framework flexibly accommodates several practical extensions. 

\textbf{Time-Varying Subset Sizes:} Theorems~\ref{thm:main},~\ref{thm:stochastic_formal}, and~\ref{thm:adversarial_formal} extend immediately to the setting where the subset size $m_t \in \{2,\ldots,K\}$ varies arbitrarily across rounds. Because the rescaling factor $\beta_{m_t}$ depends only on the current $m_t$ and the feedback transformations remain unbiased pointwise, the regret bounds carry through with identical constants. Formal statements and proofs are deferred to Appendix~\ref{app:extensions}.

\textbf{Contextual, Batched, and High-Probability Settings:} The black-box nature of the \texttt{MetaDueling} reduction allows it to effortlessly inherit capabilities for contexts, batched feedback, and high-probability bounds simply by swapping the base algorithm (e.g., \texttt{Versatile-DB}) for a variant designed for those settings. For the Borda setting, extending \AlgBorda\ to contextual or high-probability domains requires modifying the global Borda score estimators and concentration thresholds, which presents an interesting direction for future work.
\section{Discussion and Conclusion}
\label{sec:discussion}

We have presented the first best-of-both-worlds algorithms for multi-dueling 
bandits, addressing both Condorcet and Borda objectives. 



The $O(K^2 \log KT)$ exploration phase in \AlgBorda \ is required to 
learn all pairwise preferences. This overhead dominates for small $T$ or 
large $K$. Structured preference models (e.g., low-rank, parametric) could 
potentially reduce this overhead.
Both algorithms construct multisets containing only two distinct arms, 
potentially wasting the ability to compare $m > 2$ arms simultaneously. 
A more sophisticated reduction might extract more information from 
multi-way comparisons, though this would likely sacrifice the black-box 
property.Our bounds are in expectation (pseudo-regret). As discussed in 
Remark~\ref{rem:pseudo_regret}, simultaneously optimal high-probability bounds 
in both regimes is impossible, but regime-specific high-probability 
guarantees remain open.


Our work opens several directions for future research. First, can we achieve best-of-both-worlds guarantees without assuming a Condorcet winner, requiring a regret notion that handles cyclic preferences while remaining sublinear in the adversarial case? Second, can multi-way comparisons ($m>2$) be exploited more efficiently, potentially leveraging all $\binom{m}{2}$ pairwise comparisons to accelerate learning beyond the two-arm multiset approach? Third, can we compete with time-varying benchmarks, such as sequences of changing Condorcet winners, yielding dynamic regret bounds that scale with the number of changes or path length? Fourth, can structure in the preference matrix (e.g., low rank, Plackett-Luce, or feature-based models) be leveraged to reduce exploration or regret, particularly in the Borda setting with its $O(K^2)$ overhead? Finally, can our techniques extend beyond winner-only feedback to richer feedback models (rankings or pairwise margins) or sparser observations (e.g., only whether the top choice won)?

In conclusion, this paper provides the first comprehensive treatment of best-of-both-worlds guarantees for multi-dueling bandits under Condorcet and Borda objectives. Our \texttt{MetaDueling} reduction demonstrates that optimal Condorcet regret is achievable via a simple, 
modular transformation of existing dueling bandit algorithms. Our 
\AlgBorda \ algorithm shows that gap-dependent stochastic rates 
and sublinear adversarial rates are simultaneously achievable for Borda 
objectives, despite the additional challenges of Borda score estimation. We complement our upper bounds with matching lower bounds for the Condorcet setting. For the Borda setting, our upper bounds are near-optimal with respect to the lower bounds (within a factor of $K$) and match the best-known results in the literature.
The modularity of our approach, particularly the black-box nature of 
\texttt{MetaDueling} ensures that future advances in dueling bandits 
will automatically benefit multi-dueling applications.

\bibliography{main}
\bibliographystyle{rlj}

\appendix

\section{Extended Related Work}
\label{app:related_work}

\paragraph{Dueling Bandits.}
The dueling bandit problem was introduced by \citet{yue2009interactively} for information retrieval applications. Early work focused on stochastic settings with Condorcet winners: \citet{yue2012k} proposed the Interleaved Filter algorithm, \citet{zoghi2014relative} introduced RUCB with improved regret bounds, and \citet{komiyama2015regret} established instance-optimal rates. Alternative notions of optimality have also been studied: Copeland winners \citep{zoghi2015copeland,komiyama2016copeland}, Borda winners \citep{jamieson2015sparse,falahatgar2017maxing}, and von Neumann winners \citep{dudik2015contextual}. \citet{saha2021adversarial} initiated the study of adversarial dueling bandits, achieving $O(K^{1/3}T^{2/3})$ regret. Most relevant to our work, \citet{saha2022versatile} achieved best-of-both-worlds guarantees for dueling bandits using Tsallis entropy regularization, obtaining $O(\sqrt{KT})$ adversarial regret and instance-optimal $O(\sum_{i \neq a^\star} \log T / \Delta_i)$ stochastic regret simultaneously. Our \texttt{MetaDueling} reduction leverages their \texttt{Versatile-DB} algorithm as a black-box.

\paragraph{Multi-Dueling Bandits.}
Multi-dueling bandits extend dueling bandits to settings where $m \geq 2$ arms are compared simultaneously. \citet{brost2016multi} introduced the problem with subset-wise feedback, where the learner observes preferences among all pairs in the selected subset. \citet{saha2018battle} studied the more challenging \emph{winner feedback} model, where only the identity of the winning arm is revealed, under stochastic preferences with Condorcet winners, achieving $O(K \log T / \Delta)$ regret. \citet{agarwal2020choice} extended this to generalized Condorcet winners under multinomial logit choice models. \citet{saha2019pac} studied PAC identification in multi-dueling bandits. \citet{gajane2024adversarial} introduced adversarial multi-dueling bandits with Borda objectives, achieving $O((K \log K)^{1/3} T^{2/3})$ regret. 

Our work bridges these lines: for the Condorcet setting, we provide the first best-of-both-worlds guarantee; for the Borda setting, we improve upon purely adversarial approaches by achieving gap-dependent stochastic regret while maintaining sublinear adversarial regret.

\paragraph{Best-of-Both-Worlds Bandits.}
The quest for algorithms that achieve optimal rates in both stochastic and adversarial settings without prior knowledge of the regime has a rich history. \citet{bubeck2012best} first achieved best-of-both-worlds guarantees for multi-armed bandits, though with suboptimal constants. \citet{seldin2014one} and \citet{pmlr-v49-auer16} refined these results. The breakthrough came with \citet{zimmert2019optimal,zimmert2022tsallis}, who showed that Tsallis entropy regularization (specifically, the Tsallis-INF algorithm) achieves optimal $O(\sqrt{KT})$ adversarial regret and $O(\sum_i \log T / \Delta_i)$ stochastic regret simultaneously. This approach has since been extended to linear bandits \citep{lee2021achieving}, combinatorial bandits \citep{zimmert2019beating}, and dueling bandits \citep{saha2022versatile}.

Our work extends this line to multi-dueling bandits. For the Condorcet setting, we achieve this via a black-box reduction that preserves the best-of-both-worlds property of the base algorithm. For the Borda setting, we adopt a different approach based on the stochastic-and-adversarial paradigm.

\paragraph{Stochastic-and-Adversarial Learning.}
An alternative approach to best-of-both-worlds guarantees is the stochastic-and-adversarial paradigm, introduced by \citet{pmlr-v49-auer16}. Rather than designing a single algorithm that simultaneously adapts to both regimes, this approach begins by assuming a stochastic environment and monitors for deviations; upon detecting adversarial behavior, the algorithm switches (potentially irreversibly) to a robust adversarial strategy. This paradigm has been applied to multi-armed bandits \citep{pmlr-v49-auer16}, contextual bandits \citep{luo2018efficient}, and reinforcement learning \citep{jin2021best}.

Our \AlgBorda \ algorithm for the Borda setting adopts this paradigm. The key challenge is designing a deviation detection mechanism that (i) never triggers false alarms in stochastic environments, ensuring gap-dependent regret, and (ii) detects adversarial shifts before they accumulate excessive regret. We achieve this using martingale concentration bounds calibrated to the multi-dueling feedback structure.

\paragraph{Choice Models and Preference Learning.}
The pairwise-subset choice model, in which our results hold, follows \citet{saha2018battle}, where the probability of each arm winning is determined by averaging pairwise preferences. This generalizes the Bradley-Terry-Luce model \citep{bradley1952rank,luce1959individual} and is related to Plackett-Luce models for ranking \citep{plackett1975analysis}. Alternative choice models include multinomial logit \citep{agarwal2020choice} and general random utility models \citep{10.5555/2999134.2999149}.

\section{Relationship Between Condorcet and Borda Settings}
\label{app:prelim:relationship}

The Condorcet and Borda criteria capture fundamentally different aspects of preference learning. A Condorcet winner $a^\star$ satisfies $P(a^\star, i) \geq 1/2$ for all $i \in [K]$. A Borda winner $i^\star$ maximizes $b(i) = \frac{1}{K-1} \sum_{j \neq i} P(i,j)$. 

In arbitrary preference matrices, these two notions of optimality need not coincide. A Condorcet winner may secure narrow pairwise victories against all opponents, while another arm might lose narrowly to the Condorcet winner but dominate all other arms by a massive margin, thereby achieving a higher Borda score. 

The two criteria are guaranteed to align only under structural assumptions on the preference matrix, such as the Bradley-Terry-Luce (BTL) model or Strong Stochastic Transitivity (SST).

\begin{proposition}[Condorcet and Borda Equivalence under SST]
\label{prop:condorcet_borda_sst}
Assume the preference matrix $P$ satisfies Strong Stochastic Transitivity (SST): for any $i, j, k \in [K]$, if $P(i, j) \geq 1/2$ and $P(j, k) \geq 1/2$, then $P(i, k) \geq \max\{P(i, j), P(j, k)\}$. If $a^\star$ is a Condorcet winner, then $a^\star$ is also Borda-optimal.
\end{proposition}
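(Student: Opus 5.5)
The plan is to show that, for every arm $i \neq a^\star$, we have $b(a^\star) \geq b(i)$. Borda scores use sums over $j \neq i$, so it is cleanest to first pass to the unnormalized sums including self-comparison. Since $P(i,i) = 1/2$ for every $i$, we have $(K-1)\,b(i) + 1/2 = \sum_{j \in [K]} P(i,j)$. It therefore suffices to show
\[
\sum_{j\in[K]} P(a^\star,j) \;\geq\; \sum_{j\in[K]} P(i,j).
\]
We will prove this termwise: $P(a^\star,j) \geq P(i,j)$ for every $j \in [K]$, for any fixed $i \neq a^\star$.

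Fix $i$. The Condorcet property gives $P(a^\star,i) \geq 1/2$. We split the termwise claim into cases on $j$.

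\emph{Case $j = a^\star$.} Here $P(a^\star,a^\star) = 1/2$ and $P(i,a^\star) = 1 - P(a^\star,i) \leq 1/2$ by reciprocity, so the claim holds.

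\emph{Case $j = i$.} Here $P(a^\star,i) \geq 1/2 = P(i,i)$.

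\emph{Case $j \notin \{i, a^\star\}$ with $P(i,j) \geq 1/2$.} SST applied to the chain $a^\star \succeq i \succeq j$ gives $P(a^\star,j) \geq \max\{P(a^\star,i), P(i,j)\} \geq P(i,j)$.

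\emph{Case $j \notin \{i, a^\star\}$ with $P(i,j) < 1/2$.} Since $a^\star$ is a Condorcet winner, $P(a^\star,j) \geq 1/2 > P(i,j)$.

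Summing over $j$ yields the inequality of sums, hence $b(a^\star) \geq b(i)$, so $a^\star$ maximizes the Borda score.

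The only delicate point is the third case, where SST must be applied with the correct orientation: the hypotheses are $P(a^\star,i) \geq 1/2$ and $P(i,j) \geq 1/2$, and the conclusion concerns $P(a^\star,j)$. The boundary case $j = a^\star$ is handled separately rather than through SST, because SST is stated for triples that may coincide, and this avoids any ambiguity there. No earlier lemma of the paper is needed beyond reciprocity and self-comparison from the preliminaries.
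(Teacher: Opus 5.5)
Your proof is correct and follows essentially the same route as the paper: a termwise comparison $P(a^\star,j) \geq P(i,j)$ for $j \notin \{i,a^\star\}$, using SST when $P(i,j) \geq 1/2$ and the Condorcet property when $P(i,j) < 1/2$. The only difference is bookkeeping: by adding the self-comparison terms, you split the paper's leftover term $P(a^\star,i) - P(i,a^\star) = 2P(a^\star,i) - 1 \geq 0$ into your two boundary cases $j = a^\star$ and $j = i$.
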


\begin{proof}
Let $a^\star$ be the Condorcet winner, meaning $P(a^\star, i) \geq 1/2$ for all $i$. Let $i$ be any other arm. We compare their Borda scores:
\begin{align*}
b(a^\star) - b(i) &= \frac{1}{K-1}\sum_{j \neq a^\star} P(a^\star, j) - \frac{1}{K-1}\sum_{j \neq i} P(i, j) \\
&= \frac{1}{K-1}\left[\sum_{j \neq a^\star, i} \bigl(P(a^\star, j) - P(i, j)\bigr) + P(a^\star, i) - P(i, a^\star)\right].
\end{align*}
For any arm $j \neq a^\star, i$, there are two cases:
\begin{itemize}[leftmargin=*, itemsep=1pt]
    \item \textbf{Case 1: $P(i, j) \geq 1/2$.} By the SST assumption, since $P(a^\star, i) \geq 1/2$ and $P(i, j) \geq 1/2$, it follows that $P(a^\star, j) \geq P(i, j)$. Thus, $P(a^\star, j) - P(i, j) \geq 0$.
    \item \textbf{Case 2: $P(i, j) < 1/2$.} Because $a^\star$ is the Condorcet winner, $P(a^\star, j) \geq 1/2$. Therefore, $P(a^\star, j) - P(i, j) > 0$.
\end{itemize}
In all cases, $P(a^\star, j) \geq P(i, j)$. Furthermore, since $P(a^\star, i) \geq 1/2$, we have $P(a^\star, i) - P(i, a^\star) = 2P(a^\star, i) - 1 \geq 0$. 

Summing these non-negative terms yields $b(a^\star) - b(i) \geq 0$, proving that $a^\star$ maximizes the Borda score.
\end{proof}

Because such transitivity cannot be guaranteed in adversarial environments or complex user-preference datasets (where cyclic preferences naturally arise), the Borda objective remains the most robust benchmark for general multi-dueling problems.

\section{Validity of Rescaled Preferences}
\label{app:algorithm:validity}

We now verify that the rescaled preference matrix $\widehat{P}_t$ is a valid 
preference matrix and preserves the Condorcet winner.

\begin{lemma}[Validity of Rescaled Preferences]
\label{lem:valid_rescaled}
The rescaled matrix $\widehat{P}_t$ satisfies:
\begin{enumerate}[label=(\roman*), leftmargin=*, itemsep=2pt]
    \item \textbf{Reciprocity:} $\widehat{P}_t(i,j) + \widehat{P}_t(j,i) = 1$ for all $i,j \in [K]$.
    \item \textbf{Self-comparison:} $\widehat{P}_t(i,i) = \frac{1}{2}$ for all $i \in [K]$.
    \item \textbf{Boundedness:} $\widehat{P}_t(i,j) \in [0,1]$ for all $i,j \in [K]$.
    \item \textbf{Condorcet preservation:} If $a^\star$ is a Condorcet winner under $P_t$, 
          then $a^\star$ is a Condorcet winner under $\widehat{P}_t$.
\end{enumerate}
\end{lemma}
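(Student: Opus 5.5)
All four properties follow from the affine form $\widehat{P}_t(i,j)=\alpha_m+\beta_m P_t(i,j)$ together with the single identity $2\alpha_m+\beta_m=1$, which holds because $\alpha_m=(1-\beta_m)/2$ (Definition~\ref{def:rescaling}). The plan is to verify the four items in the order listed, each as a short computation, after first recording the range of $\beta_m$.

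\textbf{Reciprocity and self-comparison.} For reciprocity, summing the two entries gives
\[
\widehat{P}_t(i,j)+\widehat{P}_t(j,i)=2\alpha_m+\beta_m\bigl(P_t(i,j)+P_t(j,i)\bigr)=2\alpha_m+\beta_m=1,
\]
using reciprocity of $P_t$. For self-comparison, $\widehat{P}_t(i,i)=\alpha_m+\beta_m/2=\tfrac12(2\alpha_m+\beta_m)=\tfrac12$.

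\textbf{Boundedness.} First I check $0<\beta_m\le 1$ directly from the definition.
\begin{itemize}
\item For even $m\ge 2$: $\beta_m=m/(2(m-1))\le 1$ is equivalent to $m\ge 2$.
\item For odd $m\ge 3$: $\beta_m=(m+1)/(2m)\le 1$ is equivalent to $m\ge 1$.
\end{itemize}
Positivity is clear in both cases, and Lemma~\ref{lem:beta_bound} gives $\beta_m\ge\tfrac12$ anyway. Hence $\alpha_m\in[0,\tfrac14]$. Since $P_t(i,j)\in[0,1]$ and $\beta_m>0$, the map $p\mapsto\alpha_m+\beta_m p$ is increasing. Its values therefore lie in $[\alpha_m,\alpha_m+\beta_m]$, and $\alpha_m+\beta_m=1-\alpha_m\le 1$ while $\alpha_m\ge 0$, so $\widehat{P}_t(i,j)\in[0,1]$.

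\textbf{Condorcet preservation.} Again by $2\alpha_m+\beta_m=1$,
\[
\widehat{P}_t(a^\star,i)-\tfrac12=\beta_m\bigl(P_t(a^\star,i)-\tfrac12\bigr).
\]
This is exactly the identity in Lemma~\ref{lem:gap_rescaling}, but it is derived here directly to avoid circularity. Because $\beta_m>0$, the condition $P_t(a^\star,i)\ge\tfrac12$ implies $\widehat{P}_t(a^\star,i)\ge\tfrac12$ for every $i$ and $t$, so $a^\star$ remains a Condorcet winner under $\widehat{P}_t$. The sign-preserving factor $\beta_m>0$ also gives strict gaps and the gap ordering, so $\Delta_{\min}>0$ is preserved as $\beta_m\Delta_{\min}$.

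\textbf{Main obstacle.} There is no real difficulty; the only step needing care is boundedness, because it requires $\beta_m\le 1$ (equivalently $\alpha_m\ge0$) in both parity cases. The case $m=2$ is the edge case with $\beta_2=1$, $\alpha_2=0$, where $\widehat{P}_t=P_t$, consistent with Remark~\ref{rem:m_equals_2}.
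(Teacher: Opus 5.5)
Your proposal is correct and follows the paper's proof essentially line for line: each item is a direct computation from $\widehat{P}_t(i,j)=\alpha_m+\beta_m P_t(i,j)$ together with $2\alpha_m+\beta_m=1$. Boundedness comes from the range $[\alpha_m,\alpha_m+\beta_m]=[\alpha_m,1-\alpha_m]$, and Condorcet preservation from $\beta_m>0$. Your parity-case check that $\beta_m\le 1$ (hence $\alpha_m\ge 0$) is a small gain in rigor, since the paper asserts $\alpha_m\ge 0$ without verifying it.
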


\begin{proof}
\textit{(i) Reciprocity:} By linearity of the rescaling:
\begin{align*}
\widehat{P}_t(i,j) + \widehat{P}_t(j,i) 
&= 2\alpha_m + \beta_m(P_t(i,j) + P_t(j,i)) \\
&= 2\alpha_m + \beta_m = (1 - \beta_m) + \beta_m = 1,
\end{align*}
where we used $\alpha_m = (1-\beta_m)/2$ and $P_t(i,j) + P_t(j,i) = 1$.

\textit{(ii) Self-comparison:}
$\widehat{P}_t(i,i) = \alpha_m + \beta_m \cdot \frac{1}{2} 
= \frac{1-\beta_m}{2} + \frac{\beta_m}{2} = \frac{1}{2}$.

\textit{(iii) Boundedness:} Since $P_t(i,j) \in [0,1]$, $\alpha_m \geq 0$, and $\beta_m > 0$:
\[
\widehat{P}_t(i,j) \in [\alpha_m, \alpha_m + \beta_m] = [\alpha_m, 1 - \alpha_m] \subseteq [0,1].
\]

\textit{(iv) Condorcet preservation:} If $P_t(a^\star, i) \geq \frac{1}{2}$ for all $i$, then:
\[
\widehat{P}_t(a^\star, i) = \alpha_m + \beta_m P_t(a^\star, i) 
\geq \alpha_m + \frac{\beta_m}{2} = \frac{1}{2}. \qedhere
\]
\end{proof}

\section{Proofs for Condorcet Setting}
\label{app:condorcet}

\subsection{Proof of Lemma~\ref{lem:feedback} (Unbiased Feedback)}
\label{app:feedback}

\begin{proof}
We prove that $\mathbb{E}[o_t \mid x_t, y_t] = \widehat{P}_t(x_t, y_t) = \alpha_m + \beta_m P_t(x_t, y_t)$.

Let $n_x$ denote the number of copies of $x_t$ in $\mathcal{A}_t$ and 
$n_y = m - n_x$ the number of copies of $y_t$. By the pairwise-subset choice 
model (Definition~\ref{def:pscm}), we compute the probability that some copy 
of $x_t$ wins.

The multiset $\mathcal{A}_t$ contains $n_x$ copies of $x_t$ at positions 
$1, \ldots, n_x$ and $n_y$ copies of $y_t$ at positions $n_x + 1, \ldots, m$. 
The winning probability for position $i \in [n_x]$ (a copy of $x_t$) is:
\begin{equation}
W(i \mid \mathcal{A}_t, P_t) = \sum_{j \neq i} \frac{2P_t(\mathcal{A}_t(i), \mathcal{A}_t(j))}{m(m-1)}.
\end{equation}

For position $i \leq n_x$:
\begin{itemize}
    \item Comparisons against other copies of $x_t$ (positions $j \leq n_x$, $j \neq i$): 
    There are $n_x - 1$ such positions, each contributing $\frac{2 \cdot \frac{1}{2}}{m(m-1)} = \frac{1}{m(m-1)}$.
    \item Comparisons against copies of $y_t$ (positions $j > n_x$): 
    There are $n_y$ such positions, each contributing $\frac{2P_t(x_t, y_t)}{m(m-1)}$.
\end{itemize}

Thus:
\begin{equation}
W(i \mid \mathcal{A}_t, P_t) = \frac{n_x - 1}{m(m-1)} + \frac{2n_y P_t(x_t, y_t)}{m(m-1)}
\quad \text{for } i \leq n_x.
\end{equation}

The total probability that some copy of $x_t$ wins is:
\begin{align}
\mathbb{P}[o_t = 1 \mid x_t, y_t, n_x] 
&= \sum_{i=1}^{n_x} W(i \mid \mathcal{A}_t, P_t) \nonumber \\
&= n_x \left(\frac{n_x - 1}{m(m-1)} + \frac{2n_y P_t(x_t, y_t)}{m(m-1)}\right) \nonumber \\
&= \frac{n_x(n_x - 1)}{m(m-1)} + \frac{2n_x n_y P_t(x_t, y_t)}{m(m-1)}.
\label{eq:raw_win_prob}
\end{align}

\textbf{Case 1: $m$ even.}

When $m$ is even, $n_x = n_y = m/2$ deterministically. Substituting into~\eqref{eq:raw_win_prob}:
\begin{align}
\mathbb{P}[o_t = 1 \mid x_t, y_t] 
&= \frac{(m/2)(m/2 - 1)}{m(m-1)} + \frac{2(m/2)(m/2) P_t(x_t, y_t)}{m(m-1)} \nonumber \\
&= \frac{m(m-2)/4}{m(m-1)} + \frac{m^2 P_t(x_t, y_t)/2}{m(m-1)} \nonumber \\
&= \frac{m-2}{4(m-1)} + \frac{m P_t(x_t, y_t)}{2(m-1)}.
\label{eq:even_case}
\end{align}

Comparing with the rescaling constants for even $m$:
\begin{equation}
\beta_m = \frac{m}{2(m-1)}, \qquad 
\alpha_m = \frac{1 - \beta_m}{2} = \frac{1}{2} - \frac{m}{4(m-1)} = \frac{2(m-1) - m}{4(m-1)} = \frac{m-2}{4(m-1)}.
\end{equation}

Thus~\eqref{eq:even_case} equals $\alpha_m + \beta_m P_t(x_t, y_t) = \widehat{P}_t(x_t, y_t)$.

\textbf{Case 2: $m$ odd.}

When $m$ is odd, with probability $1/2$ we have $(n_x, n_y) = (\frac{m+1}{2}, \frac{m-1}{2})$, 
and with probability $1/2$ we have $(n_x, n_y) = (\frac{m-1}{2}, \frac{m+1}{2})$.

For $(n_x, n_y) = (\frac{m+1}{2}, \frac{m-1}{2})$:
\begin{align}
\mathbb{P}[o_t = 1 \mid n_x = \tfrac{m+1}{2}] 
&= \frac{\frac{m+1}{2} \cdot \frac{m-1}{2}}{m(m-1)} + \frac{2 \cdot \frac{m+1}{2} \cdot \frac{m-1}{2} \cdot P_t(x_t, y_t)}{m(m-1)} \nonumber \\
&= \frac{(m+1)(m-1)}{4m(m-1)} + \frac{(m+1)(m-1) P_t(x_t, y_t)}{2m(m-1)} \nonumber \\
&= \frac{m+1}{4m} + \frac{(m+1) P_t(x_t, y_t)}{2m}.
\label{eq:odd_case_plus}
\end{align}

For $(n_x, n_y) = (\frac{m-1}{2}, \frac{m+1}{2})$:
\begin{align}
\mathbb{P}[o_t = 1 \mid n_x = \tfrac{m-1}{2}] 
&= \frac{\frac{m-1}{2} \cdot \frac{m-3}{2}}{m(m-1)} + \frac{2 \cdot \frac{m-1}{2} \cdot \frac{m+1}{2} \cdot P_t(x_t, y_t)}{m(m-1)} \nonumber \\
&= \frac{(m-1)(m-3)}{4m(m-1)} + \frac{(m-1)(m+1) P_t(x_t, y_t)}{2m(m-1)} \nonumber \\
&= \frac{m-3}{4m} + \frac{(m+1) P_t(x_t, y_t)}{2m}.
\label{eq:odd_case_minus}
\end{align}

Taking the average over the symmetrization:
\begin{align}
\mathbb{P}[o_t = 1 \mid x_t, y_t] 
&= \frac{1}{2}\left(\frac{m+1}{4m} + \frac{m-3}{4m}\right) + \frac{(m+1) P_t(x_t, y_t)}{2m} \nonumber \\
&= \frac{1}{2} \cdot \frac{2m-2}{4m} + \frac{(m+1) P_t(x_t, y_t)}{2m} \nonumber \\
&= \frac{m-1}{4m} + \frac{m+1}{2m} P_t(x_t, y_t).
\label{eq:odd_case_avg}
\end{align}

Comparing with the rescaling constants for odd $m$:
\begin{equation}
\beta_m = \frac{m+1}{2m}, \qquad 
\alpha_m = \frac{1 - \beta_m}{2} = \frac{1}{2} - \frac{m+1}{4m} = \frac{2m - m - 1}{4m} = \frac{m-1}{4m}.
\end{equation}

Thus~\eqref{eq:odd_case_avg} equals $\alpha_m + \beta_m P_t(x_t, y_t) = \widehat{P}_t(x_t, y_t)$.

In both cases, $\mathbb{E}[o_t \mid x_t, y_t] = \widehat{P}_t(x_t, y_t)$.
\end{proof}

\subsection{Proofs for Gap Rescaling}
\label{app:ProofsGapRescaling}
\textbf{Proof for Lemma \ref{lem:gap_rescaling}}
\begin{proof}
Using $\alpha_m = (1-\beta_m)/2$:
\begin{align*}
  \widehat\Delta_t(i) 
  &= \widehat{P}_t(a^\star, i) - \tfrac{1}{2} 
  = \alpha_m + \beta_m P_t(a^\star, i) - \tfrac{1}{2} \\
  &= \frac{1-\beta_m}{2} + \beta_m P_t(a^\star, i) - \tfrac{1}{2} 
  = \beta_m \left(P_t(a^\star, i) - \tfrac{1}{2}\right) 
  = \beta_m \, \Delta_t(i). \qedhere
\end{align*}
\end{proof}

\textbf{Proof for Lemma \ref{lem:beta_bound}}
\begin{proof}
For even $m$: $\beta_m = \frac{m}{2(m-1)} = \frac{1}{2} \cdot \frac{m}{m-1} > \frac{1}{2}$.
For odd $m$: $\beta_m = \frac{m+1}{2m} = \frac{1}{2} + \frac{1}{2m} > \frac{1}{2}$.
In both cases, $\beta_m \geq \frac{1}{2}$, with $\beta_m \to \frac{1}{2}$ as $m \to \infty$.
\end{proof}

\subsection{Regret Equivalence in Rescaled Preferences}
\label{app:RegEquivRescaledPref}
\textbf{Proof for Lemma \ref{lem:regret_equiv}}

\begin{proof}
Define the instantaneous regrets:
\begin{align*}
    r_t^{\mathrm{C}} &:= \frac{1}{m}\sum_{j=1}^{m} \Delta_t(\mathcal{A}_t(j)), \\
    \hat{r}_t^{\mathrm{DB}} &:= \frac{1}{2}\left(\widehat\Delta_t(x_t) + \widehat\Delta_t(y_t)\right)
    = \frac{\beta_m}{2}\left(\Delta_t(x_t) + \Delta_t(y_t)\right).
\end{align*}

\textbf{Case 1: $m$ even.} The multiset $\mathcal{A}_t$ contains $m/2$ copies 
each of $x_t$ and $y_t$:
\[
    r_t^{\mathrm{C}} = \frac{1}{2}\Delta_t(x_t) + \frac{1}{2}\Delta_t(y_t) 
    = \frac{1}{\beta_m}\hat{r}_t^{\mathrm{DB}}.
\]

\textbf{Case 2: $m$ odd.} Taking expectation over the symmetrization $B_t$:
\begin{align*}
    \mathbb{E}[r_t^{\mathrm{C}} \mid x_t, y_t] 
    &= \frac{1}{2} \cdot \frac{\lceil m/2 \rceil \Delta_t(x_t) + \lfloor m/2 \rfloor \Delta_t(y_t)}{m}
    + \frac{1}{2} \cdot \frac{\lfloor m/2 \rfloor \Delta_t(x_t) + \lceil m/2 \rceil \Delta_t(y_t)}{m} \\
    &= \frac{1}{2}\Delta_t(x_t) + \frac{1}{2}\Delta_t(y_t)
    = \frac{1}{\beta_m}\hat{r}_t^{\mathrm{DB}}.
\end{align*}

Summing over $t = 1,\ldots,T$ and applying the tower property yields 
$\mathbb{E}[R_T^{\mathrm{C}}] = \frac{1}{\beta_m}\mathbb{E}[\widehat{R}_T^{\,\mathrm{DB}}]$.
\end{proof}

\subsection{Proof for Theorem \ref{thm:main}}
\label{app:ProofMainCondorcetThm}
\begin{proof}
By Lemma~\ref{lem:feedback}, the base algorithm receives feedback satisfying 
$\mathbb{E}[o_t \mid x_t, y_t] = \widehat{P}_t(x_t, y_t) \in [0,1]$, which 
constitutes valid dueling bandit feedback for the rescaled preference matrix 
$\widehat{P}_t$ (Lemma~\ref{lem:valid_rescaled}).

\textbf{Part (i): Adversarial bound.}
The \texttt{Versatile-DB} algorithm achieves adversarial pseudo-regret
\[
  \mathbb{E}[\widehat{R}_T^{\,\mathrm{DB}}] \leq 4\sqrt{KT} + 1
\]
on any sequence of valid preference matrices, including $\{\widehat{P}_t\}_{t=1}^T$ 
\citep[Theorem 1]{saha2022versatile}. By Lemma~\ref{lem:regret_equiv}:
\[
  \mathbb{E}[R_T^{\mathrm{C}}] 
  = \frac{1}{\beta_m}\mathbb{E}[\widehat{R}_T^{\,\mathrm{DB}}] 
  \leq \frac{1}{\beta_m}(4\sqrt{KT} + 1).
\]
Since $\beta_m \geq 1/2$ for all $m \geq 2$ (Lemma~\ref{lem:beta_bound}), 
we have $1/\beta_m \leq 2$, establishing~\eqref{eq:adv_bound}.

\textbf{Part (ii): Bound for self-bounding preference matrices.}
The self-bounding condition~\eqref{eq:self_bound} relates expected regret to 
the cumulative probability mass placed on suboptimal arms. When this condition 
holds, \texttt{Versatile-DB} satisfies \citep[Theorem 2]{saha2022versatile}:
\[
  \mathbb{E}[\widehat{R}_T^{\,\mathrm{DB}}] \leq 
  \sum_{i \neq a^\star} \frac{16\log T + 48}{\widehat{\Delta}_i} 
  + 4\log T + 12 + 8\sqrt{K}.
\]

By Lemma~\ref{lem:gap_rescaling}, $\widehat{\Delta}_i = \beta_m \Delta_i$, so:
\[
  \mathbb{E}[\widehat{R}_T^{\,\mathrm{DB}}] \leq 
  \frac{1}{\beta_m}\sum_{i \neq a^\star} \frac{16\log T + 48}{\Delta_i} 
  + 4\log T + 12 + 8\sqrt{K}.
\]

Applying Lemma~\ref{lem:regret_equiv} (multiplication by $1/\beta_m$):
\[
  \mathbb{E}[R_T^{\mathrm{C}}] \leq 
  \frac{1}{\beta_m^2}\sum_{i \neq a^\star} \frac{16\log T + 48}{\Delta_i}
  + \frac{4\log T + 12}{\beta_m} + \frac{8\sqrt{K}}{\beta_m}.
\]

Using $1/\beta_m^2 \leq 4$ and $1/\beta_m \leq 2$ completes the proof.
\end{proof}

\subsection{Proof of Lower Bounds}
\label{app:lower}

\begin{proof}[Proof of Theorem~\ref{thm:lower_adv} (Adversarial Lower Bound)]
We prove the lower bound by reduction from adversarial dueling bandits.

\textbf{Step 1: Reduction construction.}
Given any multi-dueling bandit algorithm $\mathcal{A}_{\mathrm{MDB}}$, we construct 
a dueling bandit algorithm $\mathcal{A}_{\mathrm{DB}}$ as follows. At each round $t$:
\begin{enumerate}[leftmargin=*, itemsep=1pt]
    \item Run $\mathcal{A}_{\mathrm{MDB}}$ to obtain multiset $\mathcal{A}_t$ of size $m$.
    \item Sample two distinct indices $i_t, j_t \in [m]$ uniformly without replacement.
    \item Play the duel $(\mathcal{A}_t(i_t), \mathcal{A}_t(j_t))$ in the dueling environment.
    \item Receive outcome $w_t \in \{i_t, j_t\}$ indicating the winner.
    \item Construct a winner for the multi-dueling feedback: if $w_t = i_t$, 
    return $i_t$ as the winner index to $\mathcal{A}_{\mathrm{MDB}}$; otherwise return $j_t$.
\end{enumerate}

This construction is valid: the feedback to $\mathcal{A}_{\mathrm{MDB}}$ is 
consistent with the pairwise-subset choice model restricted to the pair 
$(\mathcal{A}_t(i_t), \mathcal{A}_t(j_t))$.

\textbf{Step 2: Regret equivalence.}
The instantaneous dueling regret for the constructed algorithm is:
\begin{equation}
r_t^{\mathrm{DB}} = \frac{1}{2}\left(\Delta_t(\mathcal{A}_t(i_t)) + \Delta_t(\mathcal{A}_t(j_t))\right).
\end{equation}

Taking expectation over the uniform sampling of $(i_t, j_t)$:
\begin{align}
\mathbb{E}_{i_t, j_t}[r_t^{\mathrm{DB}}] 
&= \frac{1}{2} \cdot \frac{1}{\binom{m}{2}} \sum_{i < j} \left(\Delta_t(\mathcal{A}_t(i)) + \Delta_t(\mathcal{A}_t(j))\right) \nonumber \\
&= \frac{1}{2} \cdot \frac{2}{m(m-1)} \cdot (m-1) \sum_{k=1}^{m} \Delta_t(\mathcal{A}_t(k)) \nonumber \\
&= \frac{1}{m} \sum_{k=1}^{m} \Delta_t(\mathcal{A}_t(k)) = r_t^{\mathrm{MDB}}.
\end{align}

Summing over $t = 1, \ldots, T$:
\begin{equation}
\mathbb{E}[R_T^{\mathrm{DB}}] = \mathbb{E}[R_T^{\mathrm{C}}].
\label{eq:regret_equiv_lower}
\end{equation}

\textbf{Step 3: Apply adversarial dueling bandit lower bound.}
By the minimax lower bound for adversarial multi-armed bandits 
\citep{EXP3}, for any algorithm there exists an oblivious 
adversary such that expected regret is at least $\Omega(\sqrt{KT})$. 

Via the reduction of \citet{saha2021adversarial}, this transfers to dueling 
bandits: for any dueling bandit algorithm $\mathcal{A}_{\mathrm{DB}}$, there 
exists an adversarial sequence of preference matrices $P_1, \ldots, P_T$ 
(with a consistent Condorcet winner $a^\star$) such that:
\begin{equation}
\mathbb{E}[R_T^{\mathrm{DB}}] \geq c\sqrt{KT}
\end{equation}
for some universal constant $c > 0$.

The hard instance can be constructed to admit a Condorcet winner: fix 
$a^\star = 1$ and set $P_t(1, i) = \frac{1}{2} + \epsilon_t(i)$ for 
adversarially chosen $\epsilon_t(i) \in [0, \frac{1}{2}]$. The adversary 
controls the gap magnitudes while maintaining the Condorcet structure.

\textbf{Step 4: Transfer to multi-dueling.}
Suppose for contradiction that there exists a multi-dueling bandit algorithm 
$\mathcal{A}_{\mathrm{MDB}}$ achieving $\mathbb{E}[R_T^{\mathrm{C}}] = o(\sqrt{KT})$ 
on all adversarial Condorcet instances.

By Step 2, the constructed $\mathcal{A}_{\mathrm{DB}}$ would satisfy:
\begin{equation}
\mathbb{E}[R_T^{\mathrm{DB}}] = \mathbb{E}[R_T^{\mathrm{C}}] = o(\sqrt{KT}),
\end{equation}
contradicting the lower bound of Step 3.

Therefore, for any multi-dueling bandit algorithm:
\begin{equation}
\mathbb{E}[R_T^{\mathrm{C}}] \geq \Omega(\sqrt{KT}). \qedhere
\end{equation}
\end{proof}

\begin{proof}[Proof of Theorem~\ref{thm:lower_stoch} (Stochastic Lower Bound)]
We prove the lower bound by reduction from stochastic dueling bandits.

\textbf{Step 1: Reduction construction.}
We use the same reduction as in Theorem~\ref{thm:lower_adv}: given 
$\mathcal{A}_{\mathrm{MDB}}$, construct $\mathcal{A}_{\mathrm{DB}}$ by 
sampling two indices uniformly and playing the corresponding duel.

\textbf{Step 2: Regret equivalence.}
Since the preference matrix is now fixed ($P_t = P$ for all $t$), the same 
calculation as in~\eqref{eq:regret_equiv_lower} gives:
\begin{equation}
\mathbb{E}[R_T^{\mathrm{DB}}] = \mathbb{E}[R_T^{\mathrm{C}}].
\end{equation}

\textbf{Step 3: Apply stochastic dueling bandit lower bound.}
\citet{komiyama2015regret} established that for any stochastic dueling bandit 
algorithm and any instance with Condorcet winner $a^\star$:
\begin{equation}
\liminf_{T \to \infty} \frac{\mathbb{E}[R_T^{\mathrm{DB}}]}{\log T}
\geq \sum_{i \neq a^\star} \frac{1}{2 \cdot \mathrm{KL}(P(a^\star, i) \| \frac{1}{2})},
\label{eq:kl_lower}
\end{equation}
where $\mathrm{KL}(p \| q) = p \log \frac{p}{q} + (1-p) \log \frac{1-p}{1-q}$ 
is the binary KL divergence.

\textbf{Step 4: Simplify the KL term.}
For $P(a^\star, i) = \frac{1}{2} + \Delta_i$ with $\Delta_i > 0$, we compute:
\begin{align}
\mathrm{KL}\left(\frac{1}{2} + \Delta_i \,\Big\|\, \frac{1}{2}\right)
&= \left(\frac{1}{2} + \Delta_i\right) \log \frac{\frac{1}{2} + \Delta_i}{\frac{1}{2}}
+ \left(\frac{1}{2} - \Delta_i\right) \log \frac{\frac{1}{2} - \Delta_i}{\frac{1}{2}} \nonumber \\
&= \left(\frac{1}{2} + \Delta_i\right) \log(1 + 2\Delta_i)
+ \left(\frac{1}{2} - \Delta_i\right) \log(1 - 2\Delta_i).
\end{align}

Using the Taylor expansion $\log(1 + x) = x - \frac{x^2}{2} + O(x^3)$ for small $x$:
\begin{align}
\mathrm{KL}\left(\frac{1}{2} + \Delta_i \,\Big\|\, \frac{1}{2}\right)
&= \left(\frac{1}{2} + \Delta_i\right)\left(2\Delta_i - 2\Delta_i^2 + O(\Delta_i^3)\right) \nonumber \\
&\quad + \left(\frac{1}{2} - \Delta_i\right)\left(-2\Delta_i - 2\Delta_i^2 + O(\Delta_i^3)\right) \nonumber \\
&= \Delta_i + 2\Delta_i^2 - \Delta_i^2 - 2\Delta_i^3 - \Delta_i + 2\Delta_i^2 + \Delta_i^2 - 2\Delta_i^3 + O(\Delta_i^3) \nonumber \\
&= 2\Delta_i^2 + O(\Delta_i^3).
\end{align}

Thus for small $\Delta_i$:
\begin{equation}
\frac{1}{2 \cdot \mathrm{KL}(\frac{1}{2} + \Delta_i \| \frac{1}{2})} 
= \frac{1}{4\Delta_i^2 + O(\Delta_i^3)} = \Theta\left(\frac{1}{\Delta_i^2}\right).
\end{equation}

However, the regret contribution from arm $i$ is $\Delta_i$ per pull. 
The lower bound~\eqref{eq:kl_lower} states that the expected number of pulls 
of suboptimal arm $i$ must be at least $\Omega(\log T / \mathrm{KL}(\cdot))$. 
Since pulling arm $i$ contributes $\Delta_i$ to regret, the total regret 
contribution from arm $i$ is:
\begin{equation}
\Omega\left(\frac{\Delta_i \cdot \log T}{\mathrm{KL}(\frac{1}{2} + \Delta_i \| \frac{1}{2})}\right)
= \Omega\left(\frac{\Delta_i \cdot \log T}{2\Delta_i^2}\right)
= \Omega\left(\frac{\log T}{\Delta_i}\right).
\end{equation}

\textbf{Step 5: Transfer to multi-dueling.}
Summing over all suboptimal arms and applying the reduction:
\begin{equation}
\mathbb{E}[R_T^{\mathrm{C}}] = \mathbb{E}[R_T^{\mathrm{DB}}] 
\geq \Omega\left(\sum_{i \neq a^\star} \frac{\log T}{\Delta_i}\right). \qedhere
\end{equation}
\end{proof}

\section{Proofs for Borda Setting}
\label{app:borda}
\subsection{Transformed Feedback Properties}
\label{app:borda:feedback}

\begin{lemma}[Transformed Feedback Statistics]
\label{lem:feedback_stats}
The transformed feedback $g_t = (o_t - \alpha_m) / \beta_m$ satisfies:
\begin{enumerate}[label=(\roman*), leftmargin=*, itemsep=2pt]
    \item $\mathbb{E}[g_t \mid x_t, y_t] = P_t(x_t, y_t)$.
    \item $g_t \in \left[-\frac{\alpha_m}{\beta_m}, \frac{1 - \alpha_m}{\beta_m}\right] \subseteq [-1, 2]$.
    \item $\mathrm{Var}(g_t \mid x_t, y_t) \leq \frac{1}{4\beta_m^2} \leq 1$.
\end{enumerate}
\end{lemma}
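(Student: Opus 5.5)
Each of the three parts follows from elementary properties of the affine map $o \mapsto (o-\alpha_m)/\beta_m$ applied to a Bernoulli variable, combined with Lemma~\ref{lem:feedback}; I will treat them in order.

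\textbf{Part (i).} Since $\beta_m > 0$ is a deterministic constant depending only on $m$, linearity of conditional expectation together with Lemma~\ref{lem:feedback} gives
\[
\mathbb{E}[g_t \mid x_t,y_t] = \frac{\mathbb{E}[o_t\mid x_t,y_t]-\alpha_m}{\beta_m} = \frac{\alpha_m+\beta_m P_t(x_t,y_t)-\alpha_m}{\beta_m} = P_t(x_t,y_t).
\]
In the odd-$m$ case the symmetrization variable $B_t$ is averaged out inside Lemma~\ref{lem:feedback} already, so no further conditioning is needed.

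\textbf{Part (ii).} Because $o_t\in\{0,1\}$ and the map is increasing, $g_t$ takes exactly the two values $-\alpha_m/\beta_m$ and $(1-\alpha_m)/\beta_m$. For the containment in $[-1,2]$, I will use $\alpha_m=(1-\beta_m)/2\ge 0$ together with $\beta_m\ge 1/2$ (Lemma~\ref{lem:beta_bound}). Writing
\[
-\frac{\alpha_m}{\beta_m} = -\frac{1-\beta_m}{2\beta_m} \ge -\frac{1}{2}\ge -1,
\qquad
\frac{1-\alpha_m}{\beta_m} = \frac{1+\beta_m}{2\beta_m} = \frac{1}{2\beta_m}+\frac12 \le \frac32 \le 2,
\]
both bounds follow from $1/\beta_m \le 2$. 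Note that $\beta_m\le 1$ must also be checked so that $\alpha_m\ge0$; this is immediate from the explicit formulas in Definition~\ref{def:rescaling}, with $\beta_2=1$ and $\beta_m<1$ otherwise.

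\textbf{Part (iii).} Conditional on $(x_t,y_t)$, the outcome $o_t$ is Bernoulli with parameter $\widehat P_t(x_t,y_t)$, since the mixture over $B_t$ of two Bernoullis is again Bernoulli. Hence $\mathrm{Var}(o_t\mid x_t,y_t)\le 1/4$, and
\[
\mathrm{Var}(g_t\mid x_t,y_t)=\frac{\mathrm{Var}(o_t\mid x_t,y_t)}{\beta_m^2}\le \frac{1}{4\beta_m^2}\le 1,
\]
the last inequality again using $\beta_m\ge 1/2$.

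There is no real obstacle here. The only point needing care is that in (iii) the conditional law of $o_t$, after marginalizing $B_t$, is still a single Bernoulli, so the $1/4$ variance bound applies directly without a law-of-total-variance correction.
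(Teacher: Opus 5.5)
Your proposal is correct and follows the same route as the paper: linearity of conditional expectation with Lemma~\ref{lem:feedback} for (i), the two-point support of $g_t$ together with $\beta_m \ge 1/2$ for (ii), and $\mathrm{Var}(o_t \mid x_t,y_t) \le 1/4$ scaled by $\beta_m^{-2}$ for (iii). The only cosmetic difference is in (ii): you write $(1-\alpha_m)/\beta_m = \tfrac{1}{2\beta_m} + \tfrac12$ and get the slightly sharper value $3/2$, whereas the paper uses $1-\alpha_m \le 1$ to get $2$ directly.
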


\begin{proof}
\textit{(i)} By Lemma~\ref{lem:feedback}, $\mathbb{E}[o_t \mid x_t, y_t] = \alpha_m + \beta_m P_t(x_t, y_t)$. Thus:
\begin{equation}
\mathbb{E}[g_t \mid x_t, y_t] = \frac{\mathbb{E}[o_t \mid x_t, y_t] - \alpha_m}{\beta_m} 
= \frac{\alpha_m + \beta_m P_t(x_t, y_t) - \alpha_m}{\beta_m} = P_t(x_t, y_t).
\end{equation}

\textit{(ii)} Since $o_t \in \{0, 1\}$:
\begin{equation}
g_t \in \left\{\frac{0 - \alpha_m}{\beta_m}, \frac{1 - \alpha_m}{\beta_m}\right\}
= \left\{-\frac{\alpha_m}{\beta_m}, \frac{1 - \alpha_m}{\beta_m}\right\}.
\end{equation}

For $m \geq 2$, we have $\beta_m \geq 1/2$ and $\alpha_m = (1 - \beta_m)/2 \leq 1/4$. Thus:
\begin{equation}
-\frac{\alpha_m}{\beta_m} \geq -\frac{1/4}{1/2} = -\frac{1}{2} > -1, \qquad
\frac{1 - \alpha_m}{\beta_m} \leq \frac{1}{1/2} = 2.
\end{equation}

\textit{(iii)} The variance of a Bernoulli-derived random variable:
\begin{equation}
\mathrm{Var}(g_t \mid x_t, y_t) = \frac{\mathrm{Var}(o_t \mid x_t, y_t)}{\beta_m^2}
\leq \frac{1/4}{\beta_m^2} \leq \frac{1/4}{1/4} = 1. \qedhere
\end{equation}
\end{proof}

\subsection{Exploration Phase Analysis}
\label{app:borda:exploration}

\begin{proof}[Proof of Lemma~\ref{lem:exploration_guarantee} (Exploration Guarantees)]
\textit{(i) Sample counts.}
The round-robin schedule cycles through all $K(K-1)$ ordered pairs. With 
$T_0 = 4K(K-1) n_0'$ where $n_0' = \lceil \log(2K^2 T^2 / \delta) \rceil$, 
each ordered pair $(i, j)$ is observed exactly $4n_0' = n_0$ times.

\textit{(ii) Pairwise concentration.}
Fix pair $(i, j)$. Let $\tau_1, \ldots, \tau_{n_0}$ be the rounds where this 
pair is compared. Define $Z_k = g_{\tau_k} - P(i, j)$, which satisfies:
\begin{itemize}[leftmargin=*, itemsep=1pt]
    \item $\mathbb{E}[Z_k] = 0$ (by Lemma~\ref{lem:feedback_stats}(i)).
    \item $|Z_k| \leq 3$ (since $g_{\tau_k} \in [-1, 2]$ and $P(i,j) \in [0, 1]$).
    \item $\mathrm{Var}(Z_k) \leq 1$ (by Lemma~\ref{lem:feedback_stats}(iii)).
\end{itemize}

The empirical estimate is:
\begin{equation}
\hat{P}_{T_0}(i, j) = \frac{1}{n_0} \sum_{k=1}^{n_0} g_{\tau_k} 
= P(i, j) + \frac{1}{n_0} \sum_{k=1}^{n_0} Z_k.
\end{equation}

By Hoeffding's inequality for bounded random variables:
\begin{equation}
\Pr\left[|\hat{P}_{T_0}(i, j) - P(i, j)| > \epsilon\right]
= \Pr\left[\left|\frac{1}{n_0} \sum_{k=1}^{n_0} Z_k\right| > \epsilon\right]
\leq 2 \exp\left(-\frac{2 n_0 \epsilon^2}{9}\right).
\end{equation}

Setting $\epsilon = \sqrt{\frac{9 \log(4K^2/\delta)}{2n_0}}$ and using 
$n_0 = 4\lceil \log(2K^2 T^2 / \delta) \rceil \geq 4 \log(4K^2/\delta)$ 
(for $T \geq 2$):
\begin{equation}
\Pr\left[|\hat{P}_{T_0}(i, j) - P(i, j)| > \sqrt{\frac{9 \log(4K^2/\delta)}{2n_0}}\right]
\leq 2 \exp(-\log(4K^2/\delta)) = \frac{\delta}{2K^2}.
\end{equation}

By union bound over $K(K-1) < K^2$ pairs:
\begin{equation}
\Pr\left[\exists (i,j): |\hat{P}_{T_0}(i, j) - P(i, j)| > \sqrt{\frac{9 \log(4K^2/\delta)}{2n_0}}\right]
\leq K^2 \cdot \frac{\delta}{2K^2} = \frac{\delta}{2}.
\end{equation}

\textit{(iii) Borda score concentration.}
By definition:
\begin{equation}
|\hat{b}_{T_0}(i) - b(i)| = \left|\frac{1}{K-1} \sum_{j \neq i} (\hat{P}_{T_0}(i, j) - P(i, j))\right|
\leq \frac{1}{K-1} \sum_{j \neq i} |\hat{P}_{T_0}(i, j) - P(i, j)|.
\end{equation}

On the event that all pairwise estimates are within $\epsilon$ of truth:
\begin{equation}
|\hat{b}_{T_0}(i) - b(i)| \leq \frac{1}{K-1} \cdot (K-1) \cdot \epsilon = \epsilon
= \sqrt{\frac{9 \log(4K^2/\delta)}{2n_0}} = O\left(\sqrt{\frac{\log T}{n_0}}\right). \qedhere
\end{equation}
\end{proof}

\subsection{Deviation Detection Analysis}
\label{app:borda:detection}

\begin{proof}[Proof of Lemma~\ref{lem:no_false_alarm} (No False Alarm)]
We show that in the stochastic setting, the deviation detection never triggers 
with high probability.

Fix pair $(i, j)$ with $i < j$. For rounds $t > T_0$ where this pair is compared, 
let $\tau_1, \tau_2, \ldots$ denote these rounds in order. Define the martingale:
\begin{equation}
M_n = \sum_{k=1}^{n} (g_{\tau_k} - P(i, j)).
\end{equation}

\textbf{Step 1: Martingale properties.}
The increments $X_k = g_{\tau_k} - P(i, j)$ satisfy:
\begin{itemize}[leftmargin=*, itemsep=1pt]
    \item $\mathbb{E}[X_k \mid \mathcal{F}_{k-1}] = 0$ (martingale property).
    \item $|X_k| \leq 3$ (bounded increments).
    \item $\mathbb{E}[X_k^2 \mid \mathcal{F}_{k-1}] \leq 1$ (bounded conditional variance).
\end{itemize}

\textbf{Step 2: Freedman's inequality.}
By Freedman's inequality, for a martingale $(M_n)$ 
with $|M_n - M_{n-1}| \leq c$ and predictable quadratic variation 
$V_n = \sum_{k=1}^{n} \mathbb{E}[X_k^2 \mid \mathcal{F}_{k-1}]$:
\begin{equation}
\Pr\left[\exists n \leq N: |M_n| \geq x \text{ and } V_n \leq v\right]
\leq 2 \exp\left(-\frac{x^2}{2v + 2cx/3}\right).
\end{equation}

With $c = 3$, $v = n$ (since $V_n \leq n$), and $x = \tau(\sqrt{n} + 1)$:
\begin{equation}
\Pr\left[|M_n| > \tau(\sqrt{n} + 1)\right]
\leq 2 \exp\left(-\frac{\tau^2(\sqrt{n} + 1)^2}{2n + 2\tau(\sqrt{n} + 1)}\right).
\end{equation}

For $n \geq 1$ and $\tau = \sqrt{8 \log(K^2 T^2 / \delta)}$:
\begin{equation}
\frac{\tau^2(\sqrt{n} + 1)^2}{2n + 2\tau(\sqrt{n} + 1)}
\geq \frac{\tau^2 n}{2n + 2\tau\sqrt{n} + 2\tau}
\geq \frac{\tau^2}{4} \quad \text{for } n \geq \tau^2.
\end{equation}

For $n < \tau^2$, we use a cruder bound. In either case:
\begin{equation}
\Pr\left[\exists n \leq T: |M_n| > \tau(\sqrt{n} + 1)\right]
\leq 2T \cdot \exp\left(-\frac{\tau^2}{8}\right)
= 2T \cdot \exp(-\log(K^2 T^2 / \delta))
= \frac{2\delta}{K^2 T}.
\end{equation}

\textbf{Step 3: Connect to detection statistic.}
The detection statistic is:
\begin{align}
D_n &= |S_t^{\text{obs}}(i,j) - S_{T_0}^{\text{obs}}(i,j) - n \cdot \hat{P}_{T_0}(i,j)| \nonumber \\
&= \left|\sum_{k=1}^{n} g_{\tau_k} - n \cdot \hat{P}_{T_0}(i,j)\right|.
\end{align}

In the stochastic setting, $\hat{P}_{T_0}(i,j)$ estimates $P(i,j)$. By 
Lemma~\ref{lem:exploration_guarantee}(ii):
\begin{equation}
|\hat{P}_{T_0}(i,j) - P(i,j)| \leq \epsilon_0 := O\left(\sqrt{\frac{\log T}{n_0}}\right).
\end{equation}

Thus:
\begin{align}
D_n &= \left|\sum_{k=1}^{n} g_{\tau_k} - n \cdot P(i,j) + n \cdot (P(i,j) - \hat{P}_{T_0}(i,j))\right| \nonumber \\
&\leq |M_n| + n \cdot |\hat{P}_{T_0}(i,j) - P(i,j)| \nonumber \\
&\leq |M_n| + n \epsilon_0.
\end{align}

For detection to trigger, we need $D_n > \tau(\sqrt{n} + 1)$. If 
$|M_n| \leq \tau(\sqrt{n} + 1)/2$, then we need:
\begin{equation}
n \epsilon_0 > \frac{\tau(\sqrt{n} + 1)}{2}.
\end{equation}

For $n_0 = \Theta(\log T)$ and $\tau = \Theta(\sqrt{\log T})$, we have 
$\epsilon_0 = O(1/\sqrt{\log T})$. The condition becomes:
\begin{equation}
n \cdot O(1/\sqrt{\log T}) > \Omega(\sqrt{\log T} \cdot \sqrt{n}),
\end{equation}
which simplifies to $\sqrt{n} > \Omega(\log T)$, i.e., $n > \Omega(\log^2 T)$.

For $n \leq T$ and sufficiently large $T$, the threshold is not exceeded 
when $|M_n| \leq \tau(\sqrt{n} + 1)/2$.

\textbf{Step 4: Union bound.}
By union bound over all $\binom{K}{2} < K^2/2$ pairs:
\begin{equation}
\Pr[\text{any false alarm}] \leq \frac{K^2}{2} \cdot \frac{2\delta}{K^2 T}
= \frac{\delta}{T} \leq \frac{\delta}{2}. \qedhere
\end{equation}
\end{proof}

\begin{proof}[Proof of Lemma~\ref{lem:detection_guarantee} (Adversarial Detection)]
Suppose the adversary shifts preferences such that:
\begin{equation}
\left|\sum_{k=1}^{n} (P_{\tau_k}(i,j) - \hat{P}_{T_0}(i,j))\right| \geq 3\tau(\sqrt{n} + 1).
\label{eq:adv_shift}
\end{equation}

Decompose the observed cumulative:
\begin{align}
\sum_{k=1}^{n} g_{\tau_k} - n \cdot \hat{P}_{T_0}(i,j)
&= \underbrace{\sum_{k=1}^{n} (g_{\tau_k} - P_{\tau_k}(i,j))}_{=: M_n}
+ \underbrace{\sum_{k=1}^{n} (P_{\tau_k}(i,j) - \hat{P}_{T_0}(i,j))}_{\text{adversarial shift}}.
\end{align}

The term $M_n$ is a martingale even under adversarial preferences, since 
$\mathbb{E}[g_{\tau_k} \mid \mathcal{F}_{k-1}, P_{\tau_k}] = P_{\tau_k}(i,j)$.

By the same Freedman argument as in Lemma~\ref{lem:no_false_alarm}:
\begin{equation}
\Pr[|M_n| > \tau(\sqrt{n} + 1)] \leq \frac{2\delta}{K^2 T^2}.
\end{equation}

Under condition~\eqref{eq:adv_shift}, with probability at least $1 - \frac{2\delta}{K^2 T^2}$:
\begin{align}
D_n &= \left|\sum_{k=1}^{n} g_{\tau_k} - n \cdot \hat{P}_{T_0}(i,j)\right| \nonumber \\
&\geq |\text{adversarial shift}| - |M_n| \nonumber \\
&\geq 3\tau(\sqrt{n} + 1) - \tau(\sqrt{n} + 1) \nonumber \\
&= 2\tau(\sqrt{n} + 1) > \tau(\sqrt{n} + 1).
\end{align}

This exceeds the detection threshold, triggering a mode switch.
\end{proof}

\subsection{Confidence and Elimination Analysis}
\label{app:borda:ucb}

\begin{lemma}[Borda Score Concentration]
\label{lem:borda_conc_uniform}
During the elimination phase, for any arm $i$ and any $\epsilon > 0$, let $N_{\min}(i) = \min_{j \neq i} N_n(i,j)$ be the minimum number of comparisons arm $i$ has against any specific opponent. Then:
\begin{equation}
\Pr[|\hat{b}_n(i) - b(i)| > \epsilon] \leq 2K \exp\left(-\frac{N_{\min}(i) \epsilon^2}{5}\right).
\end{equation}
\end{lemma}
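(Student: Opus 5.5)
The plan is to write $\hat{b}_n(i)$ as the average over opponents of per-pair empirical means, bound each pair's deviation with Hoeffding's inequality, and union bound over the $K-1$ opponents. Concretely, I take
\[
\hat{b}_n(i) = \frac{1}{K-1}\sum_{j\neq i}\hat{P}_n(i,j), \qquad \hat{P}_n(i,j) = \frac{1}{N_n(i,j)}\sum_{k=1}^{N_n(i,j)} g_{\tau_k(i,j)},
\]
where $\tau_k(i,j)$ are the rounds in which pair $(i,j)$ was compared.

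The triangle inequality gives
\[
|\hat{b}_n(i)-b(i)| \le \frac{1}{K-1}\sum_{j\ne i}|\hat{P}_n(i,j)-P(i,j)| .
\]
So the event $|\hat{b}_n(i)-b(i)|>\epsilon$ forces $|\hat{P}_n(i,j)-P(i,j)|>\epsilon$ for at least one $j\neq i$. It therefore suffices to prove a per-pair tail bound of the form $2\exp(-N_{\min}(i)\epsilon^2/5)$ and union bound over the at most $K-1\le K$ opponents. That produces exactly the prefactor $2K$.

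For the per-pair bound, fix $j$ and set $Z_k = g_{\tau_k}-P(i,j)$. In the stochastic setting, Lemma~\ref{lem:feedback_stats}(i) gives $\mathbb{E}[Z_k\mid\mathcal{F}_{k-1}]=0$. The increments form a martingale difference sequence, since the choice of the pair is predictable and $o_t$ is drawn fresh given $(x_t,y_t)$. Lemma~\ref{lem:feedback_stats}(ii) shows $g_t$ takes only two values, $-\alpha_m/\beta_m$ and $(1-\alpha_m)/\beta_m$, so each $Z_k$ lies in an interval of length $1/\beta_m\le 2$. Azuma--Hoeffding for $N=N_n(i,j)$ samples then gives
\[
\Pr\bigl[|\hat{P}_n(i,j)-P(i,j)|>\epsilon\bigr]\le 2\exp\!\left(-\frac{2N\epsilon^2}{(1/\beta_m)^2}\right)\le 2\exp\!\left(-\frac{N\epsilon^2}{2}\right)\le 2\exp\!\left(-\frac{N_{\min}(i)\epsilon^2}{5}\right),
\]
using $N\ge N_{\min}(i)$. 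The constant $5$ is therefore loose but valid.

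The main obstacle is that $N_n(i,j)$ is random and depends on the history, because of elimination and the random opponent and monitoring draws. A fixed-sample Hoeffding bound is therefore not directly justified. I would handle this with the standard trick. Pre-generate, for each ordered pair, an i.i.d.\ stack of feedback draws, so that the $k$-th comparison of $(i,j)$ uses the $k$-th draw; this is valid in the stochastic setting since $P_t=P$. Then $\hat{P}_n(i,j)$ equals the empirical mean of the first $N_n(i,j)$ stack entries, and I apply Hoeffding for each deterministic count $N\ge N_{\min}(i)$.

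The statement is phrased with $N_{\min}(i)$ treated as given. I would therefore present the bound conditional on the counts, i.e.\ for fixed $N_{\min}(i)$ via the stack argument. I would remark that the uniform-in-time version, with a union bound over $N\le T$, costs an extra factor of $T$, and this is absorbed into the logarithmic terms of Lemma~\ref{lem:ucb_valid}. Finally, the reverse-order pair $(j,i)$ should be merged with $(i,j)$ via reciprocity, with $g$ replaced by $1-g$, so that all comparisons of $i$ against $j$ are counted.
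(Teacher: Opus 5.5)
Your proposal is correct and follows the same route as the paper's proof. Both write $\hat b_n(i)$ as the average of pairwise estimates, reduce to the maximum pairwise error, apply Hoeffding per opponent, and union bound over the $K-1$ opponents. Two things you add: your exact range $1/\beta_m\le 2$ gives a sharper exponent than the paper's length-$3$ interval $[-1,2]$, and your stack coupling handles the random, history-dependent counts that the paper's proof ignores. One caution on the phrase ``conditional on the counts'': the coupling gives the bound for each deterministic count $N$, equivalently $\Pr[|\hat P_n(i,j)-P(i,j)|>\epsilon,\ N_n(i,j)=N]\le 2e^{-N\epsilon^2/2}$. It does not give a bound conditional on the realized counts, so the union over $N\le T$ you mention is what actually makes the bound usable in Lemma~\ref{lem:ucb_valid}.
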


\begin{proof}
The empirical Borda score is the unweighted average of pairwise estimates:
\begin{equation}
\hat{b}_n(i) = \frac{1}{K-1} \sum_{j \neq i} \hat{P}_{n}(i,j).
\end{equation}

The error in the Borda score is bounded by the maximum pairwise error:
\begin{equation}
|\hat{b}_n(i) - b(i)| \leq \frac{1}{K-1} \sum_{j \neq i} |\hat{P}_{n}(i,j) - P(i,j)| \leq \max_{j \neq i} |\hat{P}_{n}(i,j) - P(i,j)|.
\end{equation}

For any specific pair $(i,j)$ observed $N_n(i,j)$ times, applying Hoeffding's inequality on the transformed feedback $g_t \in [-1, 2]$ (range length 3) yields:
\begin{equation}
\Pr[|\hat{P}_n(i,j) - P(i,j)| > \epsilon] \leq 2 \exp\left(-\frac{2 N_n(i,j) \epsilon^2}{9}\right) \leq 2 \exp\left(-\frac{N_{\min}(i) \epsilon^2}{5}\right).
\end{equation}
Applying a union bound over all $K-1$ opponents $j \neq i$ ensures that the maximum error stays bounded:
\begin{equation}
\Pr\left[\max_{j \neq i} |\hat{P}_{n}(i,j) - P(i,j)| > \epsilon\right] \leq 2(K-1) \exp\left(-\frac{N_{\min}(i) \epsilon^2}{5}\right) \leq 2K \exp\left(-\frac{N_{\min}(i) \epsilon^2}{5}\right).
\end{equation}
\end{proof}

\begin{proof}[Proof of Lemma~\ref{lem:ucb_valid} (Confidence Validity)]
Define the confidence radius based on the total pulls of arm $i$:
\begin{equation}
\text{conf}_t(i) = \sqrt{\frac{20K \log(2K^2 T / \delta)}{N_t(i)}}.
\end{equation}

We show that $|\hat{b}_t(i) - b(i)| \leq \text{conf}_t(i)$ holds for all $i$ and all $t > T_0$ with high probability.

\textbf{Step 1: Relating $N_t(i)$ to $N_{\min}(i)$.}
Because the algorithm samples the opponent $y_t$ uniformly from $[K] \setminus \{x_t\}$, the number of times arm $i$ plays a specific opponent $j$ follows a binomial distribution. By standard Chernoff bounds, with high probability, the minimum pairwise count concentrates heavily around its expectation: $N_{\min}(i) \geq \frac{N_t(i)}{2(K-1)} \geq \frac{N_t(i)}{2K}$. Substituting this into the denominator yields $\frac{1}{N_{\min}(i)} \leq \frac{2K}{N_t(i)}$.

\textbf{Step 2: Fixed-time bound.}
For fixed $n$, let the confidence radius be $\text{conf}_n(i) = \sqrt{\frac{10 \log(2K^2 T / \delta)}{N_{\min}(i)}}$. By Lemma~\ref{lem:borda_conc_uniform} with $\epsilon = \text{conf}_n(i)$:
\begin{equation}
\Pr\left[|\hat{b}_n(i) - b(i)| > \text{conf}_n(i)\right] \leq 2K \exp\left(-\frac{N_{\min}(i)}{5} \cdot \frac{10 \log(2K^2 T / \delta)}{N_{\min}(i)}\right) = \frac{\delta}{KT}.
\end{equation}
Using the relationship from Step 1, this radius is bounded by $\sqrt{\frac{20K \log(2K^2 T / \delta)}{N_t(i)}}$.

\textbf{Step 3: Time-uniform bound via peeling.}
Partition the range of $N_{\min}(i)$ into epochs:
\begin{equation}
E_\ell = \{t: 2^{\ell-1} \leq N_{\min}(i) < 2^\ell\}, \quad \ell = 1, 2, \ldots, \lceil \log_2 T \rceil.
\end{equation}

\textbf{Step 4: Union bound.}
Summing over $O(\log T)$ epochs and $K$ arms:
\begin{equation}
\Pr[\mathcal{E}_{\text{conf}}^c] \leq K \cdot \log_2 T \cdot \frac{\delta}{KT}
= \frac{\delta \log_2 T}{T} \leq \frac{\delta}{4}. \qedhere
\end{equation}
\end{proof}

\begin{proof}[Proof of Lemma~\ref{lem:suboptimal_pulls} (Elimination Sample Complexity)]
Conditioned on the confidence event $\mathcal{E}_{\text{conf}}$ holding (Lemma~\ref{lem:ucb_valid}), the true optimal arm $i^\star$ is never eliminated from the active set $\mathcal{C}$. This is because for any arm $j \in \mathcal{C}$, the following holds:
\begin{equation}
    \hat{b}_t(i^\star) + \text{conf}_t(i^\star) \geq b(i^\star) \geq b(j) \geq \hat{b}_t(j) - \text{conf}_t(j).
\end{equation}
Therefore, the strict elimination condition $\hat{b}_t(j) - \text{conf}_t(j) > \hat{b}_t(i^\star) + \text{conf}_t(i^\star)$ can never be satisfied against the optimal arm.

A suboptimal arm $j$ is eliminated from $\mathcal{C}$ when there exists any arm $i \in \mathcal{C}$ such that:
\begin{equation}
    \hat{b}_t(i) - \text{conf}_t(i) > \hat{b}_t(j) + \text{conf}_t(j).
\end{equation}

Since the true Borda gap is $\Delta_j^{\mathrm{B}} = b(i^\star) - b(j)$, and $i^\star$ remains in $\mathcal{C}$, this elimination condition is mathematically guaranteed to trigger against $i^\star$ once:
\begin{equation}
    2\text{conf}_t(i^\star) + 2\text{conf}_t(j) < \Delta_j^{\mathrm{B}}.
\end{equation}

Because the algorithm selects $x_t \in \mathcal{C}$ using a round-robin schedule, all arms in the active set are pulled symmetrically, implying $\text{conf}_t(i^\star) \approx \text{conf}_t(j)$. Therefore, arm $j$ is guaranteed to be eliminated once:
\begin{equation}
    4\text{conf}_t(j) \leq \Delta_j^{\mathrm{B}}.
\end{equation}

Substituting the definition of the confidence radius $\text{conf}_t(j) = \sqrt{\frac{20K \log(2K^2 T / \delta)}{N_t(j)}}$:
\begin{equation}
    4\sqrt{\frac{20K \log(2K^2 T / \delta)}{N_t(j)}} \leq \Delta_j^{\mathrm{B}}.
\end{equation}

Squaring both sides and solving for $N_t(j)$ yields the maximum number of times arm $j$ can be pulled during the elimination phase before it is permanently removed:
\begin{equation}
    N_T(j) \leq \frac{320K \log(2K^2 T / \delta)}{(\Delta_j^{\mathrm{B}})^2} + 1.
\end{equation}
\end{proof}
\subsection{Adversarial Mode Analysis}
\label{app:borda:adversarial}

\begin{proof}[Proof of Lemma~\ref{lem:exp3_regret} (EXP3 Regret)]
Let $T' = T - t_0 + 1$ be the number of adversarial rounds. As established by \citet{gajane2024adversarial}, the primary challenge in Borda estimation under winner-only feedback is that the variance of the importance-weighted estimator scales with $\sum_j 1/q_t(j)$. 

The explicit exploration parameter $\gamma$ guarantees that $q_t(j) \geq \gamma/K$ for all $j \in [K]$, strictly bounding the maximum variance. Applying the standard EXP3 regret analysis with the mixed distribution $q_t$ (see Theorem 1 of \citet{gajane2024adversarial}) bounds the expected shifted Borda regret over $T'$ rounds by:
\begin{equation}
    \mathbb{E}[R_{T'}^s] \leq O\left(K^{1/3} (T')^{2/3} (\log K)^{1/3}\right).
\end{equation}
\end{proof}
\subsection{Main Theorem Proofs}
\label{app:borda:main}

\begin{proof}[Proof of Theorem~\ref{thm:stochastic_formal} (Stochastic Regret)]
Define the high-probability event $\mathcal{E} = \mathcal{E}_{\text{conf}} \cap \mathcal{E}_{\text{no-switch}}$. By union bounds over the confidence intervals and the martingale detection thresholds, $\Pr[\mathcal{E}] \geq 1 - O(1/T)$, meaning $\Pr[\mathcal{E}^c] \leq O(1/T)$. Conditioned on $\mathcal{E}$, the algorithm never switches to adversarial mode, and the true optimal arm $i^\star$ is never eliminated from $\mathcal{C}$.

\textbf{Step 1: Regret from Baseline Exploration Phase.}
During rounds $t \leq T_0$, the instantaneous Borda regret is at most $1$. Thus, the initial exploration phase contributes:
\begin{equation}
    \mathbb{E}[R_{T_0}^{\mathrm{B}} \mid \mathcal{E}] \leq T_0 = 4K(K-1)\lceil \log(2K^2 T^2 / \delta) \rceil = O(K^2 \log KT).
\end{equation}

\textbf{Step 2: Regret from Elimination Phase.}
For any round $t > T_0$ where a suboptimal arm $j$ is selected as $x_t \in \mathcal{C}$, the opponent $y_t$ is drawn uniformly from $[K] \setminus \{x_t\}$. The expected instantaneous regret is:
\begin{equation}
    \mathbb{E}[r_t \mid x_t=j, \mathcal{E}] = b(i^\star) - \left(\frac{b(j)}{2} + \frac{1}{K-1}\sum_{k \neq j}\frac{b(k)}{2}\right) = \frac{\Delta_j^{\mathrm{B}}}{2} + \frac{1}{2(K-1)}\sum_{k \neq j}\Delta_k^{\mathrm{B}} = \Theta(1).
\end{equation}
By Lemma~\ref{lem:suboptimal_pulls}, arm $j$ is pulled at most $N_T(j) \leq O(K \log T / (\Delta_j^{\mathrm{B}})^2)$ times before elimination. The cumulative expected regret strictly from pulling suboptimal arms during the elimination phase is:
\begin{equation}
    \sum_{j \neq i^\star} N_T(j) \cdot \mathbb{E}[r_t \mid x_t=j, \mathcal{E}] = O\left(\sum_{j \neq i^\star} \frac{K \log KT}{(\Delta_j^{\mathrm{B}})^2}\right).
\end{equation}

\textbf{Step 3: Regret from Exploitation Phase.}
Once all suboptimal arms are eliminated, $\mathcal{C} = \{i^\star\}$. The algorithm plays $x_t = y_t = i^\star$, resulting in an expected instantaneous regret of exactly 0 for all subsequent rounds not subject to background monitoring.

\textbf{Step 4: Regret from Background Monitoring.}
At any round $t > T_0$, the algorithm overrides the elimination logic with probability $p_t = \min(1, \frac{K \log t}{t})$ to randomly sample a pair and feed the deviation detection statistic. The maximum instantaneous regret of any pair is bounded by $1$. The total expected regret from this monitoring is:
\begin{align*}
    \sum_{t=1}^T p_t \cdot 1 &= \sum_{t=1}^T \ind\left(1 \leq \frac{K \log t}{t}\right) \min\left(1, \frac{K \log t}{t}\right) + \sum_{t=1}^T \ind\left(1 > \frac{K \log t}{t}\right) \min\left(1, \frac{K \log t}{t}\right)  \\
    =&  \sum_{t=1}^T \ind\left(1 \leq \frac{K \log t}{t}\right) \cdot 1 + \sum_{t=1}^T \ind\left(1 > \frac{K \log t}{t}\right) \frac{K \log t}{t} \\
    &\leq \sum_{t=1}^T \frac{K \log t}{t} = O(K \log^2 T).
\end{align*}

\textbf{Step 5: Regret on the Failure Event.}
On the complement event $\mathcal{E}^c$, the algorithm's confidence bounds or detection thresholds fail. In the worst case, the algorithm suffers the maximum possible instantaneous regret of $1$ for all $T$ rounds, yielding a maximum total regret of $T$. Since $\Pr[\mathcal{E}^c] \leq O(1/T)$, the expected regret contribution from this failure case is strictly bounded:
\begin{equation}
    \mathbb{E}[R_T^{\mathrm{B}} \cdot \ind[\mathcal{E}^c]] \leq T \cdot O\left(\frac{1}{T}\right) = O(1).
\end{equation}

\textbf{Step 6: Total Regret.}
Combining the expected regret conditioned on $\mathcal{E}$ (Steps 1, 2, and 4) with the expected regret on the failure event (Step 5), the total expected stochastic regret is:
\begin{equation}
    \mathbb{E}[R_T^{\mathrm{B}}] \leq O\left(K^2 \log KT + K \log^2 T + \sum_{i: \Delta_i^{\mathrm{B}} > 0} \frac{K\log KT}{(\Delta_i^{\mathrm{B}})^2}\right).
\end{equation}
\end{proof}

\begin{proof}[Proof of Theorem~\ref{thm:adversarial_formal} (Adversarial Regret)]
We analyze two cases based on whether a mode switch occurs.

\textbf{Case 1: Switch occurs at round $t_{\text{sw}} \leq T$.}

\textit{Exploration regret:}
\begin{equation}
R_{T_0}^{\mathrm{B}} \leq T_0 = O(K^2 \log T).
\end{equation}

\textit{Pre-switch regret ($T_0 < t < t_{\text{sw}}$):}
During this phase, the algorithm runs Successive Elimination and Background Monitoring. The adversary can manipulate preferences to cause regret while staying below the detection threshold.

For each pair $(i,j)$, detection occurs when the cumulative deviation exceeds 
$\tau(\sqrt{n} + 1)$ where $n = N_t(i,j) - N_{T_0}(i,j)$. The maximum 
undetected deviation per pair is $O(\tau(\sqrt{T} + 1))$.

The total adversarial manipulation across all $K^2$ pairs before detection is:
\begin{equation}
\mathcal{R}_{\text{pre}} \leq \sum_{(i,j)} O(\tau \sqrt{N_{t_{\text{sw}}}(i,j)})
\leq O\left(\tau K \sqrt{\sum_{(i,j)} N_{t_{\text{sw}}}(i,j)}\right)
= O(\tau K \sqrt{T}),
\end{equation}
where we used Cauchy-Schwarz and $\sum_{(i,j)} N_t(i,j) \leq 2T$.

Since $\tau = O(\sqrt{\log KT})$:
\begin{equation}
\mathcal{R}_{\text{pre}} \leq O(K \sqrt{T \log KT}).
\end{equation}

\textit{Post-switch regret ($t \geq t_{\text{sw}}$):}
After switching, the algorithm runs EXP3. By Lemma~\ref{lem:exp3_regret}:
\begin{equation}
\mathbb{E}[\mathcal{R}_{\text{post}}] \leq O(K^{1/3}(T - t_{\text{sw}})^{2/3} (\log K)^{1/3}) \leq O(K^{1/3} T^{2/3} (\log K)^{1/3}).
\end{equation}

\textbf{Case 2: No switch occurs ($t_{\text{sw}} > T$).}

If no switch occurs throughout $T$ rounds, the adversary's cumulative deviation 
on each pair is bounded by the detection threshold. The total regret from 
adversarial deviations is:
\begin{align}
R_T^{\text{adv}} &\leq \sum_{(i,j)} O(\tau(\sqrt{N_T(i,j)} + 1)) \nonumber \\
&\leq O(\tau K^2) + O\left(\tau K \sqrt{T}\right) \nonumber \\
&= O(K^2 \sqrt{\log KT} + K \sqrt{T \log KT}).
\end{align}

\textbf{Combined bound.}
In both cases, the total adversarial regret is bounded by the sum of the exploration overhead, the maximum undetected pre-switch deviation, and the post-switch EXP3 regret:
\begin{equation}
    \mathbb{E}[R_T^{\mathrm{B}}] \leq O(K^2 \log KT) + O(K \sqrt{T \log KT}) + O(K^{1/3} T^{2/3} (\log K)^{1/3}).
\end{equation}

For $T \geq K^2$, the initial exploration phase $O(K^2 \log KT)$ is strictly dominated by the pre-switch deviation term $O(K \sqrt{T \log KT})$. 

However, neither the pre-switch deviation nor the post-switch EXP3 regret strictly dominates the other across all time horizons. The pre-switch detection delay dominates for moderate horizons ($T < K^4 / \log^2 K$), while the EXP3 regret dominates for massive horizons ($T > K^4 / \log^2 K$). Therefore, the final rigorous bound retains both primary terms:
\begin{equation}
    \mathbb{E}[R_T^{\mathrm{B}}] \leq O\left(K \sqrt{T \log KT} + K^{1/3} T^{2/3} (\log K)^{1/3}\right). \qedhere
\end{equation}
\end{proof}

\section{Proofs for Extensions}
\label{app:extensions}

We provide the formal corollaries and proofs for the time-varying subset size extension discussed in Section~\ref{sec:extensions}.

\begin{corollary}[Time-Varying Subset Size: Condorcet Setting]
\label{cor:varying_m_condorcet}
Theorem~\ref{thm:main} and Corollary~\ref{cor:stochastic} hold when the subset size $m_t \in \{2,\ldots,K\}$ varies arbitrarily across rounds.
\end{corollary}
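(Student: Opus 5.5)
The plan is to rerun the proof of Theorem~\ref{thm:main} round by round, with $m$ replaced by $m_t$. The key observation is that every per-round identity used there depends only on that round's subset size. The step that needs real care is the handoff to \texttt{Versatile-DB}, because the effective rescaling factor $\beta_{m_t}$ now changes from round to round.

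\textbf{Pointwise ingredients.} Fix a round $t$ with subset size $m_t$, which is chosen obliviously or announced before the learner acts. Lemma~\ref{lem:feedback} applied with $m=m_t$ gives
\[
\mathbb{E}[o_t\mid x_t,y_t,m_t]=\widehat P_t(x_t,y_t)=\alpha_{m_t}+\beta_{m_t}P_t(x_t,y_t).
\]
Lemma~\ref{lem:valid_rescaled} shows that $\widehat P_t$ is a valid preference matrix with Condorcet winner $a^\star$. Lemma~\ref{lem:gap_rescaling} gives $\widehat\Delta_t(i)=\beta_{m_t}\Delta_t(i)$. The per-round computation in the proof of Lemma~\ref{lem:regret_equiv} then yields
\[
\mathbb{E}[r_t^{\mathrm C}\mid x_t,y_t]=\tfrac12(\Delta_t(x_t)+\Delta_t(y_t))=\hat r_t^{\mathrm{DB}}/\beta_{m_t}.
\]
Since $\beta_{m_t}\ge 1/2$ by Lemma~\ref{lem:beta_bound}, we get $\mathbb{E}[R_T^{\mathrm C}]\le 2\,\mathbb{E}[\widehat R_T^{\mathrm{DB}}]$. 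Note that this is an inequality, not the exact identity of Lemma~\ref{lem:regret_equiv}.

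\textbf{Adversarial part.} Versatile-DB's adversarial guarantee holds for any oblivious sequence of valid preference matrices. The rescaled sequence $\{\widehat P_t\}$ is such a sequence, because the $m_t$ are oblivious. Hence $\mathbb{E}[\widehat R_T^{\mathrm{DB}}]\le 4\sqrt{KT}+1$, and the bound of Theorem~\ref{thm:main}(i) follows with the universal factor $2$.

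\textbf{Stochastic part (main obstacle).} Here $\widehat P_t$ is no longer stationary even when $P_t=P$, because $\beta_{m_t}$ varies. So we cannot simply invoke Versatile-DB's stochastic theorem. Instead we use the self-bounding (corrupted-stochastic) form in Theorem~\ref{thm:main}(ii).

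First we check the self-bounding condition against the fixed gaps $\widehat\Delta(i):=\tfrac12\Delta(i)$. Each round contributes
\[
\tfrac12\beta_{m_t}\bigl(\Delta(x_t)+\Delta(y_t)\bigr)\ \ge\ \tfrac12\cdot\tfrac12\bigl(\Delta(x_t)+\Delta(y_t)\bigr).
\]
Taking expectations over $x_t\sim p_{+1,t}$ and $y_t\sim p_{-1,t}$ gives exactly condition~\eqref{eq:self_bound} with $\widehat\Delta(i)=\Delta(i)/2$. Versatile-DB's self-bounding theorem, which only requires this inequality and not stationarity, then gives
\[
\mathbb{E}[\widehat R_T^{\mathrm{DB}}]\le \sum_{i\ne a^\star}\frac{2(16\log T+48)}{\Delta_i}+4\log T+12+8\sqrt K .
\]
Multiplying by $2$ recovers the Corollary~\ref{cor:stochastic} rates. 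The constants are the worst case $1/\beta^2\le4$, $1/\beta\le2$, which we would state explicitly.

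The delicate point is confirming that Versatile-DB's Theorem 2 indeed applies to any sequence satisfying the self-bounding inequality. If instead it required a fixed matrix, we would fall back on its adversarial-regime-with-self-bounding-constraint version (the Zimmert--Seldin style analysis), which is stated in that generality.
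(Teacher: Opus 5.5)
Your proposal is correct and follows the same argument as the paper: apply Lemma~\ref{lem:feedback}, Lemma~\ref{lem:gap_rescaling} and the per-round computation of Lemma~\ref{lem:regret_equiv} with $m_t$ in place of $m$, then use $\beta_{m_t}\ge 1/2$ to obtain the universal constants. The paper's proof only asserts that the bounds ``carry through with the same constants''; your explicit check that the self-bounding condition~\eqref{eq:self_bound} still holds with the time-invariant gaps $\Delta(i)/2$, even though $\widehat P_t$ is no longer stationary when $m_t$ varies, fills in that implicit step and correctly recovers the constants of Corollary~\ref{cor:stochastic}.
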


\begin{proof}
The proof of Lemma~\ref{lem:regret_equiv} holds for each round independently: at round $t$, the rescaling factor $\beta_{m_t}$ depends only on the current subset size $m_t$. The key observations are:
\begin{enumerate}[label=(\roman*), leftmargin=*, itemsep=1pt]
    \item The feedback $o_t$ remains an unbiased estimate of $\widehat{P}_t(x_t, y_t) = \alpha_{m_t} + \beta_{m_t} P_t(x_t, y_t)$.
    \item The base algorithm operates on the rescaled preferences and is agnostic to the specific value of $\beta_{m_t}$.
    \item The regret equivalence $\mathbb{E}[r_t^{\mathrm{C}}] = \frac{1}{\beta_{m_t}} \mathbb{E}[\hat{r}_t^{\mathrm{DB}}]$ holds pointwise.
\end{enumerate}
Since $\beta_{m_t} \geq 1/2$ for all $m_t \geq 2$, the bounds carry through with the same constants.
\end{proof}

\begin{corollary}[Time-Varying Subset Size: Borda Setting]
\label{cor:varying_m_borda}
Theorems~\ref{thm:stochastic_formal} and~\ref{thm:adversarial_formal} hold when $m_t \in \{2,\ldots,K\}$ varies arbitrarily across rounds, provided the transformed feedback $g_t$ from \textsc{MetaDueling} uses the current $m_t$.
\end{corollary}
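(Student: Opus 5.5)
The plan is to show that every ingredient of the proofs of Theorems~\ref{thm:stochastic_formal} and~\ref{thm:adversarial_formal} uses the subset size only through the per-round feedback transformation, and that each such use is pointwise in $t$. The first step is a per-round version of Lemma~\ref{lem:feedback_stats}. At round $t$, \texttt{MetaDueling} is run with $m_t$. Conditioning on $m_t$ together with $(x_t,y_t)$, Lemma~\ref{lem:feedback} gives $\mathbb{E}[o_t \mid x_t,y_t,m_t]=\alpha_{m_t}+\beta_{m_t}P_t(x_t,y_t)$. Hence $g_t=(o_t-\alpha_{m_t})/\beta_{m_t}$ satisfies $\mathbb{E}[g_t\mid x_t,y_t,m_t]=P_t(x_t,y_t)$. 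Because Lemma~\ref{lem:beta_bound} gives $\beta_{m_t}\ge 1/2$ uniformly, we also get $g_t\in[-1,2]$ and $\mathrm{Var}(g_t\mid x_t,y_t,m_t)\le 1$, with the same constants as before. Here $m_t$ is treated as chosen obliviously, or at least as predictable with respect to the filtration, so conditioning on it is legitimate.

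Next, I go through every concentration step and check that it uses only these three properties (conditional unbiasedness, range $3$, conditional variance at most $1$) for a martingale difference sequence. In the stochastic analysis, the increments $g_{\tau_k}-P(i,j)$ for a fixed pair are no longer identically distributed. They are, however, still martingale differences with the same bounds, so I replace Hoeffding in Lemma~\ref{lem:exploration_guarantee} and Lemma~\ref{lem:borda_conc_uniform} by Azuma--Hoeffding, which gives identical exponents. Freedman's inequality in Lemmas~\ref{lem:no_false_alarm} and~\ref{lem:detection_guarantee} already only requires a bound on the predictable quadratic variation, and $V_n\le n$ still holds.

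With these ingredients in place, Lemmas~\ref{lem:ucb_valid}, \ref{lem:suboptimal_pulls} and the regret decomposition of Theorem~\ref{thm:stochastic_formal} carry over verbatim. In that decomposition, the Borda regret of a pair depends only on $(x_t,y_t)$ and $P$, not on $m_t$. For the adversarial side, Lemma~\ref{lem:iw_unbiased} holds with the round-$t$ $g_t$. The EXP3 analysis of Lemma~\ref{lem:exp3_regret} uses only unbiasedness and second-moment bounds on $\hat s_t$, which scale with $\sup_t 1/\beta_{m_t}^2\le 4$. The pre-switch and no-switch cases of Theorem~\ref{thm:adversarial_formal} rely solely on the detection thresholds, which are unaffected.

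I expect the main obstacle to be ensuring that no step implicitly used i.i.d.\ feedback within a pair, since $g_t$ now takes values in a round-dependent two-point set. I will handle this by re-deriving the baseline estimate $\hat P_{T_0}$ and the peeling argument with Azuma's inequality and checking that the constants are unchanged. A secondary point to verify is that an adversarially varying $m_t$ cannot itself trigger false alarms. This holds because the martingale centering is at $P_t(x_t,y_t)$, which does not depend on $m_t$.
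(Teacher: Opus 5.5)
Your proposal is correct and follows the same route as the paper, whose proof simply observes that $g_t=(o_t-\alpha_{m_t})/\beta_{m_t}$ remains an unbiased estimate of $P_t(x_t,y_t)$ for the current $m_t$ and that the stochastic and adversarial analyses are otherwise unaffected. What you add is a faithful elaboration of that same argument rather than a different one: uniform range and variance bounds from $\beta_{m_t}\ge 1/2$, martingale versions of the concentration steps (harmless, though Hoeffding never required identically distributed increments), and the check that varying $m_t$ cannot trigger false alarms because the detection statistic is centred at $P_t$.
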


\begin{proof}
The transformed feedback $g_t = (o_t - \alpha_{m_t})/\beta_{m_t}$ remains an unbiased estimate of $P_t(x_t, y_t)$ for any $m_t$. The stochastic analysis (exploration, deviation detection) and adversarial analysis (EXP3) are unaffected by varying $m_t$.
\end{proof}
\section{Additional Technical Results}
\label{app:additional}

\subsection{Concentration Inequalities}
\label{app:concentration}

We state the concentration inequalities used throughout the proofs.

\begin{lemma}[Hoeffding's Inequality]
\label{lem:hoeffding}
Let $X_1, \ldots, X_n$ be independent random variables with $X_i \in [a_i, b_i]$. 
Then for $\bar{X} = \frac{1}{n}\sum_{i=1}^n X_i$:
\begin{equation}
\Pr[|\bar{X} - \mathbb{E}[\bar{X}]| \geq t] \leq 2\exp\left(-\frac{2n^2 t^2}{\sum_{i=1}^n (b_i - a_i)^2}\right).
\end{equation}
\end{lemma}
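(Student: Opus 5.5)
This is the standard Hoeffding inequality for independent bounded variables. The plan is to prove it by the Chernoff--Cramér method.

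\emph{Chernoff reduction.} Write $S = \sum_{i=1}^n (X_i - \mathbb{E}[X_i])$, so the event $\{\bar X - \mathbb{E}[\bar X] \geq t\}$ equals $\{S \geq nt\}$. For any $\lambda > 0$, Markov's inequality applied to $e^{\lambda S}$ gives
\[
\Pr[S \geq nt] \leq e^{-\lambda n t}\, \mathbb{E}[e^{\lambda S}].
\]
By independence,
\[
\mathbb{E}[e^{\lambda S}] = \prod_{i=1}^n \mathbb{E}\bigl[e^{\lambda (X_i - \mathbb{E}[X_i])}\bigr],
\]
so it suffices to bound the moment generating function of each centered summand.

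\emph{Hoeffding's lemma.} The key step, and the only nontrivial one, is Hoeffding's lemma: for a random variable $Y$ with $\mathbb{E}[Y] = 0$ and $Y \in [a,b]$,
\[
\mathbb{E}[e^{\lambda Y}] \leq \exp\!\left(\frac{\lambda^2 (b-a)^2}{8}\right).
\]
The proof proceeds as follows.
\begin{enumerate}
\item By convexity of $y \mapsto e^{\lambda y}$ on $[a,b]$,
\[
e^{\lambda Y} \leq \frac{b - Y}{b-a}\, e^{\lambda a} + \frac{Y - a}{b-a}\, e^{\lambda b}.
\]
\item Taking expectations and using $\mathbb{E}[Y]=0$ yields a bound of the form $e^{L(h)}$. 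Here $h = \lambda(b-a)$ and $p = -a/(b-a)$, and
\[
L(h) = -hp + \log(1 - p + p e^{h}).
\]
\item Check that $L(0) = 0$ and $L'(0) = 0$.
\item Show $L''(h) \leq 1/4$ for all $h$. The second derivative has the form $u(1-u)$ for some $u \in [0,1]$, and $u(1-u) \leq 1/4$.
\item Taylor's theorem then gives $L(h) \leq h^2/8$.
\end{enumerate}
I expect the bound on $L''$ to be the main obstacle; the remaining steps are mechanical.

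\emph{Optimization and two-sided bound.} Applying the lemma to each $Y_i = X_i - \mathbb{E}[X_i]$, which lies in an interval of length $b_i - a_i$, gives
\[
\Pr[S \geq nt] \leq \exp\!\left(-\lambda n t + \frac{\lambda^2}{8}\sum_{i=1}^n (b_i-a_i)^2\right).
\]
Choosing $\lambda = 4nt / \sum_i (b_i-a_i)^2$ minimizes the exponent and produces
\[
\exp\!\left(-\frac{2n^2t^2}{\sum_i (b_i-a_i)^2}\right).
\]
Repeating the argument for $-S$, i.e.\ with the variables $-X_i$, which have the same range lengths, bounds the lower tail by the same quantity. A union bound over the two tails gives the factor $2$ and completes the proof.
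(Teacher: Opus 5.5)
The paper gives no proof of this lemma; it is quoted as a standard concentration tool. Your argument is exactly the textbook proof and is correct: the Chernoff--Cramér bound, Hoeffding's lemma via the convexity bound and $L''(h)=u(1-u)\le 1/4$, the choice $\lambda = 4nt/\sum_i (b_i-a_i)^2$, and a union bound over the two tails. The only implicit assumption is $t>0$, which your choice $\lambda>0$ needs and which the statement itself tacitly requires: for $t<0$ the left-hand side equals $1$, while the right-hand side can be smaller.
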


\begin{lemma}[Freedman's Inequality]
\label{lem:freedman}
Let $(M_n)_{n \geq 0}$ be a martingale with $M_0 = 0$ and $|M_n - M_{n-1}| \leq c$ 
almost surely. Let $V_n = \sum_{k=1}^n \mathbb{E}[(M_k - M_{k-1})^2 \mid \mathcal{F}_{k-1}]$ 
be the predictable quadratic variation. Then for any $x, v > 0$:
\begin{equation}
\Pr[\exists n: M_n \geq x \text{ and } V_n \leq v] \leq \exp\left(-\frac{x^2}{2v + 2cx/3}\right).
\end{equation}
\end{lemma}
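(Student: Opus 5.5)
We prove Freedman's inequality with the standard exponential-supermartingale argument combined with a stopping time. Write $X_k := M_k - M_{k-1}$, so that $\mathbb{E}[X_k \mid \mathcal{F}_{k-1}] = 0$ and $X_k \leq c$. Only the upper bound on the increments is actually needed. For $\lambda > 0$, set
\[
\varphi(\lambda) := \frac{e^{\lambda c} - 1 - \lambda c}{c^2}.
\]

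\textbf{Step 1 (one-step bound).} The function $u \mapsto (e^{u} - 1 - u)/u^2$ is nondecreasing on $\mathbb{R}$. Hence for every $x \leq c$ we have $e^{\lambda x} \leq 1 + \lambda x + \varphi(\lambda) x^2$. Taking conditional expectations and using the zero conditional mean gives
\[
\mathbb{E}[e^{\lambda X_k} \mid \mathcal{F}_{k-1}] \leq 1 + \varphi(\lambda)\,\mathbb{E}[X_k^2 \mid \mathcal{F}_{k-1}] \leq \exp\bigl(\varphi(\lambda)\,\mathbb{E}[X_k^2 \mid \mathcal{F}_{k-1}]\bigr).
\]

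\textbf{Step 2 (supermartingale).} Because $V_n$ is predictable, Step 1 shows that $Z_n := \exp(\lambda M_n - \varphi(\lambda) V_n)$ is a nonnegative supermartingale with $Z_0 = 1$.

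\textbf{Step 3 (stopping).} Let $\tau := \inf\{n : M_n \geq x\}$. On the event $\{\exists n : M_n \geq x,\ V_n \leq v\}$ we have $\tau \leq n$. Since $V$ is nondecreasing, this gives $V_\tau \leq V_n \leq v$, and also $M_\tau \geq x$. Therefore, on this event,
\[
Z_\tau \geq \exp\bigl(\lambda x - \varphi(\lambda) v\bigr).
\]
Optional stopping for nonnegative supermartingales gives $\mathbb{E}[Z_{\tau \wedge N}] \leq 1$ for every $N$. Applying Fatou's lemma as $N \to \infty$ yields $\mathbb{E}[Z_\tau \ind(\tau < \infty)] \leq 1$. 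By Markov's inequality, the probability of the event is at most $\exp(-\lambda x + \varphi(\lambda) v)$.

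\textbf{Step 4 (optimization).} Choose $\lambda = \frac{1}{c}\log(1 + cx/v)$. The bound becomes $\exp\bigl(-\frac{v}{c^2} h(cx/v)\bigr)$, where $h(u) := (1+u)\log(1+u) - u$ is Bennett's function.

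\textbf{Step 5 (main obstacle).} It remains to prove the elementary inequality
\[
h(u) \geq \frac{u^2}{2(1 + u/3)} \qquad \text{for } u \geq 0.
\]
Substituting $u = cx/v$ then gives exactly $\exp\bigl(-x^2/(2v + 2cx/3)\bigr)$. I would prove the inequality by setting $g(u) := h(u) - u^2/(2 + 2u/3)$ and noting $g(0) = g'(0) = 0$. The goal is then to show $g'' \geq 0$ on $[0,\infty)$. We have $h''(u) = 1/(1+u)$, and the second derivative of the right-hand side can be computed explicitly; it equals $27/(3+u)^3$. The claim then reduces to the polynomial inequality
\[
(3+u)^3 \geq 27(1+u),
\]
which holds since $(3+u)^3 - 27(1+u) = 9u^2 + u^3 \geq 0$.

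Finally, for the two-sided version invoked in the proof of Lemma~\ref{lem:no_false_alarm}, apply the same argument to $-M_n$, which has increments bounded by $c$ in absolute value. A union bound over the two signs then accounts for the factor $2$.
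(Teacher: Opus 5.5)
Your proof is correct and complete. The paper gives no proof of this lemma: it lists it in the ``Additional Technical Results'' appendix as a standard tool (Freedman's inequality). It invokes it only in two-sided form, with a factor $2$, inside the proofs of Lemmas~\ref{lem:no_false_alarm} and~\ref{lem:detection_guarantee}. Your final paragraph covers that two-sided form.

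The relevant comparison is therefore with the classical argument, and yours is essentially that argument. Stopping the exponential supermartingale $\exp(\lambda M_n - \varphi(\lambda)V_n)$ at the first passage time of $x$ gives the Bennett-type bound $\exp\bigl(-\frac{v}{c^2}h(cx/v)\bigr)$; for $c=1$ this is Freedman's form $\bigl(\frac{v}{v+x}\bigr)^{v+x}e^{x}$. The elementary relaxation $h(u)\ge u^2/(2(1+u/3))$ then turns it into the stated Bernstein form.

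Your computations check out:
\begin{itemize}
\item the choice $\lambda=\frac1c\log(1+cx/v)$ gives exponent exactly $-\frac{v}{c^2}h(cx/v)$;
\item writing $u^2/(2+2u/3)=\frac32\bigl(u-3+\frac{9}{3+u}\bigr)$ confirms its second derivative is $27/(3+u)^3$;
\item $(3+u)^3-27(1+u)=u^3+9u^2\ge 0$.
\end{itemize}

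One fact is asserted without justification: that $u\mapsto(e^u-1-u)/u^2$ is monotone on all of $\mathbb{R}$. It follows in one line from Taylor's formula with integral remainder, $(e^u-1-u)/u^2=\int_0^1(1-s)e^{su}\,ds$, whose integrand is nondecreasing in $u$ for each $s\in[0,1]$.

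Compared with simply citing the result, your route makes the appendix self-contained. It also shows two things the paper's statement does not use: only the one-sided bound $X_k\le c$ is needed, and the intermediate Bennett form is sharper than the stated Bernstein-type bound.
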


\end{document}